\newtheorem{definition}{Definition}
\newtheorem{lemma}{Lemma}
\newtheorem{theorem}{Theorem}
\newtheorem*{theorem*}{Theorem}
\newcommand{\tensor}[1]{\boldsymbol{\mathcal{#1}}}
\newcommand{\mat}[1]{\mathbf{{#1}}}
\newcommand{\CP}[1]{[\![#1]\!]}
\newcommand{\x}{\mathbf{x}}
\newcommand{\T}{\tensor{T}}
\title{
{Learning From Graph-Structured Data: \\ Addressing Design Issues and Exploring Practical Applications in Graph Representation Learning}\\~\\~\\
{\large School of Computer Science \\ 
	McGill University, Montreal \\ 
	Sep, 2024 \\~\\~\\
	A thesis submitted to McGill University in partial fulfillment of the requirements of the degree of \\~\\ Master of Science }\\~\\
}
\author{\textcopyright Chenqing (William) Hua, 2024}
\date{}
\begin{document}
\maketitle

Heartfelt gratitude extends to my beloved family, friends, and mentors, especially to my supervisors Doina Precup and Guy Wolf, who have steadfastly supported me throughout years of research and study abroad. Their supervision over the years of research contributed to my growth as a complete and well-rounded human being.

\chapter*{Abstract}
\label{sec:engAbstract}
\addcontentsline{toc}{section}{\nameref{sec:engAbstract}}

Graphs serve as fundamental descriptors for systems composed of interacting elements, capturing a wide array of data types, from molecular interactions to social networks and knowledge graphs. In this paper, we present an exhaustive review of the latest advancements in graph representation learning and Graph Neural Networks (GNNs). GNNs, tailored to handle graph-structured data, excel in deriving insights and predictions from intricate relational information, making them invaluable for tasks involving such data. Graph representation learning, a pivotal approach in analyzing graph-structured data, facilitates numerous downstream tasks and applications across machine learning, data mining, biomedicine, and healthcare.

Our work delves into the capabilities of GNNs, examining their foundational designs and their application in addressing real-world challenges. We introduce a GNN equipped with an advanced high-order pooling function, adept at capturing complex node interactions within graph-structured data. This pooling function significantly enhances the GNN's efficacy in both node- and graph-level tasks. Additionally, we propose a molecular graph generative model with a GNN as its core framework. This GNN backbone is proficient in learning invariant and equivariant molecular characteristics. Employing these features, the molecular graph generative model is capable of simultaneously learning and generating molecular graphs with atom-bond structures and precise atom positions. Our models undergo thorough experimental evaluations and comparisons with established methods, showcasing their superior performance in addressing diverse real-world challenges with various datasets.

\chapter*{Abrégé}
\label{sec:frAbstract}
\addcontentsline{toc}{section}{\nameref{sec:frAbstract}}
Les graphiques servent de descripteurs fondamentaux pour les systèmes composés d'éléments en interaction, capturant un large éventail de types de données, des interactions moléculaires aux réseaux sociaux et aux graphiques de connaissances. Dans cet article, nous présentons un examen exhaustif des dernières avancées en matière d'apprentissage de la représentation graphique et des réseaux de neurones graphiques (GNN). Les GNN, conçus pour gérer des données structurées sous forme de graphiques, excellent dans l'obtention d'informations et de prédictions à partir d'informations relationnelles complexes, ce qui les rend inestimables pour les tâches impliquant de telles données. L'apprentissage de la représentation graphique, une approche cruciale dans l'analyse des données structurées sous forme de graphiques, facilite de nombreuses tâches et applications en aval dans les domaines de l'apprentissage automatique, de l'exploration de données, de la biomédecine et des soins de santé.

Nos travaux examinent les capacités des GNN, en examinant leurs conceptions fondamentales et leur application pour relever les défis du monde réel. Nous introduisons un GNN équipé d'une fonction avancée de pooling d'ordre élevé, capable de capturer des interactions de nœuds complexes au sein de données structurées sous forme de graphes. Cette fonction de pooling améliore considérablement l'efficacité du GNN dans les tâches au niveau des nœuds et des graphiques. De plus, nous proposons un modèle génératif de graphes moléculaires avec un GNN comme cadre principal. Ce squelette GNN est compétent dans l’apprentissage des caractéristiques moléculaires invariantes et équivariantes. Grâce à ces fonctionnalités, le modèle génératif de graphes moléculaires est capable d'apprendre et de générer simultanément des graphes moléculaires avec des structures de liaisons atomiques et des positions précises des atomes. Nos modèles sont soumis à des évaluations expérimentales approfondies et à des comparaisons avec des méthodes établies, démontrant leurs performances supérieures pour relever divers défis du monde réel avec divers ensembles de données.

\chapter*{Contribution Statement}
For Graph Neural Networks with High-Order Pooling, Chenqing Hua contributed to model proposal, theoretical proof, experimental validation, and manuscript preparation. Guillaume Rabusseau contributed to theoretical proof, manuscript writing, securing computational resources, funding, and supervision. Jian Tang contributed by securing computational resources, providing funding, and offering supervision.

For Graph Neural Networks for Molecule Generation, Chenqing Hua contributed to model proposal, theoretical proof, experimental validation, and manuscript preparation. Sitao Luan and Minkai Xu assisted with manuscript writing. Rex Ying provided model-related insights and discussions. Jie Fu helped secure computational resources, while Stefano Ermon contributed to model discussion. Doina Precup supported the project by securing computational resources, providing funding, and offering supervision.

\chapter*{Acknowledgements}
\label{sec:ded}
\addcontentsline{toc}{section}{\nameref{sec:ded}}
This work is supported by Natural Sciences and Engineering Research Council of Canada (NSERC) Grant, Canadian Institute for Advanced Research (CIFAR) AI Chairs program, and Fonds d’accélération des collaborations en santé (FACS-Acuity) supported by Ministre de l'Économie et de l'Innovation Canada.

Heartfelt gratitude extends to my beloved family, friends, and mentors, especially to Doctor Sitao Luan, who have steadfastly supported me throughout years of research and study abroad. It is through your unwavering encouragement that I have evolved into a fulfilled individual and a nurtured researcher. Thank you, each one of you, for gracing my life and contributing to my growth as a complete and well-rounded human being.

\tableofcontents
\listoffigures %
\addcontentsline{toc}{section}{\listfigurename}
\listoftables
\addcontentsline{toc}{section}{\listtablename}

\clearpage 
\pagenumbering{arabic} % restart page numbers at one, now in arabic style

%% start of mainmatter
\chapter{Introduction}
Graph neural networks (GNNs) generalize traditional neural network architectures for data in the Euclidean domain to data in non-Euclidean domains \cite{kipf2016classification, velivckovic2017attention, luan2021heterophily, luan2022we, hua2022high, luan2023graph}. 
As graphs are very general and flexible data structures and are ubiquitous in the real world, GNNs are now widely used in a variety of domains and applications such as social network analysis \cite{hamilton2017representation, luan2020complete, luan2023we}, recommender systems \cite{ying2018graph}, graph reasoning \cite{zhu2021neural}, and drug discovery \cite{satorras2021n2, hua2022multi, hua2023mudiff, hua2024effective}.  
Indeed, many GNN architectures (e.g., GCN \cite{kipf2016classification}, GAT \cite{velivckovic2017attention}, MPNN \cite{gilmer2017neural}, ACM-GNN \cite{luan2022revisiting}) have been proposed. The essential idea of all these architectures is to iteratively update  node representations by aggregating the information from their neighbors through multiple rounds of neural message passing. The final node representations can be used for downstream tasks such as node classification or link prediction. For graph classification, an additional readout layer is used to combine all the node representations to calculate the entire graph representation.

A recent work, principled neighborhood aggregation (PNA)~\cite{corso2020principal}, aims to design a more flexible aggregation function by combining multiple simple aggregation functions, each of which is associated with a learnable weight. However, the practical capacity of PNA is still limited by simply combining multiple \emph{simple} aggregation functions. A more expressive solution would be to model high-order non-linear interactions when aggregating node features. However, explicitly modeling high-order non-linear interactions among nodes is very expensive, with both the time and memory complexity being exponential in the size of the neighborhood. This raises the question of whether there exists an aggregation function which can model high-order non-linear interactions among nodes while remaining computationally efficient. 

Here, we introduce such an approach based on symmetric tensor decomposition. We design an aggregation function over a set of node representations for GNNs, which is permutation-invariant and is capable of modeling non-linear high-order multiplicative interactions among nodes \cite{hua2022high}. The methods leverages the symmetric CANDECOMP/PARAFAC decomposition~(CP)~\cite{hitchcock1927expression,kolda2009tensor} to design an efficient parameterization of permutation-invariant multilinear maps over a set of node representations. Theoretically, we show that the CP layer can compute any permutation-invariant multilinear polynomial, including the classical sum and mean aggregation functions. We also show that the CP layer is universally strictly more expressive than sum and mean pooling: with probability one, any function computed by a random CP layer cannot be computed using sum and mean pooling. We propose the CP-layer as an expressive mean of performing the aggregation and update functions in GNN. We call the resulting model a tensorized GNN~(tGNN).

Beyond the fundamental designs of GNNs for various classification and regression tasks on graphs, GNNs serve as the foundational model for molecular graph generation tasks \cite{satorras2021n, satorras2021n2, xu2022geodiff, hua2023mudiff, xu2023geometric}.
These generative models based on GNNs are trained on a dataset containing known molecular graph structures and can be subsequently employed to generate unseen molecular structures resembling those present in the training dataset. The key to their success lies in GNNs' ability to learn generalized atom and bond representations, as well as atom-atom interactions, within the message-passing framework. These models can produce either 2D atom-bond molecular graphs or 3D molecular conformations.

While 2D graph structures capture the topology and connectivity of molecules \cite{gilmer2017neural}, 3D geometric structures provide an insight into the spatial arrangements of atoms \cite{schutt2017schnet, xu2022geodiff}. Both types of structural information are crucial for a comprehensive representation of a molecule. 
So, learning 2D and 3D structures together leads to an accurate and complete molecule representation.
Learning 2D and 3D structures together facilitates accurate and complete molecule representation. However, existing generative models for molecules often focus solely on either 2D or 3D molecular data generation \cite{satorras2021n, hoogeboom2022equivariant, vignac2022digress}, limiting their ability to provide a comprehensive representation of molecules.
Recognizing this limitation, we are motivated to propose a novel generative model that jointly generates 2D and 3D molecular data. This approach captures both the topological information from 2D graphs and the spatial atom arrangements from 3D geometry, enabling a more holistic understanding of molecular structures.

To achieve this goal, we introduce a diffusion generative model named MUDiff and a GNN backbone model named MUformer \cite{hua2023mudiff}. MUDiff co-generates the 2D graph structure and 3D geometric structure of a molecule, while MUformer co-learns both molecular representation. The diffusion model introduces continuous and discrete noises to features, including atom features, coordinates, and graph structure, followed by a denoising process that predicts the clean graph structure and estimates original atom features and coordinates.
Through these novel designs, our model can generate and learn a comprehensive molecular representation that captures both 2D and 3D structures, effectively addressing the aforementioned limitations in existing generative models for molecules.

In summary, we begin by laying the groundwork with essential concepts about graph representation learning and GNNs in Chapter~\ref{sec:related.work} and Section~\ref{sec:related.work.gnn}, exploring their designs and inherent limitations. A key drawback identified is the challenge of modeling intricate interactions between nodes in a graph using existing designs (as discussed in Section~\ref{sec:related.work.pooling}). To tackle this limitation, we propose a new GNN design called tensorized-GNN (tGNN) in Section~\ref{sec:gnn.pooling}. We show theoretical results and tGNN architecture in Section~\ref{sec:tgnn.theory}, and further validate tGNN on real-world datasets in Section~\ref{sec:tgnn.experiments}. 
Shifting from the theoretical framework, we delve into the practical realm, focusing on the application of GNNs in real-world scenarios in Chapter~\ref{sec:mol.gen}, particularly in the generation of molecules represented as graph-structured data. 
Here, molecules are represented as graphs, with atoms serving as nodes and bonds as edges. In Section~\ref{sec:graph.transformer}, we introduce a novel GNN design, MUformer, that encodes invariant and equivariant features of molecular systems. In addition, we introduce a novel molecule generative model named MUDiff in Section~\ref{sec:diffusion.process}, which applies MUformer as the backbone network. This comprehensive model not only learns the structures of molecules but also captures their geometry. In Section~\ref{sec:experiments}, we perform rigorous evaluation on real-world datasets and systematically compared MUformer and MUDiff to existing models, providing a practical and tangible contribution to the field. In the end, we conclude our major contributions in Section~\ref{sec:conclusion}, provide deep understandings, and discuss future work in Section~\ref{sec:future.work}.

\chapter{Related Work}
\label{sec:related.work}
In this chapter, we give a comprehensive review of the relevant literature on graph representation learning and graph neural networks (GNNs). 

\section{Graph Representation Learning}
\label{sec:related.work.gnn}
Graph representation learning refers to the process of encoding graph-structured data or relational data into low-dimensional vectors or embeddings. It is a fundamental task that has gained significant attention in recent years, particularly in the field of machine learning and data mining \cite{hamilton2020graph}.

Graphs are universal descriptors of systems with interacting elements, and they can represent various types of data, such as molecular interactions, social networks, and knowledge graphs. Graph representation learning aims to capture the structural and feature information of graphs in a compact and meaningful way, enabling downstream tasks like node classification, link prediction, and anomaly detection \cite{kipf2016classification, velivckovic2017attention, hamilton2017inductive, hamilton2017representation, luan2020complete, hua2022high, hua2023mudiff, hua2022multi}.

The goal of graph representation learning is to construct a set of features or embeddings that represent the structure of the graph and the data associated with it. These embeddings can be categorized into three types: node-wise embeddings, which represent each node in the graph \cite{hamilton2017representation, hamilton2020graph, gilmer2017neural, luan2022revisiting, li2020deepergcn, luan2024heterophilic, luan2024heterophily}; edge-wise embeddings, which represent each edge in the graph \cite{corso2020principal, le2021parameterized, beani2021directional}; and graph-wise embeddings, which represent the graph as a whole \cite{hua2022high, ying2018hierarchical, xu2018powerful, bresson2017residual}.
Overall, graph representation learning plays a crucial role in effectively encoding high-dimensional sparse graph-structured data into low-dimensional dense vectors, enabling various downstream tasks and applications in fields like machine learning, data mining, biomedicine, and healthcare \cite{hamilton2017representation, xu2018powerful, xu2022geodiff, xu2023geometric, hua2022multi, hoogeboom2022equivariant, hua2023mudiff, yu2024fraggen, hua2024effective, hua2024enzymeflow, luan2024heterophily}.

\section{Graph Neural Networks}
Graph Neural Networks (GNNs) are a type of neural network specifically designed to process data in the form of graphs \cite{kipf2016classification, hamilton2017inductive, hamilton2020graph}. They are used for tasks such as graph classification, node classification, and edge prediction \cite{abu2019mixhop, velivckovic2017attention, austin2021structured, anderson2019cormorant, luan2019break, luan2023we, hua2022high, hua2023mudiff}. GNNs have the ability to capture both the structural and relational information in a graph, making them highly effective for tasks involving graph data. GNNs differ from traditional neural networks in that they can handle data in the form of graphs, whereas traditional neural networks are designed to process data in the form of vectors or sequences \cite{he2016deep}.

GNNs apply the predictive power of deep learning to rich data structures that depict objects and their relationships as points connected by lines in a graph. In GNNs, data points are called nodes, which are linked by lines called edges, with elements expressed mathematically so machine learning algorithms can make useful predictions at the level of nodes, edges, or entire graphs \cite{li2020deepergcn, hamilton2017representation, hamilton2020graph, luan2023graph}.

GNNs have been adapted to leverage the structure and properties of graphs. They explore the components needed for building a graph neural network and motivate the design choices behind them. In summary, GNNs are a powerful tool for processing and analyzing graph-structured data, allowing for the extraction of valuable insights and predictions from complex relational information \cite{abu2019mixhop, beani2021directional, corso2020principal, ying2021transformers, ying2018graph}.

\section{Pooling Functions for Graph Neural Networks}
\label{sec:related.work.pooling}
In the design of Graph Neural Networks (GNNs), a key component is an effective pooling function for aggregating features from local neighborhoods to update node representations and for combining these node representations to derive a graph-level representation.

Kipf et al.~\cite{kipf2016classification} successfully define convolutions on graph-structured data by averaging node information in a neighborhood. Xu et al.~\cite{xu2018powerful} prove the incomplete expressivity of mean aggregation to distinguish nodes, and further propose to use sum aggregation to differentiate nodes with similar properties. Corso et al.~\cite{corso2020principal} further generalize this idea and show that mean aggregation can be a particular case of sum aggregation with a linear multiplier, and further propose an architecture with multiple aggregation channels to adaptively learn low-order information. Luan et al.~\cite{luan2020complete} show that the use of aggregation is not sufficient and further propose to utilize GNNs with aggregation and diversification operations simultaneously to learn. 
Most GNNs use low-order aggregation schemes for learning node representations. However, Battiston et al.~\cite{battiston2020networks} show that aggregation should go beyond low-order interactions because high-order terms can more accurately model  real-world complex systems. 

Most GNNs employ low-order aggregation schemes for learning node representations. However,
Morris et al.~\cite{morris2019weisfeiler} consider high-order graph structures into account to build a high-order expressive model that is more powerful than regular message-passing GNNs.
A brand new architecture, Graph Neural Diffusion \cite{chamberlain2021grand}, take graph neural networks as approximations of an underlying partial differential equation, thus it uses more information than just simple low-order pooling information and further addresses depth and over-smoothing issues in graph machine learning. 
Baek et al.~\cite{baek2021accurate} formulate the pooling problem as a multiset encoding problem with auxiliary information about the graph structure, and propose an attention-based pooling layer that captures the interaction between nodes according to their structural dependencies. Wang et al.~\cite{wang2020second} apply second-order statistic methods because the use of second-order statistics takes advantage of the Riemannian geometry of the space of symmetric positive definite matrices.

In GNNs, these pooling functions operate in tandem with convolutional layers designed for graph data. Their primary function is to distill higher-level features from the graph structure and generalize these features. This process enhances the robustness of the network and its ability to recognize and interpret complex patterns and structures within the graph \cite{hua2022high, luan2020complete, xu2018powerful, wang2020second, abu2019mixhop, kipf2016classification, velivckovic2017attention, beani2021directional}.

\section{Graph Neural Networks for Molecule Generation}

Graph Neural Networks (GNNs) have also been increasingly utilized for molecular generation tasks, encompassing property prediction, docking, optimization, and generation \cite{xu2022geodiff, xu2023geometric, hua2022high, vignac2022digress}. Key challenges in this domain include ensuring the validity, diversity, and property adherence of the generated molecules.

Typically, a variational autoencoder (VAE) framework is employed, wherein a GNN encoder maps a molecule's graph to a latent space, and a GNN decoder reconstructs the graph. This approach allows for sampling and manipulation of the latent space to create novel molecules \cite{bongini2021molecular, lim2018molecular, jin2018junction, liu2018constrained, zhang2024deep, zhang2024ecloudgen}. Jin et al. \cite{jin2018junction} enhance molecule generation by decomposing molecules into trees of substructures and using separate GNNs for encoding and decoding these trees and graphs, improving validity, diversity, and accuracy. Liu et al.~\cite{liu2018constrained} advance this technique by introducing a conditional framework where the latent space is informed by property vectors, facilitating the generation of molecules with specific characteristics.

Other than using GNNs as the backbone model for VAE framework, they can be adopted for generative diffusion models for molecule generation \cite{ho2020denoising, kingma2021variational, austin2021structured}. Xu et al.~\cite{xu2022geodiff} leverage diffusion models to produce molecules with minimal conformation energy, while Vignac et al.~\cite{vignac2022digress} use a graph transformer in their diffusion model to refine both atom features and molecular structures. Moreover, recent studies have explored the integration of equivariant GNNs within diffusion models for molecule generation. For example, Hoogeboom et al.~\cite{hoogeboom2022equivariant} introduce a diffusion model with an equivariant GNN, enabling the model to work jointly on atom features and coordinates. Zhang et al.~\cite{zhang2023equivariant} introduce an autoregressive flow model for generating atom and bond types, as well as 3D coordinates, using a local spherical coordinate system for relative positioning. Peng et al.~\cite{peng2023moldiff} address atom-bond inconsistency in 3D molecule generation using a diffusion model that ensures simultaneous and consistent generation of atoms and bonds. Zhang et al.~\cite{zhang2022molecule} focus on structure-based drug design, generating both 2D and 3D molecular graphs to enhance molecular representation.
\chapter{Graph Neural Networks with \\ High-Order Pooling}
\label{sec:gnn.pooling}
In this chapter, we examine the limitations of the pooling functions employed by Graph Neural Networks (GNNs) in graph-level tasks, where all node features are consolidated into a single comprehensive feature to represent the entire graph. Conventional GNNs commonly utilize basic pooling functions such as sum, average, or max to aggregate messages within local neighborhoods for updating node representation or pooling node representations across the entire graph to compute the graph representation. While these linear operations are straightforward and efficient, they fall short in capturing high-order non-linear interactions among nodes. Our proposed solution introduces a highly expressive GNN pooling function that leverages tensor decomposition to model intricate high-order non-linear node interactions.

\section{Preliminaries}
We introduce notations that are particularly used for this chapter. We use bold font letters for vectors (e.g., $\mathbf{v}$), capital letters (e.g., $\mathbf{M},\tensor{T}$) for matrices and tensors respectively, and regular letters for nodes (e.g., $v$). Let  ${G}=(V,{E})$ be a graph, where $V$ is the node set and $E$ is the edge set with self-loop.
We use ${N}(v)$ to denote the neighborhood set of node $v$, i.e., ${N}(v)=\{u: e_{vu} \in {E}\}$. A node feature is a vector $\mathbf{x} \in \mathbb{R}^F$ defined on ${V}$, where $\mathbf{x}_v$ is defined on the node $v$. We use $\otimes$ to denote the Kronecker product, $\circ$ to denote the outer product, and $\odot$ to denote the Hadamard product, i.e., component-wise product, between vectors, matrices, and tensors. For any integer $k$, we use the notation $[k]=\{1,\cdots,k\}$.
\subsection{Tensors}
We introduce basic notions of tensor algebra, more details can be found in \cite{kolda2009tensor}.
A $k$-th order tensor $\tensor{T}\in \mathbb{R}^{N_1\times N_2 \times ... \times N_k}$ can simply be
seen as a multidimensional array.
The mode-$i$ fibers of $\tensor{T}$ are the vectors obtained by fixing all indices except the $i$-th one: $\tensor{T}_{n_1,n_2,...,n_{i-1},:,n_{i+1},...,n_k}\in \mathbb{R}^{N_i}$. 
The $i$-th mode matricization of a tensor is the matrix having its mode-$i$ fibers as columns and is denoted by $\tensor{T}_{(i)}$, e.g., $\tensor{T}_{(1)} \in \mathbb{R}^{N_1\times N_2\cdots N_k}$.
We use $\tensor{T}{\times}_i\mathbf{v}\in \mathbb{R}^{N_1\times \cdots \times N_{i-1} \times N_{i+1} \times \cdots \times N_k}$ to denote the mode-$i$ product between a tensor $\tensor{T}\in \mathbb{R}^{N_1\times \cdots \times N_k}$ and a vector $\mathbf{v}\in\mathbb{R}^{N_i}$, which is defined by 
$(\tensor{T}{\times}_i\mathbf{v})_{n_1,...,n_{i-1},n_{i+1},...,n_k}= \sum_{n_i=1}^{N_i}\tensor{T}_{n_1,...,n_k}\mathbf{v}_{n_i}.$ 
The following useful identity relates the mode-$i$ product with the Kronecker product: 
\begin{equation}
\label{eq:tenvec.product.and.kron}
    \T\times_1 \mat{v}_1 \times_2 \cdots \times_{k-1} \mat{v}_{k-1} = \tensor{T}_{(k)} (\mat{v}_{k-1}\otimes\cdots\otimes \mat{v}_1).
\end{equation}

\subsection{CANDECOMP/PARAFAC Decomposition}
\label{sec:cp_decomp}
We refer to $\mathbf{C}$ANDECOMP/$\mathbf{P}$ARAFAC decomposition of a tensor as CP decomposition \cite{kiers2000towards,hitchcock1927expression}. A Rank $R$ CP decomposition factorizes a $k-$th order tensor $\tensor{T}\in \mathbb{R}^{N_1\times...\times N_k}$ into the sum of $R$ rank one tensors as $\tensor{T}=\sum_{r=1}^{R} \mathbf{v}_{1r} \circ \mathbf{v}_{2r} \circ \dots \circ \mathbf{v}_{kr}$, where $\circ$ denotes the vector outer-product and $\mathbf{v}_{1r}\in \mathbb{R}^{N_1}, \mathbf{v}_{2r}\in \mathbb{R}^{N_2},...,\mathbf{v}_{kr}\in \mathbb{R}^{N_k}$ for every $r={1,2,...,R}$.

The decomposition vectors, $\mathbf{v}_{:r}$ for $r=1,...,R$, are equal in length, thus can be naturally gathered into  factor matrices $\mathbf{M}_1=[\mathbf{v}_{11},...,\mathbf{v}_{1R}]\in\mathbb{R}^{N_1\times R},...,\mathbf{M}_k = [\mathbf{v}_{k1},...,\mathbf{v}_{kR}]\in\mathbb{R}^{N_k\times R}$. Using the factor matrices, we denote the CP decomposition of  $\tensor{T}$  as 
$$\tensor{T}=\sum_{r=1}^{R} \mathbf{v}_{1r} \circ \mathbf{v}_{2r} \circ \dots \circ \mathbf{v}_{kr}=\CP{\mathbf{M}_1,\mathbf{M}_2,...,\mathbf{M}_k}.$$

The $k$-th order tensor $\tensor{T}$ is \emph{cubical} if all its modes have the same size, i.e., $ N_1=N_2=...=N_k:=N$.
A tensor $\tensor{T}$ is symmetric if it is cubical and is invariant under permutation of its indices:
$$
\tensor{T}_{{n_{\phi(1)}},...,{n_{\phi(k)}}}= \tensor{T}_{{n_1},...,{n_k}}, \   n_1,\cdots,n_k\in[N].
$$
for any permutation $\phi:[k]\to[k]$.
\begin{figure}
\centering
{
\includegraphics[width=0.5\textwidth]{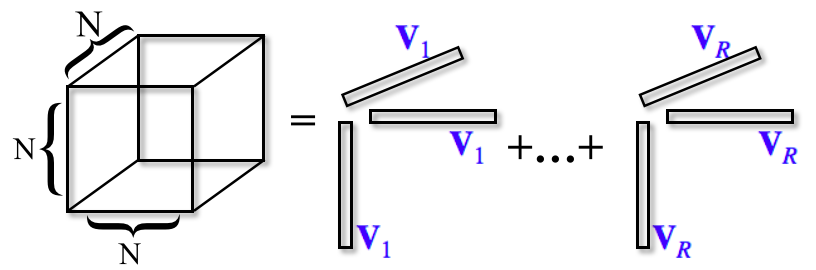}}
{%
  \caption{Example of a rank $R$ symmetric CP decomposition of a symmetric $3$-order tensor $\tensor{T}\in \mathbb{R}^{N\times N \times N}$ such that $\tensor{T}=\Sigma_{r=1}^R \mathbf{v}_r\circ \mathbf{v}_r\circ \mathbf{v}_r$.} 
  \label{fig:CPdecomp}
}
\end{figure}
A rank $R$ symmetric CP decomposition of a symmetric tensor $\tensor{T}$ is a decomposition of the form 
$\tensor{T} =\CP{\mathbf{M},\cdots,\mathbf{M}}$ with $\mathbf{M}\in\mathbb{R}^{N\times R}$. It is well known that any symmetric tensor admits a symmetric CP decomposition~\cite{comon2008symmetric}, we illustrate a rank $R$ symmetric CP decomposition in Fig.~\ref{fig:CPdecomp}.

We say that a tensor $\tensor{T}$ is partially symmetric if it is symmetric in a subset of its modes
\cite{kolda2015numerical}. 
For example, a $3$-rd order tensor $\tensor{T}\in \mathbb{R}^{N_1\times N_1 \times N_3}$ is partially symmetric w.r.t. modes 1 and 2 if it has symmetric frontal slices; i.e., $\tensor{T}_{:,:,k}$ is a symmetric matrix for all $k\in[N_3]$. We prove the fact that any partially symmetric tensor admits a partially symmetric CP decomposition in Lemma~\ref{lem:partial.CP.exists}, e.g., if $\tensor{T}\in \mathbb{R}^{N_1\times N_1 \times N_3}$ is partially symmetric w.r.t.  modes 1 and 2, there exist $\mathbf{M}\in\mathbb{R}^{N_1\times R}$ and $\mathbf{W}\in\mathbb{R}^{N_3\times R}$ such that $\tensor{T}=\CP{\mathbf{M},\mathbf{M},\mathbf{W}}$.

\begin{lemma}\label{lem:partial.CP.exists}
Any partially symmetric tensor admits a partially symmetric CP decomposition.
\end{lemma}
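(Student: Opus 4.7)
The plan is to reduce the claim to the classical fact cited in the excerpt that every (fully) symmetric tensor admits a symmetric CP decomposition \cite{comon2008symmetric}, by slicing $\tensor{T}$ along the modes in which it is \emph{not} symmetric and assembling the resulting symmetric CP decompositions.

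First I would treat the illustrative 3rd-order case, where $\tensor{T}\in\mathbb{R}^{N_1\times N_1\times N_3}$ is partially symmetric w.r.t.\ modes 1 and 2. For each $k\in[N_3]$ the frontal slice $\mathbf{S}_k:=\tensor{T}_{:,:,k}$ is an $N_1\times N_1$ symmetric matrix, so the spectral theorem (or any symmetric rank decomposition) gives $\mathbf{S}_k=\sum_{r=1}^{R_k}\lambda_r^{(k)}\,\mathbf{v}_r^{(k)}\circ\mathbf{v}_r^{(k)}$ with $\lambda_r^{(k)}\in\mathbb{R}$ and $\mathbf{v}_r^{(k)}\in\mathbb{R}^{N_1}$. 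Writing $\tensor{T}=\sum_{k=1}^{N_3}\mathbf{S}_k\circ\mathbf{e}_k$ and expanding,
\begin{equation*}
\tensor{T}
=\sum_{k=1}^{N_3}\sum_{r=1}^{R_k}\mathbf{v}_r^{(k)}\circ\mathbf{v}_r^{(k)}\circ\bigl(\lambda_r^{(k)}\mathbf{e}_k\bigr),
\end{equation*}
which, after collecting the $\mathbf{v}_r^{(k)}$ into a matrix $\mathbf{M}\in\mathbb{R}^{N_1\times R}$ and the $\lambda_r^{(k)}\mathbf{e}_k$ into $\mathbf{W}\in\mathbb{R}^{N_3\times R}$ with $R=\sum_k R_k$, is exactly a partially symmetric decomposition $\tensor{T}=\CP{\mathbf{M},\mathbf{M},\mathbf{W}}$.

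Next I would extend this argument to the general statement. Let $\tensor{T}\in\mathbb{R}^{N_1\times\cdots\times N_k}$ be partially symmetric with respect to a subset $S\subseteq[k]$ of modes (so the sizes $N_j$ agree for $j\in S$, equal to some common $N$). For every fixed index tuple $(n_j)_{j\notin S}$, the slice of $\tensor{T}$ obtained by fixing those indices is a symmetric $|S|$-th order tensor on $\mathbb{R}^N$, which by \cite{comon2008symmetric} admits a symmetric CP decomposition $\sum_r \mathbf{u}_r\circ\cdots\circ\mathbf{u}_r$. Padding each such rank-one term with the standard basis vectors $\mathbf{e}_{n_j}$ (scaled to absorb any sign/scalar) along the modes $j\notin S$ and summing over all slices yields a CP decomposition of $\tensor{T}$ in which the factor matrices along the modes in $S$ all coincide, as required.

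The only real obstacle is bookkeeping: one must check that the slice-by-slice symmetric decompositions glue into a genuine CP expansion with a single shared factor matrix on every mode of $S$, and verify that negative scalars arising from the symmetric decompositions (since over $\mathbb{R}$ the symmetric rank-one terms may carry signs) can always be absorbed into the factor(s) associated with the non-symmetric modes. Both are essentially notational, so once the indexing is set up carefully the conclusion follows immediately; no deeper tensor-algebraic tool beyond the cited symmetric CP existence result is needed.
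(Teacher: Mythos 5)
Your proof follows essentially the same route as the paper's: slice the tensor along the non-symmetric mode(s), apply the symmetric CP existence result of \cite{comon2008symmetric} to each symmetric slice, concatenate the resulting factor matrices into a single shared factor for the symmetric modes, and place (scaled) standard basis vectors in the factor matrix of the non-symmetric mode. If anything, your version is slightly more careful on one point: by carrying the scalars $\lambda_r^{(k)}$ into the third factor $\mathbf{W}$ you explicitly handle the fact that over $\mathbb{R}$ a symmetric matrix with negative eigenvalues cannot be written as $\sum_r \mathbf{v}_r\circ\mathbf{v}_r$ without signs, whereas the paper's $0/1$ indicator matrix $\mat{\Delta}$ leaves those signs to be absorbed implicitly into the slice decompositions.
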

\begin{proof}
We show the results for 3-rd order tensors that are partially symmetric w.r.t. their two first modes. The proof can be straightforwardly extended to tensors of arbitrary order that are partially symmetric w.r.t. any subset of modes. 

Let $\T\in\mathbb{R}^{m\times m \times n }$ be partially symmetric w.r.t. modes $1$ and $2$. We have that $\T_{:,:,i}$ is a symmetric tensor for each $i\in [n]$. By Lemma 4.2 in~\cite{comon2008symmetric}, each tensor $\T_{:,:,i}$ admits a symmetric CP decomposition:
$$\T_{:,:,i} = \CP{\mat{A}^{(i)} ,\mat{A}^{(i)}},\ \ i\in[n]$$
where $\mat{A}^{(i)}\in\mathbb{R}^{m\times R_i}$ and $R_i$ is the symmetric CP rank of $\T_{:,:,i}$. 

By defining $R= \sum_{i=1}^n R_i$ and $\mat{A} = [\mat{A}^{(1)}\ \mat{A}^{(2)}\ \cdots\ \mat{A}^{(n)}]\in\mathbb{R}^{m\times R}$, one can easily check that $\T$ admits the partially symmetric CP decomposition $\T = \CP{\mat{A},\mat{A},\mat{\Delta}}$ where $\mat{\Delta}\in\mathbb{R}^{m\times R}$ is defined by
$$
\mat{\Delta}_{i,r} = \begin{cases}
1&\text{ if } R_1 + \cdots + R_{i-1} < r \leq R_{1} + \cdots + R_{i}\\
0&\text{ otherwise.}
\end{cases}
$$
\end{proof}

\subsection{Graph Neural Networks and Pooling Functions}
Given a graph ${G}=({V},{E})$, a graph neural network always aggregates information in a neighborhood to give node-level representations. During each message-passing iteration, the embedding $\mathbf{h}_v$ corresponding to node $v\in V$ is generated by aggregating features from ${N}(v)$~\cite{hamilton2020graph}.
Formally, at the $l$-th layer of a graph neural network,
\begin{equation}
\begin{aligned}
\label{eq:message.and.update}
&\mathbf{m}^{(l)}_{{N}(v)}= \mbox{AGGREGATE}^{(l)}(\{\mathbf{h}^{(l-1)}_u,\forall u\in N(v)\}), \mathbf{h}_v^{(l)} = \mbox{UPDATE}^{(l)}(\mathbf{h}^{(l-1)}_v, \mathbf{m}^{(l)}_{{N}(v)}),
\end{aligned}
\end{equation}
where $\mbox{AGGREGATE}^{(l)}(\cdot)$ and $\mbox{UPDATE}^{(l)}(\cdot)$ are differentiable functions, the former being permutation-invariant. In words, $\mbox{AGGREGATE}^{(l)}(\cdot)$ first aggregates information from ${N}(v)$, then $\mbox{UPDATE}^{(l)}(\cdot)$ combines the aggregated message and previous node embedding $\mathbf{h}^{(l-1)}_v$ to give a new embedding.

\section{Tensorized Graph Neural Network}
\label{sec:tgnn.theory}
In this section, we introduce the CP-layer and tensorized GNNs~(tGNN). For convenience, we let $\{\mathbf{x}_1,\mathbf{x}_2,...,\mathbf{x}_k\}$ denote features of a node $v$ and its $1$-hop neighbors $N(v)$ such that $|\{v\}\cup N(v)|=k$.

\subsection{Motivation and Method}\label{sec:heterophily_analysis}
\label{sec:time_complexity}

\begin{figure*}[ht!]
\centering
{
\includegraphics[width=0.9\textwidth]{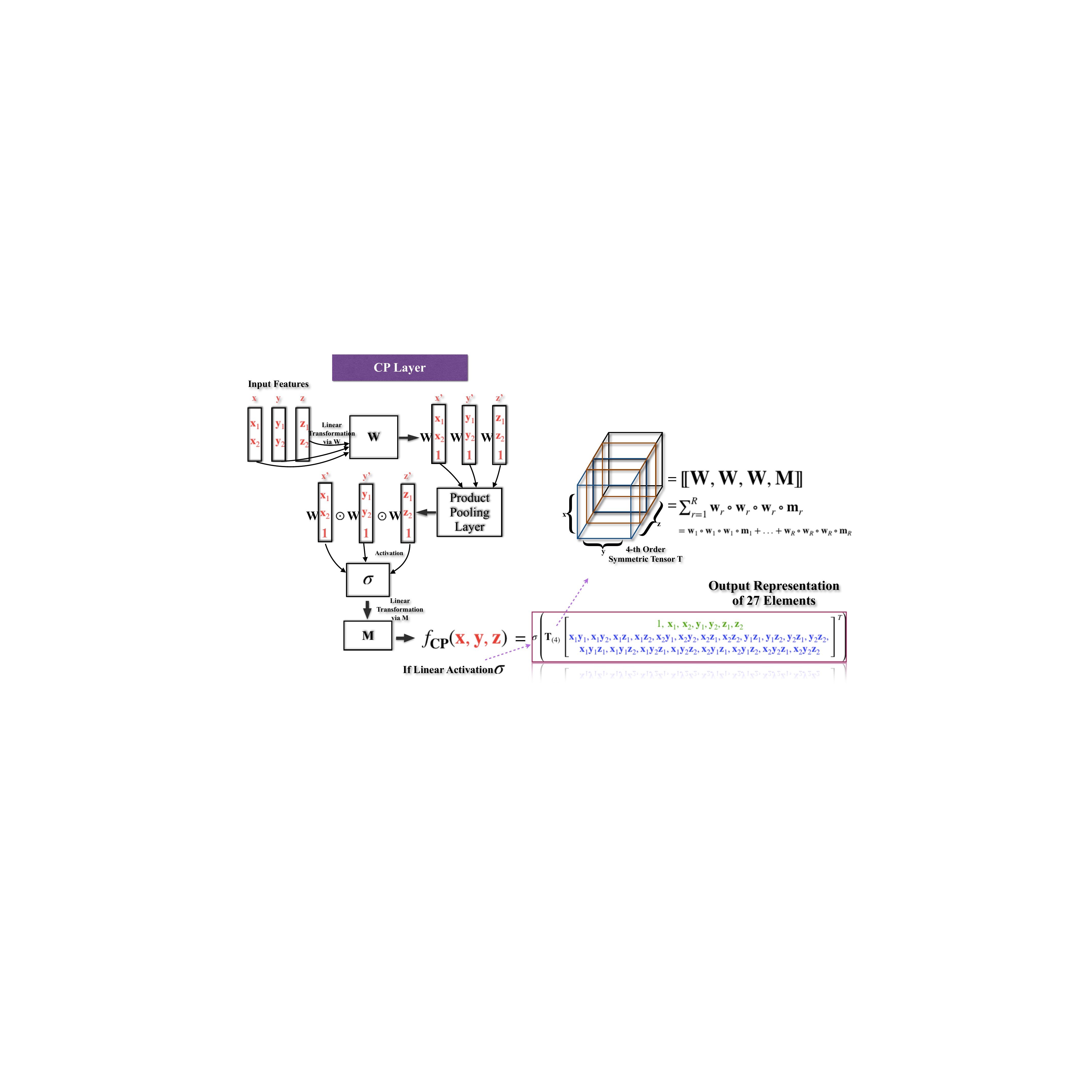}}
{%
  \caption{
  (Left) Sum pooling followed by a FC layer: the output takes individual components of the input into account. 
  (Right) The CP layer can be interpreted as a combination of product pooling with linear layers~(with weight matrices $\mat{W}$ and $\mat{M}$) and non-linearities. 
  The weight matrices of a CP layer corresponds to a partially symmetric CP decomposition of a weight tensor $\tensor{T} = \CP{\mat{W},\mat{W},\mat{W},\mat{M}}$. It  shows that the output of a CP layer takes high-order multiplicative interactions of the inputs' components into account~(in contrast with sum pooling that only considers 1st order terms).
  }%
  \label{fig:tGNNarchi}
}
\end{figure*}

We leverage the symmetric CP decomposition to design an efficient parameterization of permutation-invariant multilinear maps for aggregation operations in  graph neural networks, Tensorized Graph Neural Network (tGNN), resulting in a more expressive high-order node interaction scheme. We visualize the CP pooling layer and  compare it with sum pooling in Fig.~\ref{fig:tGNNarchi}.

Let $\tensor{T}\in \mathbb{R}^{N\times N \times \cdots \times N \times M}$ of order $k+1$ be a  tensor which is partially symmetric w.r.t. its first $k$ modes.  We can parameterize $\tensor{T}$ using a rank $R$ partially symmetric CP decomposition~(see Section~\ref{sec:cp_decomp}):
$
\tensor{T} = \CP{{\mathbf{W},\cdots,\mathbf{W}},\mathbf{M}}
$
where $\mathbf{W}\in\mathbb{R}^{N\times R}$ and $\mathbf{M}\in\mathbb{R}^{M\times R}$. Such a tensor naturally defines a map from $(\mathbb{R}^{N})^k$ to $\mathbb{R}^{M}$ using contractions over the first $k$ modes: 
\begin{equation}
\label{eq:partial.sym.CP.times.vecs}
f(\x_1,\cdots,\x_k) = \tensor{T} \times_1 \x_1 \times_2 \cdots \times_k \x_k = \CP{\underbrace{\mathbf{W},\cdots,\mathbf{W}}_{k \text{ times}},\mathbf{M}} \times_1 \x_1 \times_2 \cdots \times_k \x_k.
\end{equation}
This map satisfies two very important properties for GNNs: it is \emph{permutation-invariant}~(due to the partial symmetry of $\T$) and \emph{its number of parameters is independent of $k$}~(due to the partially symmetric CP parameterization). Thus, using only two parameter matrices of fixed size, the map in Eq.~\ref{eq:partial.sym.CP.times.vecs} can be applied to sets of $N$-dimensional vectors of arbitrary cardinality. In particular, we will show that it can be leveraged to replace both the AGGREGATE and UPDATE functions in GNNs.

There are several way to interpret the map in Eq.~\ref{eq:partial.sym.CP.times.vecs}. First, from Eq.~\ref{eq:tenvec.product.and.kron} we have
\begin{align*}
f(\x_1,\cdots,\x_k) &= \tensor{T} \times_1 \x_1 \times_2 \cdots \times_k \x_k = \tensor{T}_{(k+1)} (\x_k \otimes \x_{k-1} \otimes \cdots \otimes \x_1),
\end{align*}
where $\tensor{T}_{(k+1)} \in\mathbb{R}^{M \times N^k}$ is the mode-$(k+1)$ matricization of $\T$. This shows that each element of the output 
$f(\x_1,\cdots,\x_k)$ is a linear combinations of terms of the form $(\x_1)_{i_1}(\x_2)_{i_2}\cdots(\x_k)_{i_k}$~($k$-th order multiplicative interactions between the components of the vectors $\x_1,\cdots,\x_k$). That is, $f$ is a multivariate polynomial map of order $k$ involving only $k$-th order interactions. By using homogeneous coordinates, i.e., appending an entry equal to one to each of the input tensors $\x_i$, the map $f$ becomes a more general polynomial map taking into account all multiplicative interactions between the $\x_i$ \emph{up to} the $k$-th order:
\begin{align*}
f(\x_1,\cdots,\x_k) &= \tensor{T} \times_1 \left[\x_k\atop 1\right] \times_2 \cdots \times_k \left[\x_1\atop 1\right] = \tensor{T}_{(k+1)} \left(\left[\x_k\atop 1\right] \otimes  \cdots \otimes \left[\x_1\atop 1\right]\right)
\end{align*}
where $\T$ is now of size $(N+1)\times\cdots \times (N+1) \times M$ and can still be parameterized using the partially symmetric CP decomposition $\T=\CP{\mathbf{W}, \cdots, \mathbf{W}, \mathbf{M}}$ with $\mathbf{W}\in\mathbb{R}^{(N+1)\times R}$ and $\mathbf{M}\in\mathbb{R}^{M\times R}$. With this parameterization, one can check that 
$$
f(\x_1,\cdots,\x_k) = \mathbf{M}\left(\!\left(\mathbf{W}^\top\begin{bmatrix}\mathbf{x}_1 \\ 1 \end{bmatrix}\right)\odot  \cdots \odot\left(\mathbf{W}^\top\begin{bmatrix}\mathbf{x}_{k} \\ 1 \end{bmatrix}\right)\!\right)
$$
where $\odot$ denotes the component-wise product between vectors. The map $f$ can thus be seen as the composition of a linear layer with weight $\mathbf{W}$, a multiplicative pooling layer, and another linear map $\mathbf{M}$. Since it is permutation-invariant and can be applied to any number of input vectors, this map can be used as both the aggregation, update, and readout functions of a GNN using non-linear activation functions, which leads us to introduce the novel \emph{CP layer} for GNN. 

\begin{definition}(CP layer)
Given parameter matrices $\mathbf{M}\in \mathbb{R}^{d\times R}$ and $\mathbf{W}\in \mathbb{R}^{F+1\times R}$ and activation functions $\sigma, \sigma'$, a rank $R$ CP layer computes the function  $f_{CP}: \cup_{i\geq 1} (\mathbb{R}^{F})^i \to \mathbb{R}^{d}$ defined by 
\begin{equation*}
f_{\mathbf{CP}}(\x_1,\cdots,\x_k) = \sigma'\left(\mathbf{M}\left(\!\ \sigma\left(\mathbf{W}^\top\begin{bmatrix}\mathbf{x}_1 \\ 1 \end{bmatrix}\odot  \cdots \odot\mathbf{W}^\top\begin{bmatrix}\mathbf{x}_{k} \\ 1 \end{bmatrix}\right)\!\right)\!\right)\ \label{eq:cp.layer.def}
\end{equation*}
for any $k\geq 1$ and any $\x_1,\cdots,\x_k\in\mathbb{R}^{F}$.
\end{definition}
The rank $R$ of a CP layer is a hyperparameter controlling the trade-off between parameter efficiency and expressiveness.
Note that the CP layer computes AGGREGATE and UPDATE~(see Eq.~\ref{eq:message.and.update}) in one step. One can think of the component-wise product of the $\mathbf{W}^\top [\mathbf{x}_{i}\ 1]^\top$  as AGGREGATE, while the UPDATE corresponds to the two non-linear activation functions and linear transformation $\mathbf{M}$. We observed in our experiments that the non-linearity $\sigma$ is crucial to avoid numerical instabilities during training caused by repeated
products of $\mathbf{W}$.
In practice, we use $\textit{Tanh}$ for $\sigma$ and $\textit{ReLU}$ for $\sigma'$. Fig.~\ref{fig:tGNNarchi} graphically explains the computational process of a CP layer, comparing it with a classical sum pooling operation. We intuitively see in this figure that the CP layer is able to capture high order multiplicative interactions that are not modeled by simple aggregation functions such as the sum or the mean. In the next section, we theoretically formalize this intuition. 

\paragraph{Complexity Analysis}
The sum, mean and max poolings result in $O(F_{in}(N+F_{out}))$ time complexity, while CP pooling is $O(R(NF_{in}+F_{out)})$, where $N$ denotes the number of nodes, $F_{in}$ is the input feature dimension, $F_{out}$ is out feature dimension, and $R$ is the CP decomposition rank. In Sec.~\ref{sec:ablation}, we experimentally compare tGNN and CP pooling with various GNNs and pooling techniques to show the model efficiency with limited computation and time budgets.

\subsection{Theoretical Analysis}

We now  analyze the expressive power of CP layers. In order to characterize the set of functions that can be computed by CP layers, we first introduce the notion of multilinear polynomial. A multilinear polynomial is a special kind of vector-valued multivariate polynomial in which no variables appears with a power of 2 or higher. More formally, we have the following definition.  

\begin{definition}
A function $g:\mathbb{R}^k \to \mathbb{R}$ is called a \emph{univariate multilinear polynomial} if it can be written as
$$g(a_1,a_2,\cdots, a_k) = \sum_{i_1=0}^1\cdots\sum_{i_k=0}^1 \tau_{i_1i_2\cdots i_k}a_{1}^{i_1}a_{2}^{i_2} \cdots a_{k}^{i_k}$$
where each $\tau_{i_1i_2\cdots i_k} \in\mathbb{R}$.
The \emph{degree} of a univariate multilinear polynomial is the maximum number of distinct variables occurring in any of the non-zero monomials $\tau_{i_1i_2\cdots i_n}a_{1}^{i_1}a_{2}^{i_2} \cdots a_{k}^{i_k}$.

A function $f: (\mathbb{R}^d)^k \to \mathbb{R}^p$ is called a \emph{ multilinear polynomial map} if there exist univariate multilinear polynomials $g_{i,j_1,\cdots, j_n}$ for $j_1,\cdots,j_k\in [d]$ and $i\in [p]$ such that
$$f(\x_1,\cdots,\x_k)_i = \sum_{j_1,\cdots,j_k = 1}^d g_{i,j_1,\cdots, j_k}((\x_1)_{j_1},\cdots,(\x_k)_{j_k})$$
for all $\x_1,\cdots,\x_k\in \mathbb{R}^d$ and all $i\in [p]$.
The degree of $f$ is the highest degree of the multilinear polynomials $g_{i,j_1,\cdots, j_k}$.
\end{definition}

\begin{figure}[ht!]
  \begin{center}
    \includegraphics[width=0.5\textwidth]{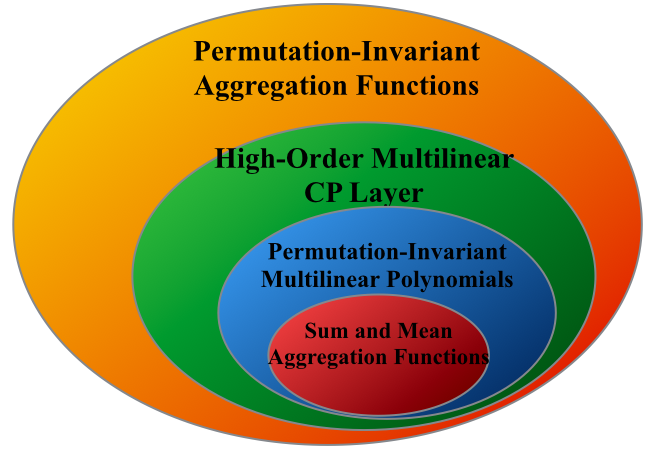}
  \end{center}
  \caption{Visualization of relations of  \{permutation-invariant function space\} $\supseteq$ \{CP function space\} $\supseteq$ \{permutation-invariant multilinear polynomial space\} $\supseteq$ \{sum and mean aggregation functions\}.}%
  \label{fig:aggregationSpace}
\end{figure}

The following theorem shows that CP layers can compute any permutation-invariant multilinear polynomial map. We also visually represent  the expressive power of CP layers in Fig.~\ref{fig:aggregationSpace}, showing that the class of functions computed by CP layer subsumes multilinear polynomials (including sum and mean aggregation functions). 

\begin{theorem}\label{thm:CP.layer.eq.multilin.polynom}
The function computed by a CP layer~(Eq.~\eqref{eq:cp.layer.def}) is permutation-invariant. In addition, any \emph{permutation-invariant}  multilinear polynomial $f:(\mathbb{R}^F)^k\to\mathbb{R}^d$ can be computed  by a CP layer~(with a linear activation function). 
\end{theorem}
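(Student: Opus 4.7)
The plan is to handle the two claims separately. For \emph{permutation-invariance}, I would argue directly from the definition in Eq.~\eqref{eq:cp.layer.def}. The only point where the inputs $\x_1,\dots,\x_k$ interact is through the Hadamard product $\big(\mathbf{W}^\top[\x_1;1]\big)\odot\cdots\odot\big(\mathbf{W}^\top[\x_k;1]\big)$, and the Hadamard product is commutative and associative. Hence reordering the $\x_i$'s leaves this vector unchanged, and the subsequent linear map $\mathbf{M}$ and activations $\sigma,\sigma'$ do not reintroduce any dependence on order. One line suffices.

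For the second claim, the strategy is to reduce any permutation-invariant multilinear polynomial map to a partially symmetric tensor contraction, and then invoke Lemma~\ref{lem:partial.CP.exists}. First I would rewrite $f(\x_1,\dots,\x_k)$ using homogeneous coordinates $\tilde{\x}_i = [\x_i;1] \in \mathbb{R}^{F+1}$: because each variable occurs with power $0$ or $1$ in the defining univariate multilinear polynomials, appending a $1$ allows every mixed term $\tau_{i_1\cdots i_k}(\x_1)_{j_1}^{i_1}\cdots(\x_k)_{j_k}^{i_k}$ to be absorbed into a single multilinear (in the standard sense) form on $\tilde{\x}_1,\dots,\tilde{\x}_k$. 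Consequently, there exists a tensor $\T\in\mathbb{R}^{(F+1)\times\cdots\times(F+1)\times d}$ of order $k+1$ such that
\[
f(\x_1,\dots,\x_k) \;=\; \T \times_1 \tilde{\x}_1 \times_2 \cdots \times_k \tilde{\x}_k .
\]

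Next I would use the hypothesis that $f$ is permutation-invariant to pass to a partially symmetric $\T$. In general the tensor representing a multilinear map in homogeneous coordinates need not be symmetric, but one can always symmetrize over the first $k$ modes: set $\T' = \tfrac{1}{k!}\sum_{\phi\in S_k}\phi\cdot\T$, where $\phi$ acts by permuting the first $k$ indices. Since $f$ is invariant under permutations of its arguments, $\T'$ induces exactly the same multilinear map as $\T$, and by construction $\T'$ is partially symmetric w.r.t.\ its first $k$ modes. By Lemma~\ref{lem:partial.CP.exists}, $\T'$ admits a partially symmetric CP decomposition $\T' = \CP{\mathbf{W},\dots,\mathbf{W},\mathbf{M}}$ with $\mathbf{W}\in\mathbb{R}^{(F+1)\times R}$ and $\mathbf{M}\in\mathbb{R}^{d\times R}$.

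Finally I would plug this decomposition into the contraction and apply the identity in Eq.~\eqref{eq:tenvec.product.and.kron} together with the standard fact that contracting a rank-one tensor $\mathbf{w}\circ\cdots\circ\mathbf{w}\circ\mathbf{m}$ against vectors $\tilde{\x}_1,\dots,\tilde{\x}_k$ produces $(\mathbf{w}^\top\tilde{\x}_1)\cdots(\mathbf{w}^\top\tilde{\x}_k)\,\mathbf{m}$. Summing over the $R$ rank-one components collapses to
\[
f(\x_1,\dots,\x_k) \;=\; \mathbf{M}\!\left(\bigl(\mathbf{W}^\top \tilde{\x}_1\bigr)\odot\cdots\odot\bigl(\mathbf{W}^\top \tilde{\x}_k\bigr)\right),
\]
which is precisely $f_{\mathrm{CP}}$ with $\sigma,\sigma'$ taken to be the identity.

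The main obstacle I anticipate is the symmetrization step: one has to be careful that the tensor representation of a multilinear polynomial in \emph{homogeneous} coordinates is well-defined enough for the average $\T'$ to still represent $f$. Once this is spelled out (essentially by checking that $\tilde{\x}\mapsto f$ is multilinear in the $\tilde{\x}_i$ and that permutations act equivariantly on both sides), everything else is either a direct application of Lemma~\ref{lem:partial.CP.exists} or the algebraic identities already recorded in the preliminaries.
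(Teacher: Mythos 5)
Your proposal is correct and follows essentially the same route as the paper: rewrite $f$ as a contraction of a $(k+1)$-order tensor against homogeneous coordinates, pass to a partially symmetric tensor, apply Lemma~\ref{lem:partial.CP.exists}, and expand the rank-one terms into the Hadamard-product form of a CP layer with identity activations. The only divergence is that you obtain partial symmetry by explicitly averaging over $S_k$, whereas the paper asserts it directly from the permutation-invariance of $f$ (which is legitimate because the multilinear monomials in homogeneous coordinates are linearly independent, so the tensor representation is unique); your symmetrization is a harmless, slightly more careful variant of the same step.
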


\begin{proof}
The fact that the function computed by a CP layer is permutation-invariant directly follows from the definition of the CP layer and the fact that the Hadamard product is commutative.

We now show the second part of the theorem. 
Let $f:(\mathbb{R}^F)^k\to \mathbb{R}^d$ be a permutation-invariant multilinear polynomial map. Then, there exists permutation-invariant univariate multilinear polynomials  $g_{i,j_1,\cdots, j_k}$ for $j_1,\cdots,j_k\in [F]$ and $i\in [d]$ such that
$$f(\x_1,\cdots,\x_k)_i = \sum_{j_1=1}^d \cdots\sum_{j_k = 1}^d g_{i,j_1,\cdots, j_k}((\x_1)_{j_1},\cdots,(\x_k)_{j_k}).$$
Moreover, by definition, each such polynomial satisfies
$$
g_{i,j_1,\cdots, j_k}(a_1,a_2,\cdots, a_k) = \sum_{i_1=0}^1\cdots\sum_{i_k=0}^1 \tau^{(i,j_1,\cdots,j_k)}_{i_1,\cdots, i_k}a_{1}^{i_1}a_{2}^{i_2} \cdots a_{k}^{i_k}$$
for some scalars $\tau^{(i,j_1,\cdots,j_k)}_{i_1,\cdots, i_k}$. Putting those two expressions together, we get
$$
f(\x_1,\cdots,\x_k)_i =\sum_{j_1=1}^d \cdots\sum_{j_k = 1}^d \sum_{i_1=0}^1\cdots\sum_{i_k=0}^1 \tau^{(i,j_1,\cdots,j_k)}_{i_1,\cdots, i_k} (\x_1)_{j_1}^{i_1}\cdots (\x_k)_{j_k}^{i_k}.
$$
We can then group together the coefficients $\tau^{(i,j_1,\cdots,j_k)}_{i_1,\cdots, i_k}$ corresponding to the same monomials $(\x_1)_{j_1}^{i_1}\cdots (\x_k)_{j_k}^{i_k}$. E.g., the coefficients $\tau^{(i,j_1,\cdots,j_k)}_{1,0,\cdots, 0}$ all correspond to the same monomial $(\x_1)_{j_1}$ for any values of $j_2,j_3,\cdots,j_k$. Similarly, all the coefficients $\tau^{(i,j_1,\cdots,j_k)}_{0,1,\cdots, 1}$ all correspond to the same monomial $(\x_2)_{j_2}\cdots (\x_k)_{j_k}$ for any values of $j_1$. By grouping and summing together these coefficients into a tensor $\T\in\mathbb{R}^{(F+1)\times \cdots \times (F+1) \times d}$, where $\T_{j_1,\cdots,j_k,i}$ is equal to the sum  $\displaystyle\sum_{j_\ell\in [F] : j_\ell = F+1 \land i_\ell=0 }\tau^{(i,j_1,\cdots,j_k)}_{i_1,\cdots, i_k}$, we obtain
$$
f(\x_1,\cdots,\x_k)_i 
=
\sum_{j_1=1}^{F+1}\sum_{j_2=1}^{F+1}\cdots\sum_{j_k=1}^{F+1} \T_{j_1,j_2,\cdots,j_k,i} 
\left(\left[\x_{1} \atop 1\right]\right)_{j_1}\left(\left[\x_{2} \atop 1\right]\right)_{j_2}\cdots\left(\left[\x_{k} \atop 1\right]\right)_{j_k}.
$$
Since $f$ is permutation-invariant, the tensor $\T$ is partially symmetric w.r.t.  its first $k$ modes. Thus, by Lemma~\ref{lem:partial.CP.exists}, there exist matrices $\mat{A} \in \mathbb{R}^{R\times F}$ and $\mat{B}\in\mathbb{R}^{d\times R}$ such that
\begin{align*}
f(\x_1,\cdots,\x_k) 
&= 
\T\times_1\left[\x_{1} \atop 1\right]\times_2 \cdots\times_k \left[\x_{k} \atop 1\right]\\
&= \CP{\mat{A},\cdots,\mat{A},\mat{B}} \times_1\left[\x_{1} \atop 1\right]\times_2 \cdots\times_k \left[\x_{k} \atop 1\right] \\
&= \mat{B}\ \sigma\left(\mathbf{A}^\top\begin{bmatrix}\mathbf{x}_1 \\ 1 \end{bmatrix}\odot  \cdots \odot\mathbf{A}^\top\begin{bmatrix}\mathbf{x}_{k} \\ 1 \end{bmatrix}\right)
\end{align*}
with $\sigma$ being the identity function, which concludes the proof.

\end{proof}

Note also that in Fig.~\ref{fig:aggregationSpace} the CP layer is strictly more expressive than permutation invariant multilinear polynomials due to the non-linear activation functions in Def.~\ref{eq:cp.layer.def}.
Since the classical sum and mean pooling aggregation functions are degree 1 multilinear polynomial maps, it readily follows from the previous theorem that the CP layer is more expressive than these standards aggregation functions. However, it is natural to ask how many parameters a CP layer needs to compute sums and means. We answer this question in the following theorem. 

\begin{theorem}\label{thm:CP.layer.vs.sum.pooling}
A CP layer of rank $F\cdot k$  can compute the sum and mean aggregation functions over $k$ vectors in $\mathbb{R}^F$.

Consequently, for any $k\geq 1$ and any GNN $\mathcal{N}$ using mean or sum pooling with feature and embedding dimensions bounded by $F$, there exists a GNN with CP layers of rank $F\cdot k$ computing the same function as $\mathcal{N}$ over all graphs of uniform degree $k$.
\end{theorem}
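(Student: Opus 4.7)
The plan is to give an explicit rank-$Fk$ construction of $\mathbf{W}\in\mathbb{R}^{(F+1)\times Fk}$ and $\mathbf{M}\in\mathbb{R}^{F\times Fk}$ that, with the identity activation allowed by Theorem~\ref{thm:CP.layer.eq.multilin.polynom}, computes $\sum_{i=1}^k\mathbf{x}_i$; the mean is then obtained by scaling $\mathbf{M}$ by $1/k$. Since the $j$-th entry of the sum depends only on the $j$-th coordinates of the inputs, I partition the $Fk$ rank-$1$ components into $F$ blocks of $k$, with block $j$ dedicated to producing $(\sum_i\mathbf{x}_i)_j$. Within block $j$, I set the $r$-th column of $\mathbf{W}$ to $\mathbf{w}^{(j)}_r = e_j + c_r\,e_{F+1}$ for scalars $c_1,\dots,c_k$ still to be chosen, so that $(\mathbf{w}^{(j)}_r)^\top \left[\mathbf{x}_i\atop 1\right]=(\mathbf{x}_i)_j+c_r$. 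The Hadamard product across the $k$ inputs then yields $\prod_{i=1}^k((\mathbf{x}_i)_j+c_r)=\sum_{n=0}^k c_r^{k-n}\,e_n$, written in the elementary symmetric polynomials $e_n$ of the scalars $(\mathbf{x}_1)_j,\dots,(\mathbf{x}_k)_j$.

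The core algebraic step is to pick the $c_r$ and coefficients $\lambda_r$ so that $\sum_{r=1}^k\lambda_r\prod_i((\mathbf{x}_i)_j+c_r)=\sum_i(\mathbf{x}_i)_j=e_1$. Matching coefficients in the elementary symmetric basis reduces this to the moment system $\sum_{r=1}^k\lambda_r c_r^{k-n}=\delta_{n,1}$ for $n=0,1,\dots,k$, i.e.\ a $(k+1)\times k$ linear system in $(\lambda_r)$. The Vandermonde-type coefficient matrix $[c_r^{k-n}]_{n,r}$ has a one-dimensional left kernel, spanned by the coefficient vector of $\prod_r(t-c_r)=t^k-(\sum_r c_r)t^{k-1}+\cdots$; hence the target $\delta_{n,1}$, whose only nonzero entry sits in the $t^{k-1}$ slot, lies in the column space \emph{if and only if} $\sum_r c_r=0$. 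I would therefore pick any $k$ distinct reals summing to zero, e.g.\ $c_r=r-(k+1)/2$, and recover unique $\lambda_1,\dots,\lambda_k$ that work simultaneously for every coordinate $j$.

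Assembling $\mathbf{M}$ by placing $\lambda_r$ in row $j$ and the column indexed by $(j,r)$, with zeros elsewhere, and taking $\sigma=\sigma'=\mathrm{id}$, the CP layer then outputs exactly $\sum_i\mathbf{x}_i$; the mean follows by replacing $\mathbf{M}$ with $\mathbf{M}/k$. For the GNN-level corollary, I substitute this CP layer for every sum/mean AGGREGATE in $\mathcal{N}$: the uniform-degree-$k$ assumption guarantees each aggregation receives exactly $k$ inputs, matching the construction, and since the CP layer reproduces the original aggregate exactly, every downstream UPDATE, next-layer AGGREGATE, and readout receives identical inputs, so the two networks compute the same end-to-end function.

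The principal obstacle is the rank-$Fk$ bound rather than the naive rank-$F(k+1)$ bound. Without any constraint on $(c_r)$, the $(k+1)\times k$ moment system is generically unsolvable, so one would otherwise need $k+1$ evaluation nodes per coordinate. The gain comes from identifying the one-dimensional cokernel of the Vandermonde-type matrix and noticing that the only obstruction to $\delta_{n,1}$ lying in its range is the $t^{k-1}$ coefficient of $\prod_r(t-c_r)$, which vanishes exactly under the free constraint $\sum_r c_r=0$; this costless constraint saves one rank per coordinate, which is the whole gap between the two bounds.
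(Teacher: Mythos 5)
Your proof is correct, and it takes a genuinely different route from the paper's. The paper builds the target tensor $\T$ as a sum of $Fk$ \emph{non-symmetric} rank-one terms $\tilde{\mat{e}}_{F+1}\circ\cdots\circ\tilde{\mat{e}}_j\circ\cdots\circ\tilde{\mat{e}}_{F+1}\circ\mat{e}_j$ and then invokes Corollary~4.3 of \cite{zhang2016comon} (a symmetric tensor of order $k$ and CP rank at most $k$ has symmetric CP rank at most $k$) to convert each slice $\tensor{A}^{(j)}$ into a symmetric decomposition of rank $k$, yielding the partially symmetric rank bound $Fk$ non-constructively. You instead produce the symmetric decomposition directly: columns $\mat{e}_j + c_r\tilde{\mat{e}}_{F+1}$ turn the product pooling into $\prod_i((\x_i)_j+c_r)$, and the interpolation system $\sum_r\lambda_r c_r^{k-n}=\delta_{n,1}$ is solvable with only $k$ nodes precisely because the one-dimensional cokernel of the $(k+1)\times k$ Vandermonde matrix is annihilated by the free constraint $\sum_r c_r=0$ --- your verification of this (distinct nodes, left kernel spanned by the coefficients of $\prod_r(t-c_r)$, obstruction equal to $-\sum_r c_r$) is sound, and the choice $c_r=r-(k+1)/2$ works for all $k\geq 1$. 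What each approach buys: the paper's argument is shorter but outsources the essential rank bound to an external result on symmetric versus ordinary CP rank; yours is self-contained, gives explicit closed-form weights for $\mathbf{W}$ and $\mathbf{M}$, and makes visible exactly where the gap between rank $F(k+1)$ and rank $Fk$ comes from. The GNN-level consequence is handled the same way in both (exact substitution of the aggregator on degree-$k$ graphs, scaling by $1/k$ for the mean).
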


\begin{proof}
We will show that the function $f:\left(\mathbb{R}^F\right)^k\to\mathbb{R}^d$ defined by
$$f(\x_1,\cdots,\x_k) = \sum_{i=1}^k \alpha \x_i$$
where $\alpha\in\mathbb{R}$, can be computed by a CP layer of rank $F k$, which will show the first part of the theorem. The second part of the theorem directly follows by letting $\alpha=1$ for the sum aggregation and $\alpha=1/k$ for the mean aggregation.

Let $\tilde{\mat{e}}_1,\cdots,\tilde{\mat{e}}_{F+1}$ be the canonical basis of $\mathbb{R}^{F+1}$, and let ${\mat{e}_1},\cdots,{\mat{e}_{F}}$ be the canonical basis of $\mathbb{R}^{F}$. We define the tensor $\T\in\mathbb{R}^{(F+1)\times\cdots\times(F+1)\times d}$  by
$$
\T = \sum_{j=1}^d \sum_{\ell=1}^k 
\alpha 
\underbrace{\tilde{\mat{e}}_{F+1}\circ\cdots\circ \tilde{\mat{e}}_{F+1}}_{\ell-1\text{ times}}
\circ\ \tilde{\mat{e}}_j \circ
\underbrace{\tilde{\mat{e}}_{F+1}\circ\cdots\circ \tilde{\mat{e}}_{F+1}}_{k-\ell\text{ times}} \circ\ \mat{e}_j
$$
where $\circ$ denotes the outer~(or tensor) product between vectors. We start by showing that contracting $k$ vectors in homogeneous coordinates along the first $k$ modes of $\T$ results in the sum of those vectors weighted by $\alpha$. For any vectors $\x_1,\cdots,\x_k\in\mathbb{R}^F$, we have
\begin{align*}
&\T  \times_1\left[\x_{1} \atop 1\right]\times_2 \cdots \times_k \left[\x_{k} \atop 1\right] \\
&=\left(\sum_{j=1}^d \sum_{\ell=1}^k 
\alpha 
\underbrace{\tilde{\mat{e}}_{F+1}\circ\cdots\circ \tilde{\mat{e}}_{F+1}}_{\ell-1\text{ times}}
\circ\ \tilde{\mat{e}}_j \circ
\underbrace{\tilde{\mat{e}}_{F+1}\circ\cdots\circ \tilde{\mat{e}}_{F+1}}_{k-\ell\text{ times}} \circ\ \mat{e}_j\right)
\times_1\left[\x_{1} \atop 1\right]\times_2 \cdots\times_k \left[\x_{k} \atop 1\right] \\
&=
\sum_{j=1}^d \sum_{\ell=1}^k \alpha 
\left\langle \tilde{\mat{e}}_{F+1}, \left[\x_{1} \atop 1\right] \right\rangle \cdots \left\langle\tilde{\mat{e}}_{F+1},\left[\x_{\ell-1} \atop 1\right]\right\rangle
 \left\langle\tilde{\mat{e}}_j,\left[\x_{\ell} \atop 1\right]\right\rangle
\left\langle\tilde{\mat{e}}_{F+1},\left[\x_{\ell+1} \atop 1\right]\right\rangle\cdots \left\langle\tilde{\mat{e}}_{F+1},\left[\x_{k} \atop 1\right]\right\rangle  \mat{e}_j \\
&=
\sum_{j=1}^d \sum_{\ell=1}^k (\alpha \cdot
1 \cdots 1 \cdot \langle \x_\ell,\mat{e}_j \rangle \cdot 1 \cdots 1 )\ \mat{e}_j \\
&= \sum_{j=1}^d \left\langle  \sum_{\ell=1}^k \alpha \x_\ell , \mat{e}_j \right\rangle\mat{e}_j \\
&=  \sum_{\ell=1}^k \alpha \x_\ell = f(\x_1,\cdots,\x_k).
\end{align*}

To show that $f$ can be computed by a CP layer of rank $Fk$, it thus remains to show that $\T$ admits a partially symmetric CP decomposition of rank $Fk$. This follows from Corollary~4.3 in~\cite{zhang2016comon}, which states that any $k$th order tensor $\tensor{A}$ of CP rank less than $k$ has symmetric CP rank bounded by $k$. Indeed, consider the tensors 
$$\tensor{A}^{(j)} = \sum_{\ell=1}^k 
\alpha 
\underbrace{\tilde{\mat{e}}_{F+1}\circ\cdots\circ \tilde{\mat{e}}_{F+1}}_{\ell-1\text{ times}}
\circ\ \tilde{\mat{e}}_j \circ
\underbrace{\tilde{\mat{e}}_{F+1}\circ\cdots\circ \tilde{\mat{e}}_{F+1}}_{k-\ell\text{ times}}
$$
for $j\in [d]$. They are all $k$-th order tensor of CP rank bounded by $k$. Thus, by Corollary~4.3 in~\cite{zhang2016comon}, they all admit a symmetric CP decomposition of rank at most $k$, from which it directly follows that the tensor $\T = \sum_{j=1}^F \tensor{A}^{(j)} \circ\ \mat{e}_j $ admits a partially symmetric CP decomposition of rank at most $Fk$.
\end{proof}

It follows from this theorem  that a CP layer with $2F^2k$ can compute sum and mean aggregation over sets of $k$ vectors.
While Theorem~\ref{thm:CP.layer.vs.sum.pooling} shows that any function using sum and mean aggregation can be computed by a CP layer, the next theorem shows that the converse is not true, i.e., the CP layer is a strictly more expressive aggregator than using the mean or sum. 

\begin{theorem}\label{thm:sum.pooling.le.CP.layer}
With probability one, any function $f_{CP}:(\mathbb{R}^{F})^k \to \mathbb{R}^{d}$ computed by a CP layer~(of any rank) whose parameters are drawn randomly~(from a  distribution which is continuous w.r.t. the Lebesgue measure) cannot be computed by a function of the form
$$g_{\textit{sum}} : \x_1,\cdots,\x_k \mapsto  \sigma'\left(\mat{M}\left( \sigma\left(\sum_{i=1}^k \mat{W}^\top \x_i \right)\right)\right)$$
where $\mat{M}\in\mathbb{R}^{d\times R},\ \mat{W}\in\mathbb{R}^{F\times R}$ and $\sigma$,  $\sigma'$ are component-wise activation function.
\end{theorem}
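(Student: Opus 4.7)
The plan is to exploit the structural asymmetry between the two function classes. Because $g_{\textit{sum}}$ depends on its arguments only through the affine sum $\sum_i \mathbf{W}^\top\x_i$, it is invariant under any ``mass transfer'' $(\x_1,\x_2)\mapsto(\x_1+\mathbf{v},\x_2-\mathbf{v})$, whereas a CP layer with generic parameters encodes multiplicative cross-interactions that break this symmetry. Assuming for contradiction $f_{CP}=g_{\textit{sum}}$, I will extract a necessary invariance equation on the CP parameters $(\mathbf{M},\mathbf{W})$ and then show this equation carves out a Lebesgue-null subset of parameter space.

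To make the invariance concrete, write $\mathbf{w}_0\in\mathbb{R}^R$ for the last row of $\mathbf{W}$ (the ``bias'' row selected by the appended $1$) and $\mathbf{W}_{\mathbf{x}}\in\mathbb{R}^{F\times R}$ for the top block, so that $\mathbf{W}^\top[\mathbf{v};1]=\mathbf{W}_{\mathbf{x}}^\top\mathbf{v}+\mathbf{w}_0$. If $f_{CP}=g_{\textit{sum}}$ then the map $h(\mathbf{v}):=f_{CP}(\mathbf{v},-\mathbf{v},0,\ldots,0)$ must be constant in $\mathbf{v}$, because the first two arguments sum to zero. Using the Hadamard identity $(\mathbf{a}+\mathbf{b})\odot(\mathbf{a}-\mathbf{b})=\mathbf{a}^{\odot 2}-\mathbf{b}^{\odot 2}$ inside the CP layer gives
\begin{equation*}
h(\mathbf{v}) \;=\; \sigma'\!\left(\mathbf{M}\,\sigma\!\left(\mathbf{w}_0^{\odot k} \;-\; \mathbf{w}_0^{\odot (k-2)}\odot (\mathbf{W}_{\mathbf{x}}^\top\mathbf{v})^{\odot 2}\right)\right),
\end{equation*}
a nonlinear image of a purely quadratic perturbation of the baseline $\mathbf{w}_0^{\odot k}$. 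Since the gradient of the inner argument vanishes at $\mathbf{v}=0$, the chain rule collapses the Hessian of $h$ at the origin to
\begin{equation*}
\partial^2_{v_a v_b} h(\mathbf{0}) \;=\; -2\,J_{\sigma'}\!\bigl(\mathbf{M}\sigma(\mathbf{w}_0^{\odot k})\bigr)\,\mathbf{M}\,J_\sigma(\mathbf{w}_0^{\odot k})\,\bigl[\mathbf{w}_0^{\odot (k-2)}\odot(\mathbf{W}_{\mathbf{x}}^\top\mathbf{e}_a)\odot(\mathbf{W}_{\mathbf{x}}^\top\mathbf{e}_b)\bigr],
\end{equation*}
where $J_\sigma,J_{\sigma'}$ are the diagonal Jacobians of the component-wise activations on the smooth open neighborhoods that a generic $(\mathbf{M},\mathbf{W})$ hits.

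To close the argument, I would observe that the simultaneous vanishing of all $F^2$ Hessian entries above is a real-analytic system of equations in the entries of $(\mathbf{M},\mathbf{W})$; as such, it either holds on the whole parameter space or defines an analytic subvariety of Lebesgue measure zero. I rule out the former by exhibiting one explicit witness---e.g.\ $\mathbf{W}_{\mathbf{x}}$ with linearly independent rows, $\mathbf{w}_0$ with all-nonzero entries, and $\mathbf{M}$ chosen so that $J_{\sigma'}\mathbf{M}J_\sigma$ has full row rank on a smooth branch of the activations---producing at least one nonzero Hessian entry; the underlying point is that the $F^2$ vectors $\mathbf{w}_0^{\odot(k-2)}\odot(\mathbf{W}_{\mathbf{x}}^\top\mathbf{e}_a)\odot(\mathbf{W}_{\mathbf{x}}^\top\mathbf{e}_b)$ span a non-trivial subspace of $\mathbb{R}^R$ and a generic linear map does not annihilate it. The main obstacle I expect is justifying the chain-rule step for activations that are only piecewise smooth (e.g.\ ReLU): one must restrict to the full-measure open set of parameters on which $\sigma$ is smooth with non-vanishing derivative at $\mathbf{w}_0^{\odot k}$ and $\sigma'$ is smooth with non-vanishing derivative at $\mathbf{M}\sigma(\mathbf{w}_0^{\odot k})$, after which the argument proceeds on each smooth branch and the ``bad'' set is covered by a countable union of proper analytic subvarieties, all of Lebesgue measure zero.
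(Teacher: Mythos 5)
Your proposal is correct in its essentials but takes a genuinely different, and considerably more rigorous, route than the paper's. The paper's proof expands the (activation-free) CP layer as a multilinear polynomial whose coefficients are a.s.\ nonzero and distinct, and then simply asserts that such a map cannot be realized by $g_{\textit{sum}}$ because sum pooling ``prevents modeling independent higher order multiplicative interactions, despite the non-linear activation functions'' --- the separating property is never formalized or verified. Your argument supplies exactly that missing ingredient: since $g_{\textit{sum}}$ factors through $\sum_i\x_i$, it is constant on every fiber $\{\sum_i\x_i=\mathbf{c}\}$, and you exhibit a concrete second-order test --- the Hessian at the origin of $\mathbf{v}\mapsto f_{CP}(\mathbf{v},-\mathbf{v},\mathbf{0},\ldots,\mathbf{0})$ --- that a generic CP layer fails; together with the identity theorem for real-analytic functions and a single explicit witness, this closes the argument. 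What your route buys is an actual proof of the key step the paper only gestures at; what it costs is the (harmless) restriction to $k\ge 2$ and some care with the activations.

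On that last point there is one genuine wrinkle: your reduction to ``the full-measure open set on which $\sigma'$ is smooth with non-vanishing derivative at $\mathbf{M}\sigma(\mathbf{w}_0^{\odot k})$'' is not automatic for the activations the paper actually uses. With $\sigma'=\mathrm{ReLU}$ the pre-activation $\mathbf{M}\sigma(\mathbf{w}_0^{\odot k})$ lands on the dead branch with positive probability, so $J_{\sigma'}=\mathbf{0}$ and all Hessian entries at the origin vanish even though $h$ need not be constant; the exceptional set for your test is then not Lebesgue-null. Two standard repairs: (i) take the CP activations to be linear in the statement being proved (as the paper implicitly does), so that $h$ is an explicit quadratic and constancy is equivalent to the polynomial conditions $\mathbf{M}\bigl[\mathbf{w}_0^{\odot(k-2)}\odot(\mathbf{W}_{\mathbf{x}}^\top\mathbf{e}_a)\odot(\mathbf{W}_{\mathbf{x}}^\top\mathbf{e}_b)\bigr]=\mathbf{0}$, which fail off a measure-zero set; or (ii) probe the invariance along $\mathbf{v}\mapsto(\mathbf{c}+\mathbf{v},\mathbf{c}-\mathbf{v},\mathbf{c},\ldots,\mathbf{c})$, which still lies in a single fiber of $\sum_i\x_i$, and choose $\mathbf{c}$ so that the relevant pre-activations sit on an active smooth branch. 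With either repair your argument is complete, and it is strictly stronger than what the paper writes down.
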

\begin{proof}
Let $f_{CP}:(\mathbb{R}^{F})^k \to \mathbb{R}^{d}$ be the function computed by a random CP layer. I.e.,
$$
f_{CP}(\x_1,\cdots,\x_k) = \mathbf{A}\left(\!\left(\mathbf{B}^\top\begin{bmatrix}\mathbf{x}_1 \\ 1 \end{bmatrix}\right)\odot  \cdots \odot\left(\mathbf{B}^\top\begin{bmatrix}\mathbf{x}_{k} \\ 1 \end{bmatrix}\right)\!\right)
$$
where the entries of the matrices $\mathbf{B}\in \mathbb{R}^{(N+1)\times R}$ and $\mathbf{A}\in \mathbb{R}^{M\times R}$ are identically and independently drawn from a distribution which is continuous w.r.t. the Lebesgue measure. It is well know that since the entries of the two parameter matrices are drawn from a continuous distribution, all the entries of $\mat{A}$ and $\mat{B}$ are non-zero and distinct with probability one. It follows that all the entries of the vector $f_{CP}(\x_1,\cdots,\x_k)$ are  $k$-th order multilinear polynomials of the entries of the input vectors $\x_1,\cdots,\x_k$, which, with probability one, have non trivial high-order interactions that cannot be computed by a linear map. In particular, with probability one, the map $f_{CP}$ cannot be computed by any map of the form $g_{\textit{sum}} : \x_1,\cdots,\x_k \mapsto  \sigma'\left(\mat{M}\left( \sigma\left(\sum_{i=1}^k \mat{W}^\top \x_i \right)\right)\right)$ since the sum pooling aggregates the inputs in a way that prevents modeling independent higher order multiplicative interactions, despite the non-linear activation functions. 
\end{proof}

This  theorem not only shows that there exist functions computed by CP layers that cannot be computed using sum pooling, but that this is the case for \emph{almost all} functions that can be computed by~(even rank-one) CP layers.

From an expressive power viewpoint, we showed that a CP layer is able to leverage both low and high-order multiplicative interactions. However, from a learning perspective, it is clear that the CP layer has a natural inductive bias towards capturing high-order interactions. We are not enforcing any sparsity in the tensor parameterizing the polynomial, thus the number and magnitude
of weights corresponding to high-order terms will dominate the result (intuitively, learning a low order polynomial would imply setting most of these weights to zero). In order to counterbalance this bias, we complement the CP layer with simple but efficient linear low-order interactions (reminiscent of the idea behind residual networks~\cite{he2016deep}) when using it in tGNN: 
\begin{equation}
 f(\x_1,\cdots,\x_k) = \sigma'\left(\mathbf{M}\left(\!\ \sigma\left(\mathbf{W}_1^\top\begin{bmatrix}\mathbf{x}_1 \\ 1 \end{bmatrix}\odot  \cdots \odot\mathbf{W}_1^\top\begin{bmatrix}\mathbf{x}_{k} \\ 1 \end{bmatrix}\right)\!\right)\!\right) +\sigma''(\mathbf{W}_2^\top\mathbf{x}_1+...+\mathbf{W}_2^\top\mathbf{x}_{k})\label{eq:cp_sum}
\end{equation}
where the first term corresponds to the CP layer and the second one to a standard sum pooling layer~(with $\sigma$, $\sigma'$ and $\sigma''$ being activation functions).

\section{Experiments on Real-World Datasets}
\label{sec:tgnn.experiments}
In this section, we evaluate Tensorized Graph Neural Net on real-world node- and graph-level datasets. We introduce experiment setup in \ref{sec:setup}, compare tGNN with the state-of-the-arts models in \ref{sec:node_task}, and conduct ablation study on model performance and efficiency in \ref{sec:ablation}. The hyperparameter and computing resources  are attached in Tab~\ref{tab:searching.range}, \ref{tab:tGNN.hyper}. Dataset information can be found in Tab.~\ref{tab:node.stats}, \ref{tab:graph.stats}.

\subsection{Experiment Setup}
\label{sec:setup}
In this work, we conduct experiments on three citation networks (\textit{Cora}, \textit{Citeseer}, \textit{Pubmed}) and three \textit{OGB} datasets (\textit{PRODUCTS}, \textit{ARXIV}, \textit{PROTEINS})~\cite{hu2020open} for node-level tasks, one \textit{OGB} dataset (\textit{MolHIV})~\cite{hu2020open} and three \textit{benchmarking} datasets (\textit{ZINC}, \textit{CIFAR10}, \textit{MNIST})~\cite{dwivedi2020benchmarking} for graph-level tasks. 

\begin{table}[ht!]
  \centering
  \caption{Statistics of node graphs.}
    \begin{tabular}{|l|r|r|r|r|r|}
    \hline
    Dataset & \multicolumn{1}{l|}{\#Nodes} & \multicolumn{1}{l|}{\#Edges} & \multicolumn{1}{l|}{\#Node Features} & \multicolumn{1}{l|}{\#Edge Features} & \multicolumn{1}{l|}{\#Classes}    \\
    \hline
    \textit{Cora}  & 2,708 & 5,429 & 1,433 & /     & 7    \\
    \hline
    \textit{Citeseer} & 3,327 & 4,732 & 3,703 & /     & 6    \\
    \hline
    \textit{Pubmed} & 19,717 & 44,338 & 500   & /     & 3    \\
    \hline
    \textit{PRODUCTS} & 2,449,029 & 61,859,140 & 100   & /     & 47    \\
    \hline
    \textit{ARXIV} & 169,343 & 1,166,243 & 128   & /     & 40    \\
    \hline
    \textit{PROTEINS} & 132,534 & 9,561,252 & 8     & 8     & 112    \\
    \hline
    \end{tabular}%
  \label{tab:node.stats}%
\end{table}%

\begin{table}[ht!]
  \centering
  \caption{Statistics of graph datasets}
    \begin{tabular}{|l|r|r|r|}
    \hline
    Dataset & \multicolumn{1}{l|}{\#Graphs} & \multicolumn{1}{l|}{\#Node Features} & \multicolumn{1}{l|}{\#Classes}    \\
    \hline
    \textit{ZINC}  & 12,000 & 28    & /    \\
    \hline
    \textit{CIFAR10} & 60,000 & 5     & 10    \\
    \hline
    \textit{MNIST} & 70,000 & 3     & 10    \\
    \hline
    \textit{MolHIV} & 41,127 & 9     & 2    \\
    \hline
    \end{tabular}%
  \label{tab:graph.stats}%
\end{table}%

\paragraph{Training Procedure}
For three citation networks (\textit{Cora}, \textit{Citeseer}, \textit{Pubmed}), we run experiments 10 times on each dataset with 60\%/20\%/20\% random splits used in \cite{chien2021adaptive}, and report accuracy with standard deviation 
in Tab.~\ref{tab:node_task}.
For data splits of \textit{OGB} node and graph datasets, we follow \cite{hu2020open}, run experiments 5 times on each dataset (due to training cost), and report results in Tab.~\ref{tab:node_task}, \ref{tab:graph_classification}. For \textit{benchmarking} datasets, we run experiments 5 times on each dataset with data split used in \cite{dwivedi2020benchmarking}, and report results in Tab.~\ref{tab:graph_classification}. To avoid numerical instability and floating point exception in tGNN training, we sample 5 neighbors for each node.
For graph datasets, we do not sample because the training is already
in batch thus numerical instability can be avoided, and we apply the CP pooling at both node-level aggregation and graph-level readout.

\paragraph{Training Procedure}
For three citation networks (\textit{Cora}, \textit{Citeseer}, \textit{Pubmed}), we run experiments 10 times on each dataset with 60\%/20\%/20\% random splits used in \cite{chien2021adaptive}, and report accuracy with standard deviation 
in Tab.~\ref{tab:node_task}.
For data splits of \textit{OGB} node and graph datasets, we follow \cite{hu2020open}, run experiments 5 times on each dataset (due to training cost), and report results in Tab.~\ref{tab:node_task}, \ref{tab:graph_classification}. For \textit{benchmarking} datasets, we run experiments 5 times on each dataset with data split used in \cite{dwivedi2020benchmarking}, and report results in Tab.~\ref{tab:graph_classification}. To avoid numerical instability and floating point exception in tGNN training, we sample 5 neighbors for each node.
For graph datasets, we do not sample because the training is already
in batch thus numerical instability can be avoided, and we apply the CP pooling at both node-level aggregation and graph-level readout.

\paragraph{Model Comparison}
tGNN has two hyperparameters, hidden unit and decomposition rank, we fix hidden unit and explore decomposition rank. For citation networks, we compare 2-layer GNNs with 32 hidden units. And for \textit{OGB} and \textit{benchmarking} datasets, we use 32 hidden units for tGNN, and the results for all other methods are reported from the leaderboards and corresponding references.

Particularly, tGNN and CP pooling are more effective and expressive than existing pooling techniques for GNNs on two citation networks, two \textit{OGB} node datasets, and one \textit{OGB} graph dataset in Tab.~\ref{tab:node_task}, \ref{tab:graph_classification}.

\begin{table}[ht!]
  \centering
  \caption{Hyperparameter searching range corresponding to Section~\ref{sec:experiments}.}
    \begin{tabular}{|l|c|c|c|c|}
    \hline
    Hyperparameter & \multicolumn{4}{c|}{Searing Range}  \\
    \hline
    learning rate    & \multicolumn{4}{c|}{\{0.01, 0.001, 0.0001, 0.05, 0.005, 0.0005, 0.003\}}  \\
    \hline
    weight decay    & \multicolumn{4}{c|}{\{5e-5 ,5e-4, 5e-3, 1e-5, 1e-4, 1e-3, 0\}}  \\
    \hline
    dropout & \multicolumn{4}{c|}{\{0, 0.1, 0.3, 0.5, 0.7, 0.8, 0.9\}}  \\
    \hline
    $R$     & \multicolumn{4}{c|}{\{32, 64, 128, 256, 512, 25, 50, 75, 100, 200\}}  \\
    \hline
    \end{tabular}%
  \label{tab:searching.range}%
\end{table}%

\begin{table}[ht!]
  \centering
  \caption{Hyperparameters for tGNN corresponding to Section~\ref{sec:experiments}.}
    \begin{tabular}{|l|r|r|r|r|}
    \hline Dataset
          & \multicolumn{1}{l|}{learning rate} & \multicolumn{1}{l|}{weight decay} & \multicolumn{1}{l|}{dropout} & \multicolumn{1}{l|}{$R$}  \\
    \hline
    \textit{Cora}  & 0.001 & 5.00E-05 & 0.9   & 512  \\
    \hline
    \textit{Citeseer} & 0.001 & 1.00E-04 & 0     & 512  \\
    \hline
    \textit{Pubmed} & 0.005 & 5.00E-04 & 0.1   & 512  \\
    \hline
    \textit{PRODUCTS} & 0.001 & 5.00E-05 & 0.3   & 128  \\
    \hline
    \textit{ARXIV} & 0.003 & 5.00E-05 & 0     & 512  \\
    \hline
    \textit{PROTEINS} & 0.0005 & 5.00E-04 & 0.9   & 50  \\
    \hline
    \textit{ZINC}  & 0.005 & 5.00E-04 & 0     & 100  \\
    \hline
    \textit{CIFAR10} & 0.005 & 1.00E-04 & 0     & 100  \\
    \hline
    \textit{MNIST} & 0.005 & 5.00E-05 & 0     & 75  \\
    \hline
    \textit{MolHIV} & 0.001 & 5.00E-05 & 0.8   & 100  \\
    \hline
    \end{tabular}%
  \label{tab:tGNN.hyper}%
\end{table}%

\subsection{Real-world Datasets}
\label{sec:node_task}
In this section, we present tGNN performance on node- and graph-level tasks. We compare tGNN with several classic baseline models under the same training setting. For three citation networks, we compare tGNN with several baselines including GCN \cite{kipf2016classification}, GAT \cite{velivckovic2017attention}, GraphSAGE \cite{hamilton2017inductive}, H$_2$GCN \cite{zhu2020beyond}, GPRGNN \cite{chien2021adaptive}, APPNP \cite{klicpera2018predict} and MixHop \cite{abu2019mixhop}; for three \textit{OGB} node datasets, we compare tGNN with MLP, Node2vec \cite{grover2016node2vec}, GCN \cite{kipf2016classification}, GraphSAGE \cite{hamilton2017inductive} and DeeperGCN \cite{li2020deepergcn}.
And for graph-level tasks, we compare tGNN with several baselines including MLP, GCN \cite{kipf2016classification}, GIN \cite{xu2018powerful}, DiffPool \cite{ying2018hierarchical}, GAT \cite{velivckovic2017attention}, MoNet \cite{monti2017geometric}, GatedGCN \cite{bresson2017residual}, PNA \cite{corso2020principal}, PHMGNN \cite{le2021parameterized} and DGN \cite{beani2021directional}.

From Tab.~\ref{tab:node_task}, we can observe that tGNN outperforms all classic baselines on \textit{Cora}, \textit{Pubmed}, \textit{PRODUCTS} and \textit{ARXIV}, and have slight improvements on the other datasets but underperforms GCN on \textit{Citeseer} and DeeperGCN on \textit{PROTEINS}. On the citation networks, tGNN outperforms others on 2 out of 3 datasets. Moreover, on the \textit{OGB} node datasets, even when tGNN
is not ranked first, it is still very competitive (top 3 for all
datasets). We believe it is reasonable and expected that tGNN does not outperform all methods
on all datasets. But overall tGNN shows very competitive performance and deliver significant improvement on challenging graph benchmarks compared to popular commonly used pooling methods (with comparable computational cost).
\begin{table}[ht!]
      \centering
    \caption{Results of node-level tasks. \textbf{Left Table}: tGNN in comparison with GNN architectures on citation networks. \textbf{Right Table}: tGNN in comparison with GNN
    architectures on \textit{OGB} datasets.}
    \begin{minipage}{.5\linewidth}
        \resizebox{.9\textwidth}{!}{
    \begin{tabular}{|c|ccc|}
    \hline
    \rowcolor[rgb]{ 0,  0,  0} \multicolumn{1}{|c}{\textcolor[rgb]{ 1,  1,  1}{DATASET}} & \cellcolor[rgb]{ .502,  .392,  .635}\textcolor[rgb]{ 1,  1,  1}{\textit{Cora}} & \cellcolor[rgb]{ .376,  .286,  .478}\textcolor[rgb]{ 1,  1,  1}{\textit{Citeseer}} & \cellcolor[rgb]{ .592,  .278,  .024}\textcolor[rgb]{ 1,  1,  1}{\textit{Pubmed}}\\
    \rowcolor[rgb]{ 0,  0,  0} \multicolumn{1}{|c}{\textcolor[rgb]{ 1,  1,  1}{MODEL}} & \textcolor[rgb]{ 1,  1,  1}{Acc} & \textcolor[rgb]{ 1,  1,  1}{Acc} & \textcolor[rgb]{ 1,  1,  1}{Acc} \\
\cline{1-1}    GCN   & \cellcolor[rgb]{ .392,  .749,  .486}0.8778$\pm$0.0096 & \cellcolor[rgb]{ .388,  .745,  .482}0.8139$\pm$0.0123 & \cellcolor[rgb]{ .624,  .816,  .498}0.8890$\pm$0.0032 \\
\cline{1-1}    GAT   & \cellcolor[rgb]{ .592,  .792,  .494}0.8686$\pm$0.0042 & \cellcolor[rgb]{ .992,  .804,  .494}0.6720$\pm$0.0046 & \cellcolor[rgb]{ .973,  .412,  .42}0.8328$\pm$0.0012 \\
\cline{1-1}    GraphSAGE  & \cellcolor[rgb]{ .549,  .792,  .494}0.8658$\pm$0.0026 & \cellcolor[rgb]{ .741,  .847,  .506}0.7624$\pm$0.0030 & \cellcolor[rgb]{ .996,  .886,  .51}0.8658$\pm$0.0011 \\
\cline{1-1}    H$_{2}$GCN & \cellcolor[rgb]{ .427,  .757,  .486}0.8752$\pm$0.0061 & \cellcolor[rgb]{ .486,  .776,  .49}0.7997$\pm$0.0069 & \cellcolor[rgb]{ .827,  .875,  .51}0.8778$\pm$0.0028 \\
\cline{1-1}    GPRGNN & \cellcolor[rgb]{ .992,  .816,  .494}0.7951$\pm$0.0036 & \cellcolor[rgb]{ .992,  .812,  .494}0.6763$\pm$0.0038 & \cellcolor[rgb]{ .984,  .667,  .467}0.8507$\pm$0.0009 \\
\cline{1-1}    APPNP & \cellcolor[rgb]{ .992,  .812,  .494}0.7941$\pm$0.0038 & \cellcolor[rgb]{ .992,  .835,  .498}0.6859$\pm$0.0030 & \cellcolor[rgb]{ .984,  .663,  .467}0.8502$\pm$0.0009 \\
\cline{1-1}    MixHop & \cellcolor[rgb]{ .973,  .412,  .42}0.6565$\pm$0.1131 & \cellcolor[rgb]{ .973,  .412,  .42}0.4952$\pm$0.1335 & \cellcolor[rgb]{ .961,  .91,  .518}0.8704$\pm$0.0410 \\
\cline{1-1}    tGNN  & \cellcolor[rgb]{ .388,  .745,  .482}0.8808$\pm$0.0131 & \cellcolor[rgb]{ .463,  .769,  .49}0.80.51$\pm$0.0192 & \cellcolor[rgb]{ .388,  .745,  .482}0.9080$\pm$0.0018 \\
    \hline
    \end{tabular}
    }
    \end{minipage}%
    \begin{minipage}{.5\linewidth}
        \resizebox{1\textwidth}{!}{
    \begin{tabular}{|c|ccc|}
    \hline
    \rowcolor[rgb]{ 0,  0,  0} \textcolor[rgb]{ 1,  1,  1}{DATASET} & \multicolumn{1}{c|}{\cellcolor[rgb]{ .502,  0,  0}\textcolor[rgb]{ 1,  1,  1}{\textit{PRODUCTS}}} & \multicolumn{1}{c|}{\cellcolor[rgb]{ .4,  0,  .4}\textcolor[rgb]{ 1,  1,  1}{\textit{ARXIV}}} & \cellcolor[rgb]{ .98,  .749,  .561}\textcolor[rgb]{ 1,  1,  1}{\textit{PROTEINS}} \\
    \hline
    \rowcolor[rgb]{ 0,  0,  0} \textcolor[rgb]{ 1,  1,  1}{MODEL} & \multicolumn{1}{c|}{\textcolor[rgb]{ 1,  1,  1}{Acc}} & \multicolumn{1}{c|}{\textcolor[rgb]{ 1,  1,  1}{Acc}} & \textcolor[rgb]{ 1,  1,  1}{AUC} \\
    \hline
    MLP   & \cellcolor[rgb]{ .973,  .412,  .42}\textcolor[rgb]{ .129,  .129,  .129}{0.6106$\pm$0.0008} & \cellcolor[rgb]{ .973,  .412,  .42}\textcolor[rgb]{ .129,  .129,  .129}{0.5550$\pm$0.0023} & \cellcolor[rgb]{ .984,  .671,  .467}\textcolor[rgb]{ .129,  .129,  .129}{0.7204$\pm$0.0048} \\
\cline{1-1}    Node2vec & \cellcolor[rgb]{ .988,  .773,  .486}\textcolor[rgb]{ .129,  .129,  .129}{0.7249$\pm$0.0010} & \cellcolor[rgb]{ .996,  .871,  .506}\textcolor[rgb]{ .129,  .129,  .129}{0.7007$\pm$0.0013} & \cellcolor[rgb]{ .973,  .412,  .42}\textcolor[rgb]{ .129,  .129,  .129}{0.6881$\pm$0.0065} \\
\cline{1-1}    GCN   & \cellcolor[rgb]{ .996,  .875,  .506}\textcolor[rgb]{ .129,  .129,  .129}{0.7564$\pm$0.0021} & \cellcolor[rgb]{ .988,  .918,  .518}\textcolor[rgb]{ .129,  .129,  .129}{0.7174$\pm$0.0029} & \cellcolor[rgb]{ .988,  .71,  .475}\textcolor[rgb]{ .129,  .129,  .129}{0.7251$\pm$0.0035} \\
\cline{1-1}    GraphSAGE & \cellcolor[rgb]{ .816,  .871,  .51}\textcolor[rgb]{ .129,  .129,  .129}{0.7850$\pm$0.0016} & \cellcolor[rgb]{ .996,  .914,  .514}\textcolor[rgb]{ .129,  .129,  .129}{0.7149$\pm$0.0027} & \cellcolor[rgb]{ .855,  .882,  .51}\textcolor[rgb]{ .129,  .129,  .129}{0.7768$\pm$0.0020} \\
\cline{1-1}    DeeperGCN & \cellcolor[rgb]{ .494,  .776,  .49}\textcolor[rgb]{ .129,  .129,  .129}{0.8098$\pm$0.0020} & \cellcolor[rgb]{ .965,  .914,  .518}\textcolor[rgb]{ .129,  .129,  .129}{0.7192$\pm$0.0016} & \cellcolor[rgb]{ .388,  .745,  .482}\textcolor[rgb]{ .129,  .129,  .129}{0.8580$\pm$0.0017} \\
\cline{1-1}    tGNN  & \cellcolor[rgb]{ .388,  .745,  .482}\textcolor[rgb]{ .129,  .129,  .129}{0.8179$\pm$0.0054} & \cellcolor[rgb]{ .388,  .745,  .482}\textcolor[rgb]{ .129,  .129,  .129}{0.7538$\pm$0.0015} & \cellcolor[rgb]{ .576,  .8,  .494}\textcolor[rgb]{ .129,  .129,  .129}{0.8255$\pm$0.0049} \\
    \hline
    \end{tabular}
    }
    \end{minipage} 
    \label{tab:node_task}
\end{table}

In Tab.~\ref{tab:graph_classification}, we present tGNN performance on graph property prediction tasks. tGNN achieves state-of-the-arts results on  \textit{MolHIV}, and have slight improvements on other three \textit{Benchmarking} graph datasets. Overall tGNN achieves more effective and accurate results on 5 out of 10 datasets comparing with existing pooling techniques, which suggests that high-order CP pooling can leverage a GNN to generalize better node embeddings and graph representations.
\begin{table}[htbp!]
    \begin{minipage}{.5\linewidth}
      \caption{Results of tGNN on \\ graph-level tasks in comparison with \\ GNN architectures.}
        \resizebox{.9\textwidth}{!}{
        \begin{tabular}{|c|c|cccc|}
    \rowcolor[rgb]{ 0,  0,  0} \multicolumn{2}{c}{\textcolor[rgb]{ 1,  1,  1}{DATASET}} & \multicolumn{1}{c}{\cellcolor[rgb]{ 1,  0,  0}\textcolor[rgb]{ 1,  1,  1}{\textit{ZINC}}} & \multicolumn{1}{c}{\cellcolor[rgb]{ .31,  .506,  .741}\textcolor[rgb]{ 1,  1,  1}{\textit{CIFAR10}}} & \multicolumn{1}{c}{\cellcolor[rgb]{ .969,  .588,  .275}\textcolor[rgb]{ 1,  1,  1}{\textit{MNIST}}} & \multicolumn{1}{c}{\cellcolor[rgb]{ 0,  .502,  0}\textcolor[rgb]{ 1,  1,  1}{\textit{MolHIV}}} \\
    \rowcolor[rgb]{ 0,  0,  0} \multicolumn{2}{c}{\multirow{3}[1]{*}{\textcolor[rgb]{ 1,  1,  1}{MODEL}}} & \multicolumn{1}{c}{\textcolor[rgb]{ 1,  1,  1}{No edge}} &  \multicolumn{1}{c}{\textcolor[rgb]{ 1,  1,  1}{No edge}} &  \multicolumn{1}{c}{\textcolor[rgb]{ 1,  1,  1}{No edge}} &  \multicolumn{1}{c}{\textcolor[rgb]{ 1,  1,  1}{No edge}} \\
    \rowcolor[rgb]{ 0,  0,  0} \multicolumn{2}{c}{} & \multicolumn{1}{c}{\textcolor[rgb]{ 1,  1,  1}{features}} &  \multicolumn{1}{c}{\textcolor[rgb]{ 1,  1,  1}{features}} &  \multicolumn{1}{c}{\textcolor[rgb]{ 1,  1,  1}{features}} &  \multicolumn{1}{c}{\textcolor[rgb]{ 1,  1,  1}{features}} \\
    \rowcolor[rgb]{ 0,  0,  0} \multicolumn{2}{c}{} & \multicolumn{1}{c}{\textcolor[rgb]{ 1,  1,  1}{MAE}} &  \multicolumn{1}{c}{\textcolor[rgb]{ 1,  1,  1}{Acc}} &  \multicolumn{1}{c}{\textcolor[rgb]{ 1,  1,  1}{Acc}} &  \multicolumn{1}{c}{\textcolor[rgb]{ 1,  1,  1}{AUC}}  \\
    \hline
          & MLP   & \cellcolor[rgb]{ .973,  .412,  .42}0.710$\pm$0.001 & \cellcolor[rgb]{ .98,  .573,  .451}0.560$\pm$0.009 & \cellcolor[rgb]{ .996,  .863,  .506}0.945$\pm$0.003 &    \\
\cline{2-2}    Dwivedi & GCN   & \cellcolor[rgb]{ .996,  .831,  .502}0.469$\pm$0.002 & \cellcolor[rgb]{ .973,  .482,  .431}0.545$\pm$0.001 & \cellcolor[rgb]{ .973,  .412,  .42}0.899$\pm$0.002& \cellcolor[rgb]{ .973,  .478,  .431}0.761$\pm$0.009  \\
\cline{2-2}    et al. & GIN   & \cellcolor[rgb]{ .976,  .914,  .514}0.408$\pm$0.008 & \cellcolor[rgb]{ .973,  .412,  .42}0.533$\pm$0.037 & \cellcolor[rgb]{ .992,  .812,  .494}0.939$\pm$0.013 & \cellcolor[rgb]{ .973,  .412,  .42}0.756$\pm$0.014  \\
\cline{2-2}    and Hu & DiffPool & \cellcolor[rgb]{ .996,  .835,  .502}0.466$\pm$0.006 & \cellcolor[rgb]{ .984,  .694,  .471}0.579$\pm$0.005 &        \cellcolor[rgb]{ 1,  .922,  .518}0.950$\pm$0.004 &   \\
\cline{2-2}    et al. & GAT   & \cellcolor[rgb]{ .996,  .839,  .502}0.463$\pm$0.002 & \cellcolor[rgb]{ .792,  .863,  .506}0.655$\pm$0.003 &       \cellcolor[rgb]{ .847,  .878,  .51}0.956$\pm$0.001 &        \\
\cline{2-2}    \multirow{2}[4]{*}{} & MoNet & \cellcolor[rgb]{ .973,  .914,  .514}0.407$\pm$0.007 & \cellcolor[rgb]{ .973,  .42,  .42}0.534$\pm$0.004 & \cellcolor[rgb]{ .973,  .447,  .424}0.904$\pm$0.005 &   \\
\cline{2-2}          & GatedGCN & \cellcolor[rgb]{ 1,  .91,  .518}0.422$\pm$0.006 &  \cellcolor[rgb]{ .584,  .804,  .494}0.692$\pm$0.003 &  \cellcolor[rgb]{ .388,  .745,  .482}0.974$\pm$0.001  &   \\
\cline{1-2}    Corso et al. & PNA   & \cellcolor[rgb]{ .702,  .835,  .498}0.320$\pm$0.032 & \cellcolor[rgb]{ .529,  .788,  .494}0.702$\pm$0.002 & \cellcolor[rgb]{ .435,  .761,  .486}0.972$\pm$0.001 & \cellcolor[rgb]{ .996,  .898,  .51}0.791$\pm$0.013  \\
\cline{1-2}    Le et al. & PHM-GNN &       &  &  & \cellcolor[rgb]{ .89,  .89,  .514}0.793$\pm$0.012  \\
\cline{1-2}    Beaini et al. & DGN   & \cellcolor[rgb]{ .388,  .745,  .482}0.219$\pm$0.010 &  \cellcolor[rgb]{ .388,  .745,  .482}0.727$\pm$0.005 &  & \cellcolor[rgb]{ .612,  .812,  .498}0.797$\pm$0.009  \\
\cline{1-2}    Ours  & tGNN  & \cellcolor[rgb]{ .643,  .816,  .494}0.301$\pm$0.008 & \cellcolor[rgb]{ .631,  .816,  .498}0.684$\pm$0.006  & \cellcolor[rgb]{ .624,  .816,  .498}0.965$\pm$0.002 &  \cellcolor[rgb]{ .388,  .745,  .482}0.799$\pm$0.016  \\
    \hline
    \end{tabular}
    \label{tab:graph_classification}
    }
    \end{minipage}%
    \begin{minipage}{.5\linewidth}
    \caption{Results of the tGNN ablation study on two node- and one graph-level tasks. tGNN in comparison with high-order CP pooling and low-order linear sum pooling.}
        \resizebox{1\textwidth}{!}{
    \begin{tabular}{|cc|ccc|}
    \hline
    \rowcolor[rgb]{ 0,  0,  0} \multicolumn{2}{|c}{\textcolor[rgb]{ 1,  1,  1}{DATASET}} & \cellcolor[rgb]{ .502,  .392,  .635}\textcolor[rgb]{ 1,  1,  1}{\textit{Cora}} & \cellcolor[rgb]{ .592,  .278,  .024}\textcolor[rgb]{ 1,  1,  1}{\textit{Pubmed}} & \cellcolor[rgb]{ 1,  0,  0}\textcolor[rgb]{ 1,  1,  1}{\textit{ZINC}} \\
    \rowcolor[rgb]{ 0,  0,  0} \multicolumn{2}{|c}{\textcolor[rgb]{ 1,  1,  1}{MODEL}} & \textcolor[rgb]{ 1,  1,  1}{Acc} & \textcolor[rgb]{ 1,  1,  1}{Acc} & \textcolor[rgb]{ 1,  1,  1}{MAE} \\
\cline{1-2}    \multicolumn{2}{|c|}{{Non-Linear} CP {Pooling}} & \cellcolor[rgb]{ .878,  .902,  .588}0.8655$\pm$0.0375 & \cellcolor[rgb]{ .816,  .882,  .573}0.8679$\pm$0.0103 & \cellcolor[rgb]{ .851,  .89,  .58}0.407$\pm$0.025  \\
\cline{1-2}    \multicolumn{2}{|c|}{{Linear Sum Pooling}} & \cellcolor[rgb]{ 1,  .937,  .612}0.8623$\pm$0.0107 & \cellcolor[rgb]{ 1,  .937,  .612}0.8531$\pm$0.0009 & \cellcolor[rgb]{ 1,  .937,  .612}0.440$\pm$0.010  \\
\cline{1-2}    \multicolumn{2}{|c|}{{Non-Linear} CP + {Linear Sum}} & \cellcolor[rgb]{ .388,  .745,  .482}0.8780$\pm$0.0158 & \cellcolor[rgb]{ .388,  .745,  .482}0.9018$\pm$0.0015 & \cellcolor[rgb]{ .388,  .745,  .482}0.301$\pm$0.008  \\
    \hline
    \end{tabular}
    \label{tab:ablationstudy}
    }
    \end{minipage} 
\end{table}

\subsection{Ablation and Efficiency Study}
\label{sec:ablation}

In the ablation study, we first investigate the effectiveness of having the high-order non-linear CP pooling and adding the linear low-order transformation in Tab.~\ref{tab:ablationstudy}, then investigate the relations of the model performance, efficiency, and tensor decomposition rank in Fig.~\ref{fig:ablation_figures}. Moreover, we compare tGNN with different GNN architectures and aggregation functions to show the efficiency in Tab.~\ref{tab:ablation_table1}, \ref{tab:ablation_table2} by showing the number of model parameters, computation time, and accuracy.

In Tab.~\ref{tab:ablationstudy}, we test each component, high-order non-linear CP pooling, low-order linear sum pooling, and two pooling techniques combined, separately. We fix 2-layer GNNs with 32 hidden unit and 64 decomposition rank, and run experiments 10 times on \textit{Cora} and \textit{Pubmed} with 60\%/20\%/20\% random splits used in~\cite{chien2021adaptive}, run \textit{ZINC} 5 times with 10,000/1,000/1,000 graph split used in~\cite{dwivedi2020benchmarking}.

From the results, we can see that adding the linear low-order interactions helps put essential weights on them. Ablation results show that high-order CP pooling has the advantage over low-order linear pooling for generating expressive node and graph representations, moreover, tGNN is more expressive with the combination of high-order pooling and low-order aggregation. This illustrates the necessity of learning high-order components and low-order interactions simultaneously in tGNN.

\paragraph{Computational Aspect}
\begin{figure}[htbp!]
    \begin{minipage}{.5\linewidth}
      \centering
{
\includegraphics[width=1\textwidth]{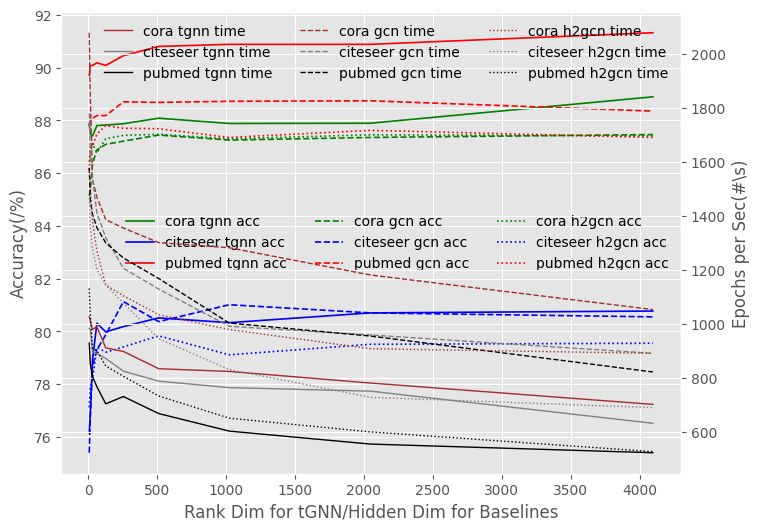}}
    \end{minipage}%
    \begin{minipage}{.5\linewidth}
      \centering
{
\includegraphics[width=1\textwidth]{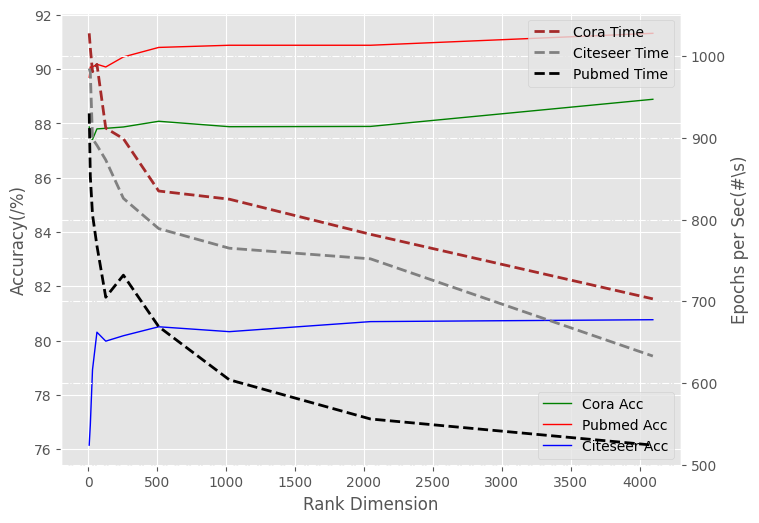}}
    \end{minipage} 
    \caption{Results of node classification with increasing rank dimension on three citation datasets. \textbf{Left Figure}: Left axis shows accuracy, right axis shows \#training epochs second, and horizontal axis indicates rank dim of tGNN or hidden dim of baselines. \textbf{Right Figure}: Left axis shows accuracy, right axis shows \#training epochs second, and horizontal axis indicates rank dim of tGNN.}
    \label{fig:ablation_figures}
\end{figure}
In Fig.~\ref{fig:ablation_figures}, we compare model performance with computation costs. We fix a 2-layer tGNN with 32 hidden channels with 8,16,...,2048,4096 rank, and 2-layer baselines with 8,16,...,2048,4096 hidden dim. We run 10 times on each citation network with the same 60\%/20\%/20\% random splits for train/validation/test and draw the relations of average test accuracy, rank/hidden dim, and computation costs. From the figure, we can see that the model performance can be improved with higher ranks (i.e., Tensor $\tensor{T}$ is more accurately computed as the rank $R$ gets larger), but training time is also increased, thus it is a trade-off between classification accuracy and computation efficiency. And in comparison with baselines, tGNN still has marginal improvements with higher ranks while the baselines stop improving with larger hidden dimensions.

\begin{table}[htbp!]
  \centering
  \caption{Study of Efficiency vs. Performance on Cora. tGNN in comparison with GNN architectures.}
    \centering
        \resizebox{1\textwidth}{!}{
    \renewcommand{\arraystretch}{1}
    \begin{tabular}{|c|ccccccccccccc|}
    \hline
          & Dropout & LR    & Weight Decay & Hidden & Rank  & Head  & Sampling & \#Params & Time(s) & Epoch & Epoch/s & Acc   & Std   \\
    \hline
    tGNN  & 0     & 0.005 & 5.00E-05 & 32    & 8     & \_    & 3     & \cellcolor[rgb]{ .522,  .78,  .486}58128 & 290.9774 & 1389  & \cellcolor[rgb]{ .765,  .855,  .506}4.7736 & \cellcolor[rgb]{ .941,  .906,  .518}85.55 & 1.33  \\
    tGNN  & 0     & 0.005 & 5.00E-05 & 32    & 32    & \_    & 3     & \cellcolor[rgb]{ .929,  .898,  .51}94272 & 790.0373 & 1321  & \cellcolor[rgb]{ .984,  .639,  .463}1.6721 & \cellcolor[rgb]{ .753,  .851,  .506}86.25 & 0.58 \\
    tGNN  & 0     & 0.005 & 5.00E-05 & 32    & 64    & \_    & 3     & \cellcolor[rgb]{ 1,  .918,  .518}142464 & 383.2255 & 1343  & \cellcolor[rgb]{ .969,  .914,  .518}3.5045 & \cellcolor[rgb]{ .804,  .867,  .51}86.06 & 1.08 \\
    tGNN  & 0     & 0.005 & 5.00E-05 & 32    & 128   & \_    & 3     & \cellcolor[rgb]{ 1,  .906,  .518}238848 & 999.1514 & 1247  & \cellcolor[rgb]{ .98,  .569,  .447}1.2481 & \cellcolor[rgb]{ .62,  .812,  .498}86.76 & 1.19 \\
    tGNN  & 0     & 0.005 & 5.00E-05 & 32    & 256   & \_    & 3     & \cellcolor[rgb]{ 1,  .882,  .51}431616 & 1193.7131 & 1272  & \cellcolor[rgb]{ .976,  .537,  .443}1.0656 & \cellcolor[rgb]{ .565,  .796,  .494}86.97 & 1.24 \\
    tGNN  & 0     & 0.005 & 5.00E-05 & 32    & 512   & \_    & 3     & \cellcolor[rgb]{ .996,  .831,  .502}817152 & 1621.9083 & 1332  & \cellcolor[rgb]{ .976,  .494,  .435}0.8213 & \cellcolor[rgb]{ .467,  .769,  .49}87.33 & 1.83 \\
    tGNN  & 0     & 0.005 & 5.00E-05 & 32    & 1024  & \_    & 3     & \cellcolor[rgb]{ .992,  .733,  .482}1588224 & 2377.5139 & 1265  & \cellcolor[rgb]{ .973,  .443,  .424}0.5321 & \cellcolor[rgb]{ .388,  .745,  .482}87.62 & 1.63 \\
    %Mean  & 0     & 0.005 & 5.00E-05 & 32    & \_    & \_    & Full  & \cellcolor[rgb]{ .388,  .745,  .482}46080 & 449.2177 & 2352  & \cellcolor[rgb]{ .69,  .835,  .502}5.2358 & \cellcolor[rgb]{ .973,  .431,  .424}83.26 & 1.06 \\
    %Mean  & 0     & 0.005 & 5.00E-05 & 64    & \_    & \_    & Full  & \cellcolor[rgb]{ .906,  .894,  .51}92160 & 398.3229 & 1496  & \cellcolor[rgb]{ .925,  .902,  .514}3.7747 & \cellcolor[rgb]{ .98,  .576,  .451}83.88 & 1.77 \\
    %Max   & 0     & 0.005 & 5.00E-05 & 32    & \_    & \_    & Full  & \cellcolor[rgb]{ .388,  .745,  .482}46080 & 464.0722 & 2371  & \cellcolor[rgb]{ .718,  .843,  .502}5.0655 & \cellcolor[rgb]{ .973,  .447,  .424}83.33 & 1.96 \\
    %Max   & 0     & 0.005 & 5.00E-05 & 64    & \_    & \_    & Full  & \cellcolor[rgb]{ .906,  .894,  .51}92160 & 394.1546 & 1496  & \cellcolor[rgb]{ .922,  .902,  .514}3.7955 & \cellcolor[rgb]{ .976,  .529,  .439}83.68 & 2.13 \\
    GCN   & 0     & 0.005 & 5.00E-05 & 32    & \_    & \_    & 3     & \cellcolor[rgb]{ .388,  .745,  .482}46080 & 212.3576 & 1509  & \cellcolor[rgb]{ .388,  .745,  .482}7.1059 & \cellcolor[rgb]{ .984,  .675,  .471}84.29 & 1.02 \\
    GCN   & 0     & 0.005 & 5.00E-05 & 32    & \_    & \_    & Full  & \cellcolor[rgb]{ .388,  .745,  .482}46080 & 205.0601 & 1276  & \cellcolor[rgb]{ .533,  .788,  .494}6.2226 & \cellcolor[rgb]{ .996,  .902,  .514}85.24 & 1.69 \\
    GCN   & 0     & 0.005 & 5.00E-05 & 64    & \_    & \_    & 3     & \cellcolor[rgb]{ .906,  .894,  .51}92160 & 316.6461 & 1240  & \cellcolor[rgb]{ .902,  .894,  .514}3.916 & \cellcolor[rgb]{ .996,  .871,  .506}85.12 & 2.11 \\
    GCN   & 0     & 0.005 & 5.00E-05 & 64    & \_    & \_    & Full  & \cellcolor[rgb]{ .906,  .894,  .51}92160 & 318.3962 & 1161  & \cellcolor[rgb]{ .945,  .906,  .518}3.6464 & \cellcolor[rgb]{ .929,  .902,  .514}85.59 & 2.03 \\
    GAT   & 0     & 0.005 & 5.00E-05 & 32    & \_    & 1     & 3     & \cellcolor[rgb]{ .906,  .894,  .51}92238 & 605.3269 & 1998  & \cellcolor[rgb]{ 1,  .922,  .518}3.3007 & \cellcolor[rgb]{ .976,  .525,  .439}83.66 & 1.54 \\
    GAT   & 0     & 0.005 & 5.00E-05 & 32    & \_    & 1     & Full  & \cellcolor[rgb]{ .906,  .894,  .51}92238 & 548.7319 & 1638  & \cellcolor[rgb]{ .996,  .867,  .506}2.9851 & \cellcolor[rgb]{ .992,  .792,  .49}84.79 & 2.26 \\
    GAT   & 0     & 0.005 & 5.00E-05 & 32    & \_    & 8     & 3     & \cellcolor[rgb]{ .996,  .839,  .502}762992 & 1491.5643 & 1594  & \cellcolor[rgb]{ .976,  .537,  .443}1.069 & \cellcolor[rgb]{ .753,  .851,  .506}86.26 & 1.35 \\
    GAT   & 0     & 0.005 & 5.00E-05 & 32    & \_    & 8     & Full  & \cellcolor[rgb]{ .996,  .839,  .502}762992 & 1524.4348 & 1276  & \cellcolor[rgb]{ .976,  .498,  .435}0.837 & \cellcolor[rgb]{ .537,  .788,  .494}87.07 & 1.64 \\
    GAT   & 0     & 0.005 & 5.00E-05 & 64    & \_    & 1     & 3     & \cellcolor[rgb]{ 1,  .914,  .518}184462 & 626.4011 & 1740  & \cellcolor[rgb]{ .992,  .831,  .498}2.7778 & \cellcolor[rgb]{ .984,  .643,  .463}84.15 & 1.29 \\
    GAT   & 0     & 0.005 & 5.00E-05 & 64    & \_    & 1     & Full  & \cellcolor[rgb]{ 1,  .914,  .518}184462 & 682.7776 & 1465  & \cellcolor[rgb]{ .988,  .722,  .478}2.1456 & \cellcolor[rgb]{ .804,  .867,  .51}86.07 & 2.55 \\
    GAT   & 0     & 0.005 & 5.00E-05 & 64    & \_    & 8     & 3     & \cellcolor[rgb]{ .992,  .741,  .486}1525872 & 2164.689 & 1348  & \cellcolor[rgb]{ .973,  .459,  .427}0.6227 & \cellcolor[rgb]{ 1,  .922,  .518}85.32 & 1.31 \\
    GAT   & 0     & 0.005 & 5.00E-05 & 64    & \_    & 8     & Full  & \cellcolor[rgb]{ .992,  .741,  .486}1525872 & 2143.036 & 1105  & \cellcolor[rgb]{ .973,  .443,  .424}0.5156 & \cellcolor[rgb]{ .553,  .792,  .494}87.01 & 0.96 \\
    GCN2  & 0     & 0.005 & 5.00E-05 & 32    & \_    & \_    & 3     & \cellcolor[rgb]{ .408,  .749,  .482}48128 & 256.9575 & 1708  & \cellcolor[rgb]{ .463,  .769,  .49}6.647 & \cellcolor[rgb]{ .973,  .412,  .42}83.17 & 1.5 \\
    GCN2  & 0     & 0.005 & 5.00E-05 & 32    & \_    & \_    & Full  & \cellcolor[rgb]{ .408,  .749,  .482}48128 & 210.7702 & 1302  & \cellcolor[rgb]{ .541,  .788,  .494}6.1773 & \cellcolor[rgb]{ .984,  .698,  .475}84.38 & 2.03 \\
    GCN2  & 0     & 0.005 & 5.00E-05 & 64    & \_    & \_    & 3     & \cellcolor[rgb]{ 1,  .922,  .518}100352 & 353.0055 & 1581  & \cellcolor[rgb]{ .812,  .871,  .51}4.4787 & \cellcolor[rgb]{ .988,  .773,  .486}84.7 & 1.13 \\
    GCN2  & 0     & 0.005 & 5.00E-05 & 64    & \_    & \_    & Full  & \cellcolor[rgb]{ 1,  .922,  .518}100352 & 307.7913 & 1219  & \cellcolor[rgb]{ .894,  .894,  .514}3.9605 & \cellcolor[rgb]{ .992,  .792,  .49}84.79 & 1.64 \\
    GCN2  & 0     & 0.005 & 5.00E-05 & Input Dim & \_    & \_    & 3     & \cellcolor[rgb]{ .973,  .412,  .42}4117009 & 3013.6082 & 1051  & \cellcolor[rgb]{ .973,  .412,  .42}0.3488 & \cellcolor[rgb]{ .631,  .816,  .498}86.72 & 1.82 \\
    GCN2  & 0     & 0.005 & 5.00E-05 & Input Dim & \_    & \_    & Full  & \cellcolor[rgb]{ .973,  .412,  .42}4117009 & 3091.4392 & 1013  & \cellcolor[rgb]{ .973,  .412,  .42}0.3277 & \cellcolor[rgb]{ .412,  .753,  .486}87.54 & 1.66\\
    \hline
    \end{tabular}%
    }
  \label{tab:ablation_table1}%
\end{table}%

\begin{table}[htbp!]
  \centering
  \caption{Study of Efficiency vs. Performance on Cora. CP pooling in comparison with classical pooling techniques.}
        \resizebox{1\textwidth}{!}{
    \renewcommand{\arraystretch}{1}
    \begin{tabular}{|c|cccccccccccc|}
    \hline
          & Dropout & LR    & Weight Decay & Hidden & Rank  & Sampling & \#Params & Time(s) & Epoch & Epoch/s & Acc   & Std    \\
    \hline
    tGNN  & 0     & 0.005 & 5.00E-05 & 32    & 8     & 3     & \cellcolor[rgb]{ 1,  .922,  .518}58128 & 290.9774 & 1389  & \cellcolor[rgb]{ 1,  .922,  .518}4.7736 & \cellcolor[rgb]{ .388,  .745,  .482}85.55 & 1.33\\
    Mean  & 0     & 0.005 & 5.00E-05 & 32    & \_    & Full  & \cellcolor[rgb]{ .388,  .745,  .482}46080 & 449.2177 & 2352  & \cellcolor[rgb]{ .388,  .745,  .482}5.2358 & \cellcolor[rgb]{ .973,  .412,  .42}83.26 & 1.06 \\
    Mean  & 0     & 0.005 & 5.00E-05 & 64    & \_    & Full  & \cellcolor[rgb]{ .973,  .412,  .42}92160 & 398.3229 & 1496  & \cellcolor[rgb]{ .973,  .412,  .42}3.7747 & \cellcolor[rgb]{ .937,  .906,  .518}83.88 & 1.77 \\
    Max   & 0     & 0.005 & 5.00E-05 & 32    & \_    & Full  & \cellcolor[rgb]{ .388,  .745,  .482}46080 & 464.0722 & 2371  & \cellcolor[rgb]{ .616,  .812,  .498}5.0655 & \cellcolor[rgb]{ .976,  .494,  .435}83.33 & 1.96 \\
    Max   & 0     & 0.005 & 5.00E-05 & 64    & \_    & Full  & \cellcolor[rgb]{ .973,  .412,  .42}92160 & 394.1546 & 1496  & \cellcolor[rgb]{ .973,  .42,  .42}3.7955 & \cellcolor[rgb]{ 1,  .922,  .518}83.68 & 2.13\\
    \hline
    \end{tabular}%
    }
  \label{tab:ablation_table2}%
\end{table}%

\paragraph{Efficiency Study}
In Sec.~\ref{sec:time_complexity}, we theoretically discuss the time complexity of tGNN. Here, we experimentally assess model efficiency by comparing
tGNN with GCN~\cite{kipf2016classification}, GAT~\cite{velivckovic2017attention}, GCN2~\cite{chen2020simple}, and mean, max poolings on \textit{Cora} on a CPU over 10 runtimes, and compare the number of model parameters, training epochs per second, and accuracy.  The experiments show that tGNN is more competitive in terms of running time and better accuracy with a fixed number of parameters and the same time budget.

In Tab.~\ref{tab:ablation_table1}, we study efficiency by comparing
tGNN with models on Cora on a CPU over 10 runtimes,
and compare the number of model paramterts, number of training epochs per second, and accuracy. Sampling
means we sample ’3’ neighbors for each node or we use
’Full’ neighborhood. Notice that the average node degree of Cora is 3.9, which means if a node has a number of neighbors less than 10, some of its neighbor nodes will get resampled until it hits 10. We can see that the model performance is not heavily affected by the number of sampled neighbor nodes because the average node degree is not that high on the majority of network datasets, and '3' or '5' would be sufficient.

In Tab.~\ref{tab:ablation_table2}, we study and compare CP pooling with classical pooling methods on Cora on a CPU over 10 runtimes,
and compare the number of model paramterts, number of training epochs per second, and accuracy. Sampling
means we sample ’3’ neighbors for each node or we use
’Full’ neighborhood. 

\section{Discussion}
\subsection{Rank and Performance}
Generally speaking, the low-rank decomposition methods will sacrifice some expressivity for some computation costs, so as we increase the rank, we will re-gain some expressivity but lose some computation costs. As talked about in our paper, it is a trade-off between expressivity (model performance) and computation costs. This idea can be found in previous works \cite{peng2012rasl, rabusseau2016low, cao2017tensor} where they test on computer vision datasets (with visualization of recovering images from noises, a high-rank model can always better recover an image than a low-rank model does).

In this work, we use the low-rank decomposition technique which can potentially reduce the number of learnable parameters by a lower number of ranks. The low-rank method saves the computation costs but the use of low-rank decomposition for learnable parameters will sacrifice some expressivity in computation. If we increase the rank, the learnable parameters can be principally recovered, in this case, we no longer sacrifice its expressivity but computation costs (as the Figures shown in the ablation study).

In computer vision, low-rank decomposition methods were introduced to replace a fully-connected layer. And in their papers \cite{peng2012rasl, rabusseau2016low, cao2017tensor} (in both theory and experiments), they show that a model is better with higher ranks because higher ranks can lead to higher expressivity than low ranks do. They conclude and show that the model performance is positively correlated to the tensor decomposition rank. In \cite{peng2012rasl, rabusseau2016low, cao2017tensor}, they experimentally show that higher ranks can result in lower test errors as the model becomes more expressive (plotting a curve of decomposition rank or model parameter vs. error). Moreover, in \cite{peng2012rasl, rabusseau2016low}, they visualize the relationship between the model performance and model rank. They aim to recover the original pictures from noises with different tensor ranks (from low to high), and the recovered pictures are more clear with high ranks than with low ranks. To conclude our findings in the ablation study and previous works on low-rank decomposition, high-rank models can usually have better performance and stronger expressivity than low-rank models, and higher ranks can usually lead to better performance (in terms of high accuracy or low error) than low ranks do.

\subsection{Overview}
\label{sec:futurework}
In this chapter, we introduce our tGNN work \cite{hua2022high}. We theoretically develop a high-order permutation-invariant multilinear map for node aggregation and graph pooling via tensor parameterization. We show its powerful ability to compute any permutation-invariant multilinear polynomial including sum and mean pooling functions. Experiments demonstrate that tGNN is more effective and accurate on 5 out of 10 datasets, showcasing the  relevance of tensor methods for high-order graph neural network models.

In particular, the success of tGNN in tasks involving chemical graphs and biological networks underscores its effectiveness in capturing complex relationships within structured data. This chapter has offered valuable insights into GNN design, especially for graph-level classification tasks, with a focus on leveraging high-order pooling functions. The experimental section of this chapter demonstrates that GNNs excel at capturing intricate relationships between atoms and bonds in a molecule, showcasing superior performance in predicting properties based on chemical and geometric information. Moreover, current GNN designs have evolved to address 3D molecular graphs, incorporating physics properties such as group equivariance to handle geometric transformations like rotation and translation. Equivariant GNNs, a subset exhibiting equivariance with respect to symmetries, are particularly adept at learning representations invariant to specific transformations, making them suitable for tasks involving geometric structure \cite{satorras2021n, satorras2021n2, tholke2022torchmd}. The integration of graph models and equivariance in graph-structured data has significantly advanced the design and discovery of molecules. Building on these foundational concepts and applications, the subsequent chapter shifts focus to the practical use of GNNs in addressing real-world challenges, specifically in the generation of valid and stable chemical compounds.
\chapter{Graph Neural Networks for \\ Molecule Generation}
\label{sec:mol.gen}
After covering fundamental concepts about GNNs in previous chapters, this section introduces GNNs that incorporate physical properties, specifically dealing with molecular graphs in the 3D Euclidean space. GNNs, equipped with group equivariance, are effective in learning equivariant and invariant features for 3D objects, such as molecules. In this chapter, we propose a novel GNN model, named MUformer, as the backbone of our generative model, MUDiff. MUDiff and MUformer can learn both invariant and equivariant aspects of a molecule, remaining robust to geometric transformations. The primary innovation of our denoising transformer model lies in designing an attention mechanism that facilitates interaction between 2D and 3D structural information. This design enables our transformer to concurrently compute graph connectivity and spatial atom arrangements. Moreover, when computing 3D geometric structures, the denoising transformer model adheres to critical 3D roto-translation equivariance constraints. This compliance ensures insensitivity to geometric transformations on the molecule, allowing the entire diffusion process to adhere to these constraints. Through these novel designs, our model can generate and learn a comprehensive molecular representation that captures both 2D and 3D structures, addressing the aforementioned limitations.

\section{Preliminaries}
We introduce notations that are particularly used for this chapter, providing an overview of generative diffusion models and graph representations tailored for molecular data, encompassing both 2D and 3D aspects.

\subsection{Diffusion Models}
\label{sec:diffusion.processes}
Diffusion models consist of a noising model and a denoising network. The noising model $q$ adds noise to a data point $\mathbf{X}$ to generate a sequence of noisy points $\{\mathbf{\Tilde{X}}_t\}^{T}_{t=0}$. This process follows the Markov property, $q(\mathbf{\Tilde{X}}_0,\ldots,\mathbf{\Tilde{X}}_T | \mathbf{X})=q(\mathbf{\Tilde{X}}_0|\mathbf{X})\prod_{t=1}^T q(\mathbf{\Tilde{X}}_t|\mathbf{\Tilde{X}}_{t-1})$.
The denoising network $\psi_{\theta}$ aims to reverse the noising process: given a noisy point $\mathbf{\Tilde{X}}_t$, it predicts a clean estimate $\mathbf{\hat{X}}=\psi_{\theta}(\mathbf{\Tilde{X}}_t, t)$ of $\mathbf{X}$.

\paragraph{Continuous Data}
The \textit{noising process} in diffusion models for continuous data point $\mathbf{X}$ can be represented by a multivariate normal distribution,
$
    q(\mathbf{\Tilde{X}}_t|\mathbf{X}) = \mathcal{N}(\mathbf{\Tilde{X}}_t|\alpha_t\mathbf{X}, \sigma^2_t\mathbf{I}),
$
where $\alpha_t\in \mathbb{R}^+$ controls the amount of signal retained and $\sigma_t\in \mathbb{R}^+$ represents the amount of Gaussian noise added, and $\alpha_t$ smoothly transitions from $ 1$ to $0$. Following \cite{ho2020denoising}, we choose $\alpha_t = \sqrt{1-\sigma^2_t}$ in order to obtain a variance preserving process, and the signal-to-noise ratio SNR$(t) = {\alpha^2_t}/{\sigma^2_t}$ is defined by \cite{kingma2021variational}.
For every two time steps $t, t-1$, the noising process is
$
q(\mathbf{\Tilde{X}}_t|\mathbf{\Tilde{X}}_{t-1}) = \mathcal{N}(\mathbf{\Tilde{X}}_t|\alpha_{t|t-1}\mathbf{\Tilde{X}}_{t-1}, \sigma^2_{t|t-1}\mathbf{I}),
$
where $\alpha_{t|t-1}=\alpha_t/\alpha_{t-1}$ and $\sigma^2_{t|t-1} = \sigma^2_t - \alpha_{t|t-1}^2\sigma^2_{t-1}$.

The posterior of the transitions gives the \textit{denoising process},
\begin{equation}
    q(\mathbf{\hat{X}}_{t-1}|\mathbf{\hat{X}}_{t}, \mathbf{{X}}) = \mathcal{N}(\mathbf{\hat{X}}_{t-1}|\bm{\mu}_{t\to t-1}(\mathbf{\Tilde{X}}_t, \mathbf{{X}}), \sigma^2_{t\to t-1}\mathbf{I}),
\end{equation}
where the functions are defined as $\bm{\mu}_{t\to t-1}(\mathbf{\Tilde{X}}_t, \mathbf{{X}})=\frac{\alpha_{t|t-1}\sigma^2_{t-1}}{\sigma^2_t}\mathbf{\Tilde{X}}_t + \frac{\alpha_{t-1}\sigma^2_{t|t-1}}{\sigma^2_t}\mathbf{{X}}, \sigma_{t\to t-1}=\frac{\sigma_{t|t-1}\sigma_{t-1}}{\sigma_t}$. 
In the true denoising process, the clean data $\mathbf{{X}}$ is unknown, so it is replaced by the network approximation $\mathbf{\hat{X}}$ as,
\begin{equation}
\label{eq:posterior.atom}
    p(\mathbf{\hat{X}}_{t-1}|\mathbf{\hat{X}}_{t}) = \mathcal{N}(\mathbf{\hat{X}}_{t-1}|\bm{\mu}_{t\to t-1}(\mathbf{\Tilde{X}}_t, \mathbf{\hat{X}}), \sigma^2_{t\to t-1}\mathbf{I}).
\end{equation}

\paragraph{Discrete Data} 
Discrete objects, such as graph structures, may not be well suited for Gaussian noise models as they can destroy sparsity and connectivity, as suggested by \cite{vignac2022digress}.
Following \cite{austin2021structured, vignac2022digress}, we use a series of transition matrices $\{Q_t\}_{t=0}^T$ to represent noise on one-hot encoded discrete data points, where ${Q}_{{t}_{{ij}}}=q(\mathbf{X}_t=j|\mathbf{X}_{t-1}=i)$ 
is the probability of transitioning from state $i$ to state $j$.
We obtain noisy data by multiplying the clean data point with a transition matrix, 
$
        q(\mathbf{\Tilde{X}}_t | \mathbf{\Tilde{X}}_{t-1}) = \mathbf{\Tilde{X}}_tQ_{t}, \ q(\mathbf{\Tilde{X}}_t | \mathbf{X}) = \mathbf{X}\bar{Q}_{t},
$
where $\bar{Q}_t = Q_tQ_{t-1}\ldots Q_0$.
The posterior distribution is computed using the Bayes rule,
$
    q(\mathbf{\Tilde{X}}_{t-1}| \mathbf{\Tilde{X}}_{t}, \mathbf{{X}}) \propto \mathbf{\Tilde{X}}_{t} Q_t^T \odot \mathbf{X}\bar{Q}_{t-1},
$
where $Q^T$ represents the transpose of the transition matrix $Q$, and $\odot$ denotes the Hadamard product.

\subsection{The Basics of Molecules}
\label{sec:basics}
A molecule is a group of atoms held together by chemical bonds, which can be classified into various types based on the nature of the bond. The structure of a molecule can be visualized and represented in both 2D and 3D forms, with the 2D representation showing the connectivity of the atoms and the 3D representation showing the arrangement of the atoms in space. To completely describe a molecule, we represent it as a tuple $\mathbf{M}=(\mathbf{H, E, X})$, where $\mathbf{H} \in \mathbb{R}^{n \times d}$ denotes the collection of atoms, $n$ is the number of atoms, and $d$ is the feature dimension; $\mathbf{E} \in \mathbb{R}^{n \times n \times b}$ is the 2D graph representation for chemical bonds, the bond type is represented by one-hot encoding, and $b$ is the number of bond(edge) types; $\mathbf{X} \in \mathbb{R}^{n \times 3}$ represents the 3D geometric structure and each row indicates the position of the atom in the Euclidean space. For simplicity, we assume that molecules are fully connected and include \textit{no-bond} as a special edge type.  To account for symmetry, we use symmetric edge representations, i.e., $\mathbf{E}=\mathbf{E}^T$. 

\paragraph{3D Molecule Representation}
For each pair of atoms $(i,j)$ in the 3D space, we process the Euclidean distance between them using an exponential normal radial basis function \cite{schutt2017schnet}, i.e., $f_{\text{RBF}^k}(d_{ij})=\exp{(-\beta_k(\exp{(-d_{ij})-\mu_k})^2)}$, where $k$ is the number of basis kernels, $d_{ij}$ is the distance between atoms $i$ and $j$, $\beta_k$ and $\mu_k$ are fixed parameters determining the function's center, and width, respectively. These parameters are initialized as per \cite{unke2019physnet}.

To smooth out the transition to $0$ as the distance $d_{ij}$ approaches a cutoff distance of $d_\text{cut}=5$\r{A}, we also apply a cosine cutoff function $f_{\cos}(d_{ij})$, i.e., $f_{\cos}(d_{ij})=\frac{1}{2}(\cos{(\frac{\pi d_{ij}}{d_{\text{cut}}})}+1)$ if $d_{ij} \leq d_{\text{cut}}$, and $f_{\cos}(d_{ij})=0$ if $d_{ij} > d_{\text{cut}}$. In the model, we will use both $f_{\cos}(d_{ij})$ and  $f_{\cos}(f_{\text{RBF}^k}(d_{ij}))$.

\section{MUDiff: \underline{M}olecule \underline{U}nified \underline{Diff}usion}
\label{sec:diffusion.process}
Both the continuous and discrete elements of molecules are essential in order to depict a comprehensive molecular representation, however, the existing models \cite{satorras2021n, xu2022geodiff, hoogeboom2022equivariant, vignac2022digress} have only been able to generate a portion of these components.
Our diffusion model is designed to denoise continuous and discrete aspects of a molecule separately. The continuous aspects encompass atom features and 3D coordinates, while the discrete aspects include molecular structure. This separation allows for independent handling of atoms and edges, a similar approach is shown to be successful in image diffusion models \cite{austin2021structured}, but unexplored for molecules. By jointly generating the continuous 3D geometry and discrete 2D graph representation, our model enhances the representation of atom and edge features, yielding a more comprehensive and holistic understanding of the molecule that incorporates both geometric and topological information.

\subsection{Diffusion Process}
Our diffusion model distinctly applies noises to atom and edge representations to enhance the generative process. Specifically, we apply continuous noises to atom representations, encompassing both atom features and coordinates, while introducing discrete noises to edge representations, which correspond to the graph structure. This targeted approach differentiates our diffusion process from previous molecule diffusion models, allowing for a more comprehensive generation of molecules that captures both geometric and topological information.

\paragraph{Atom Features and Coordinates}
As introduced in Sec~\ref{sec:diffusion.processes}, we add Gaussian noise to atom features and coordinates at each time step $t$, with ${\bm{\epsilon}}_{\mathbf{H}}^t\sim \mathcal{N}_{\mathbf{H}}(\mathbf{0, I}) \in \mathbb{R}^{n\times d}, {\bm{\epsilon}}_{\mathbf{X}}^t\sim \mathcal{N}_{\mathbf{X}}(\mathbf{0, I}) \in \mathbb{R}^{n\times 3}$, where $n$ is the number of atoms, $d$ denotes the feature dimension. For the 3D coordinates, we follow \cite{kohler2020equivariant} to use the linear subspace of zero center of gravity for ${\mathcal{N}_{\mathbf{X}}}$ such that $\sum_{i}\mathbf{x}_i=0$. This leads to noisy atom features and coordinates,
\begin{equation}
\label{eq:noisy.atom.features}
    \begin{aligned}
        &\mathbf{\tilde{H}}_t = \alpha_t \mathbf{H} + \sigma_t {\bm{\epsilon}}^t_{\mathbf{H}},\; \mathbf{\tilde{X}}_t = \alpha_t \mathbf{X} + \sigma_t {\bm{\epsilon}}^t_{\mathbf{X}}.
    \end{aligned}
\end{equation}
This method ensures that the perturbations applied to the 3D coordinates do not affect the center of gravity of the molecule, allowing for the denoising process to be invariant with respect to to translations.

\paragraph{Edge Features}
Following Sec~\ref{sec:diffusion.processes}, we transform the discrete clean edge type to obtain noisy ones,
\begin{equation}
    \mathbf{\tilde{E}}_t=\mathbf{E}\bar{Q}_t.
\end{equation}
where the transition matrix $\bar{Q}_t$ is obtained by $\bar{Q}_t = \alpha_t \mathbf{I} + (1-\alpha_t){{1}_b{1}^t_b}/b \in \mathbb{R}^{b\times b}$. We use uniform transitions over the number of edge types $b$ \cite{austin2021structured, vignac2022digress}, resulting in a uniform limit distribution $q_{\infty}$ over edge categories. 
\cite{vignac2022digress} suggests that the limit distribution $q_{\infty}=\lim_{T\to \infty}q(\mathbf{\Tilde{E}}_T|\mathbf{E})$
should be independent of clean data $\mathbf{E}$ for efficient diffusion models. In our diffusion model, the discrete process for noising/denoising discrete graph structures $\mathbf{E}\in \mathbb{R}^{n\times n\times b}$, we use a sequence of transition matrices $\{\bar{Q}_t\}_{t=0}^T$ to add noise to $\mathbf{E}$. In our choice, we follow \cite{austin2021structured, vignac2022digress} to use the simplest uniform transition parameterized by
\begin{equation}
\begin{aligned}
&\bar{Q}_t = \alpha_t \mathbf{I} + (1-\alpha_t)\frac{{1}_b{1}^T_b}{b} \in \mathbb{R}^{b\times b}\\ 
&\mathbf{\Tilde{E}}_{t} = \mathbf{E}\bar{Q}_t \in \mathbb{R}^{n\times n\times b}
\end{aligned}
\end{equation}
with $\alpha_t$ smoothly transition from $1\to 0$ as $t$ goes from $0\to T$. When $t$ gradually goes to $\infty$,
\begin{equation}
\begin{aligned}
\lim_{t\to \infty} \Bar{Q}_t &= \lim_{t\to \infty} \alpha_t \mathbf{I} + (1-\alpha_t)\frac{{1}_b{1}^T_b}{b} \\
&= \lim_{t\to \infty} \alpha_t \mathbf{I} + (1- \lim_{t\to \infty} \alpha_t)\frac{{1}_b{1}^T_b}{b} \\
&= \frac{{1}_b{1}^T_b}{b}.
\end{aligned}
\end{equation}
It suggests that $q(\mathbf{\Tilde{E}}_t|\mathbf{E})$ converges to a uniform distribution as $t\to T$, and the limit distribution $q_{\infty}$ is just a uniform distribution over the edge categories independently of $\mathbf{E}$.
Additionally, since molecules are always undirected graphs, we only apply noise to the upper triangular of the edge representation matrix and then symmetrize the matrix, which ensures that changes made on the edges are consistent across the graph.

\subsection{Denoising Process}
To date, no existing models can simultaneously predict the features of atoms $\mathbf{H}$, their coordinates $\mathbf{X}$, and the structures of molecules $\mathbf{E}$. To address this gap, we introduce a novel denoising network, named MUformer, which learns the denoising process to make predictions for the comprehensive representation of molecules. This model is unique in its ability to consider all aspects of the molecule in a unified manner while ensuring that the denoising process is equivariant, as suggested by \cite{xu2022geodiff}.

\paragraph{Network Estimation}
Instead of directly predicting the atom representations $\mathbf{\hat{H}, \hat{X}}$, the network attempts to predict the Gaussian noises for atom features and coordinates $\mathbf{\hat{{\bm{\epsilon}}}}_{\mathbf{H}}, \mathbf{\hat{{\bm{\epsilon}}}}_{\mathbf{X}}$, as it has been shown to be easier to optimize in \cite{ho2020denoising}. This approach allows the network to differentiate between the noise added by the noising process and the ground-truth representations, $\mathbf{{H}},\mathbf{{X}}$. The network takes as input a noisy molecule, where atom features are concatenated with the normalized time step $\frac{t}{T}$, and predicts the probability of edge features, as well as the estimates of noises for atom features and coordinates,
\begin{equation}
\label{eq:uncond}
\mathbf{\hat{{\bm{\epsilon}}}}^t_{\mathbf{H}}, \mathbf{\hat{{\bm{\epsilon}}}}^t_{\mathbf{X}}, p(\mathbf{\hat{E}})=
\psi_{\theta}([\mathbf{\tilde{H}}_t, \frac{t}{T}], \mathbf{\tilde{X}}_t, \mathbf{\tilde{E}}_t) - (\mathbf{0}, \mathbf{\tilde{X}}_t, \mathbf{0}),
\end{equation}
where the input coordinates are then subtracted from the estimated noise for coordinates to ensure that the outputs lie on the zero center of gravity subspace, as suggested by \cite{hoogeboom2022equivariant}. We subsequently obtain estimates of atom features and coordinates by
\begin{equation}
\label{eq.network.estimate.h.x}
\begin{aligned}
    & \mathbf{\hat{H}} = \frac{1}{\alpha_t} \mathbf{\Tilde{H}}_t - \frac{\sigma_t}{\alpha_t}\mathbf{\hat{{\bm{\epsilon}}}}^t_{\mathbf{H}} ,\; \mathbf{\hat{X}} = \frac{1}{\alpha_t} \mathbf{\Tilde{X}}_t - \frac{\sigma_t}{\alpha_t}\mathbf{\hat{{\bm{\epsilon}}}}^t_{\mathbf{X}}.
\end{aligned}
\end{equation}

\begin{algorithm}[H]
    \caption{Training MUDiff}
    \label{algo:training}
    \footnotesize
    \begin{algorithmic}[1]
        \STATE \textbf{Input:} A complete molecule $\mathbf{M=(H,E,X)}$
        \STATE Sample $t\sim \mathcal{U}(1,\cdots,T)$ 
        \STATE Sample ${\bm{\epsilon}}_\mathbf{H}, {\bm{\epsilon}}_\mathbf{X} \sim \mathcal{N}(\mathbf{0, I})$ 
        \STATE Subtract center of gravity from ${\bm{\epsilon}}_\mathbf{X}$ 
        \STATE Compute $\mathbf{\tilde{H}}_t = \alpha_t \mathbf{H} + \sigma_t {\bm{\epsilon}}^t_{\mathbf{H}}, \ \mathbf{\tilde{X}}_t = \alpha_t \mathbf{X} + \sigma_t {\bm{\epsilon}}^t_{\mathbf{X}}$
        \STATE Sample $\mathbf{\Tilde{E}}_t \sim \mathbf{E}\Tilde{Q}_t$ 
        \STATE Compute $\mathbf{\hat{{\bm{\epsilon}}}}^t_{\mathbf{H}}, \mathbf{\hat{{\bm{\epsilon}}}}^t_{\mathbf{X}}, p(\mathbf{\hat{E}}) =
        \psi_{\theta}([\mathbf{\tilde{H}}_t, \frac{t}{T}], \mathbf{\tilde{X}}_t, \mathbf{\tilde{E}}_t) - (\mathbf{0}, \mathbf{\tilde{X}}_t, \mathbf{0})$ 
        \STATE Minimize $\|\mathbf{{{\bm{\epsilon}}}}^t_{\mathbf{H}}-\mathbf{\hat{{\bm{\epsilon}}}}^t_{\mathbf{H}}\|^2 + \|\mathbf{{{\bm{\epsilon}}}}^t_{\mathbf{X}}-\mathbf{\hat{{\bm{\epsilon}}}}^t_{\mathbf{X}}\|^2+\text{CE}(\mathbf{{E}}, p(\mathbf{\hat{E}}))$
    \end{algorithmic}
\end{algorithm}

\paragraph{Training Objective}
For atom features and coordinates, the objective is to accurately predict the true noise present in the observations of atom features and coordinates.
To achieve this, we follow the approach outlined in \cite{hoogeboom2022equivariant} and minimize the distance between the true noise and the estimates of noise predicted by the network $\psi_\theta$. The objectives for atoms are defined as,
\begin{equation}
\begin{aligned}
\label{eq:training.atom.loss}
    \mathcal{L}^{\mathbf{H}}_{{t}} &= \frac{1}{2} \mathbb{E}_{{\bm{\epsilon}}_{\mathbf{H}}^t \sim N_{\mathbf{H}}(\mathbf{0,I})}\left[ \omega(t) \| {\bm{\epsilon}}_{\mathbf{H}}^t-\mathbf{\hat{{\bm{\epsilon}}}}^t_{\mathbf{H}} \|^2\right], \
    \mathcal{L}^{\mathbf{X}}_{{t}} &= \frac{1}{2} \mathbb{E}_{{\bm{\epsilon}}_{\mathbf{X}}^t \sim N_{\mathbf{X}}(\mathbf{0,I})}\left[\omega(t) \| {\bm{\epsilon}}_{\mathbf{X}}^t-\mathbf{\hat{{\bm{\epsilon}}}}^t_{\mathbf{X}} \|^2\right],
\end{aligned}
\end{equation}
where $\omega(t) = (1 - \text{SNR}(t-1)/\text{SNR}(t))$. To stabilize the training process, we set $\omega(t)=1$ during the training phase, as suggested by \cite{ho2020denoising, hoogeboom2022equivariant}.

To handle edge features, we approach it as a classification problem and minimize the cross-entropy loss for each atom pair $(i,j)\in \mathbf{E}$. The loss is calculated between the actual edge type and the predicted edge probability distribution,
\begin{equation}
\label{eq:training.edge.loss}
    \mathcal{L}_t^{\mathbf{E}} = \mathbb{E}_{(i,j) \sim \mathbf{E}}\left[\mathbf{E}_{ij} \log(p(\mathbf{\hat{E}}_{ij}))\right].
\end{equation}
At every time step $t$, the total loss is computed as the sum of the three losses,
$\mathcal{L}_t =  \mathcal{L}^{\mathbf{H}}_{{t}} +  \mathcal{L}^{\mathbf{E}}_{{t}} + \mathcal{L}^{\mathbf{X}}_{{t}}$.
The entire training process is described in Algorithm~\ref{algo:training}.

\begin{algorithm}[H]
    \centering
    \caption{Sampling from MUDiff}
    \label{algo:sampling}
    \footnotesize
    \begin{algorithmic}[1]
        \STATE Sample $\mathbf{\Tilde{M}}_T$: $\mathbf{\Tilde{H}}_T, \mathbf{\Tilde{X}}_T \sim \mathcal{N}(\mathbf{0, I}), \ \mathbf{\Tilde{E}}_T \sim q_{\infty}$
        \FOR{$t=T,T-1,\ldots,1$}
        \STATE Compute $\mathbf{\hat{{\bm{\epsilon}}}}^t_{\mathbf{H}}, \mathbf{\hat{{\bm{\epsilon}}}}^t_{\mathbf{X}}, \mathbf{\hat{E}} =
        \psi_{\theta}([\mathbf{\tilde{H}}_t, \frac{t}{T}], \mathbf{\tilde{X}}_t, \mathbf{\tilde{E}}_t) - (\mathbf{0}, \mathbf{\tilde{X}}_t, \mathbf{0})$ 
        \STATE Sample $\mathbf{\Tilde{E}}_{t-1} \sim p(\mathbf{\Tilde{E}}_{t-1}|\mathbf{\Tilde{E}}_t)$
        \STATE Sample ${\bm{\epsilon}}_{\mathbf{H}}, {\bm{\epsilon}}_{\mathbf{X}}\sim \mathcal{N}(\mathbf{0, I})$
        \STATE Compute $\mathbf{\Tilde{H}}_{t-1}=\frac{\mathbf{\Tilde{H}}_{t}}{\alpha_{t|t-1}}-\frac{\sigma^2_{t|t-1} \mathbf{\hat{{\bm{\epsilon}}}}^t_{\mathbf{H}}}{\alpha_{t|t-1} \sigma_t} + \sigma_{t\to t-1}\bm{\epsilon}_{\mathbf{H}}$
        \STATE Subtract center of gravity from ${\bm{\epsilon}}_\mathbf{X}$ 
        \STATE Compute $\mathbf{\Tilde{X}}_{t-1}=\frac{\mathbf{\Tilde{X}}_{t}}{\alpha_{t|t-1}}-\frac{\sigma^2_{t|t-1} \mathbf{\hat{{\bm{\epsilon}}}}^t_{\mathbf{X}}}{\alpha_{t|t-1} \sigma_t} + \sigma_{t\to t-1}\bm{\epsilon}_{\mathbf{X}}$
        \ENDFOR
        \STATE Sample $\mathbf{M}\sim p(\mathbf{M}|\mathbf{\Tilde{M}}_0)$
    \end{algorithmic}
\end{algorithm}

\paragraph{Sampling}
Once the model is trained, it can be used to sample new molecules. 
The true sampling process $p(\mathbf{\Tilde{M}}_{t-1}|\mathbf{\Tilde{M}}_{t})$ uses the approximation of a complete molecule $\mathbf{\hat{M} = (\hat{H}, \hat{E}, \hat{X})}$.
The complete molecule is sampled by taking the product of the posterior distributions of atom features, coordinates, and edge features as %\sitao{with Independence assumption?},
\begin{equation}
    p(\mathbf{\Tilde{M}}_{t-1}|\mathbf{\Tilde{M}}_{t}) = p(\mathbf{\Tilde{H}}_{t-1} | \mathbf{\Tilde{H}}_{t})
    p(\mathbf{\Tilde{E}}_{t-1} | \mathbf{\Tilde{E}}_{t})
    p(\mathbf{\Tilde{X}}_{t-1} | \mathbf{\Tilde{X}}_{t}),
\end{equation}
with the posterior distributions of atom features and coordinates from Eq~\ref{eq:posterior.atom} and the posterior distribution of edges defined in derived in Eq.~\ref{eq:edge.posterior}. 
The sampling process is described in Algorithm~\ref{algo:sampling}.

\paragraph{Posterior Distribution of Edge Features $p(\mathbf{\Tilde{E}}_{t-1}|\mathbf{\Tilde{E}}_t)$}
\label{app:posterior.edge}
For simplicity, we use $\mathbf{M}_t=(\mathbf{H_t,E_t,X_t})$ to denote the noisy molecule $\mathbf{\Tilde{M}}_t=(\mathbf{\Tilde{H}}_t,\mathbf{\Tilde{E}}_t,\mathbf{\Tilde{X}}_t)$ at time $t$.
The posterior distribution of a molecule is calculated by,
\begin{equation}
\begin{aligned}
    p(\mathbf{{M}}_{t-1}|\mathbf{{M}}_{t}) & = p(\mathbf{{H}}_{t-1}, \mathbf{{E}}_{t-1}, \mathbf{{X}}_{t-1} | \mathbf{{H}}_{t}, \mathbf{{E}}_{t}, \mathbf{{X}}_{t}) \quad\quad \textit{$\mathbf{H, E, X}$ are independent} \\ 
    & = p(\mathbf{{H}}_{t-1} | \mathbf{{H}}_{t})
    p(\mathbf{{E}}_{t-1} | \mathbf{{E}}_{t})
    p(\mathbf{{X}}_{t-1} | \mathbf{{X}}_{t}).
\end{aligned}
\end{equation}
Posterior distributions of atom features and coordinates are simple to compute as they are derived from normal distributions for continuous data (see Eq~\ref{eq:posterior.atom}). Here, we compute the posterior distribution for edge features,
\begin{equation}
\label{eq:edge.posterior}
\begin{aligned}
    p(\mathbf{{E}}_{t-1} | \mathbf{{E}}_{t}) & = \prod_{(i,j)\in \mathbf{E}} p(\mathbf{{e}}_{{t-1}_{ij}} | \mathbf{{e}}_{{t}_{ij}}) \\
    p(\mathbf{{e}}_{{t-1}_{ij}} | \mathbf{{e}}_{{t}_{ij}}) % & = \int_{(i,j)\in \mathbf{E}} p(\mathbf{\Tilde{e}}_{{t-1}_{ij}} | \mathbf{\Tilde{e}}_{{t}_{ij}}, \mathbf{\hat{e}}_{ij} ) p(\mathbf{\hat{e}}_{ij} | \mathbf{\Tilde{e}}_{{t}_{ij}}) \ \text{d} \mathbf{\hat{e}}_{ij}  \\
    & = \sum_{\mathbf{\hat{e}}_{ij} \in \mathbf{\hat{E}}} p(\mathbf{{e}}_{{t-1}_{ij}} | \mathbf{{e}}_{{t}_{ij}}, \mathbf{\hat{e}}_{ij}) p(\mathbf{\hat{e}}_{ij} | \mathbf{{e}}_{{t}_{ij}}) \\
    & = \sum_{\mathbf{\hat{e}}_{ij} \in \mathbf{\hat{E}}} p(\mathbf{{e}}_{{t-1}_{ij}} | \mathbf{{e}}_{{t}_{ij}}, \mathbf{\hat{e}}_{ij}) p(\mathbf{\hat{e}}_{ij}),
\end{aligned}
\end{equation}
where we choose
\[
    p(\mathbf{{e}}_{{t-1}_{ij}}| \mathbf{{e}}_{{t}_{ij}}, \mathbf{\hat{e}}_{ij}) = 
\begin{dcases}
     q(\mathbf{{e}}_{{t-1}_{ij}} | \mathbf{{e}}_{{t}_{ij}}, \mathbf{\hat{e}}_{ij}),  & \text{if } q(\mathbf{{e}}_{{t}_{ij}} | \mathbf{\hat{e}}_{ij}) > 0\\
    0,              & \text{otherwise.}
\end{dcases}
\]
The posterior distribution for discrete objects is given in Sec~\ref{sec:diffusion.processes}, as $q(\mathbf{{E}}_{t-1}| \mathbf{{E}}_{t}, \mathbf{{E}})$, but since the clean edge features $\mathbf{E}$ are unknown during sampling, we substitute it with the network approximation $\mathbf{\hat{E}}$, resulting in the posterior distribution $q(\mathbf{{E}}_{t-1}| \mathbf{{E}}_{t}, \mathbf{\hat{E}})$.

\section{MUformer: \underline{M}olecule \underline{U}nified Trans\underline{former}}
\label{sec:graph.transformer}
\begin{figure}[ht!]
\centering
{
\includegraphics[width=1.\textwidth]{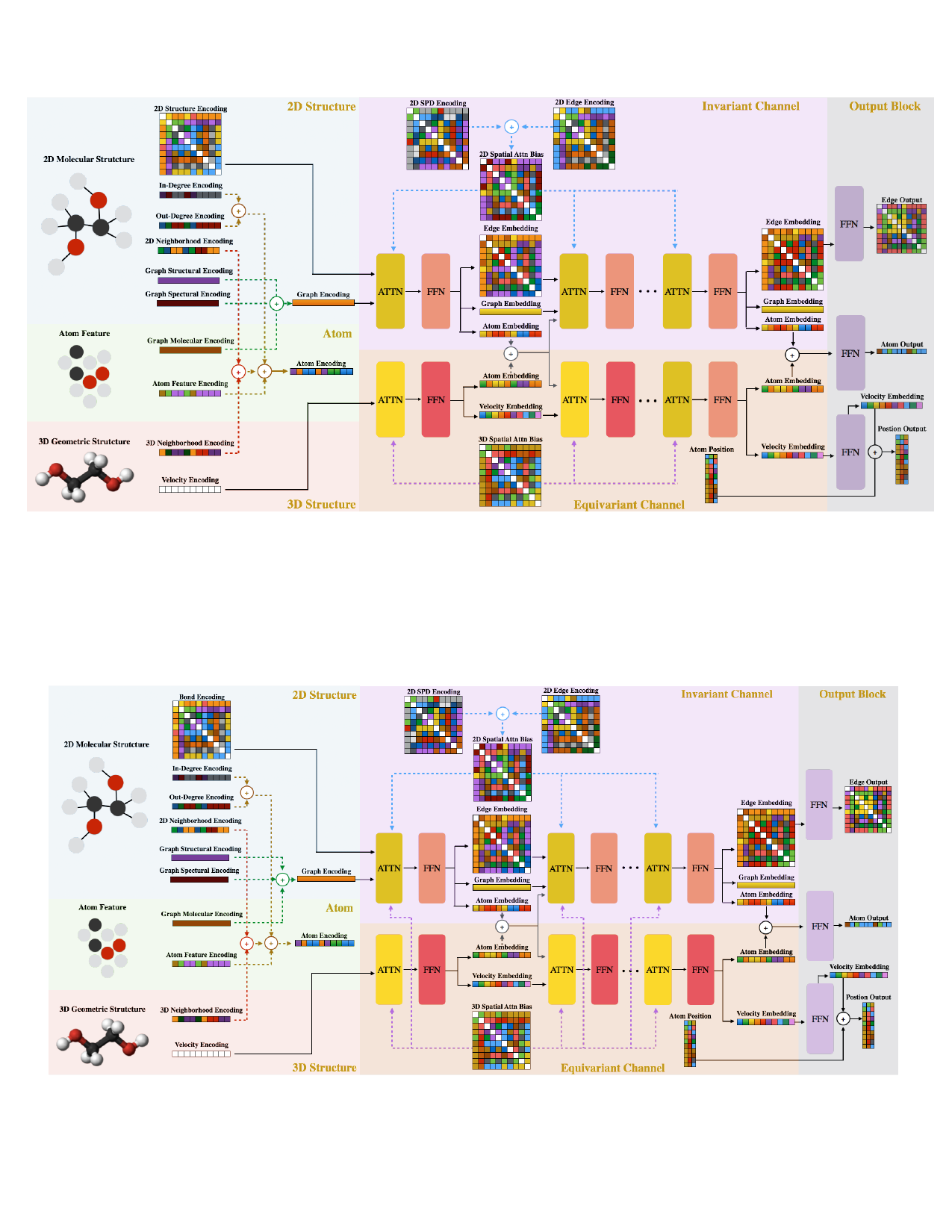}}
{%
  \caption{The figure showcases our MUformer for processing 2D and 3D molecular data. Within the Transformer backbone, two channels exist: purple for 2D data and brown for 3D data. The blue part encodes 2D molecular structures, while the green part handles atom-level information and the red part processes 3D geometric structures. With missing 2D or 3D structures, the model activates either the invariant (purple) or equivariant (brown) channel. The invariant channel predicts atom and edge features, while the equivariant channel offers geometric transformation robustness and predicts atom features and positions. When both channels are operational, the model maintains robustness to geometric transformations and predicts a complete molecule, and final atom features are derived by merging outputs from both channels and feeding the combined data through an output network.}%
  \label{fig:mudiff}
}
\end{figure}
To learn the complete molecular representation, in this section, we propose a novel equivariant graph transformer MUformer, denoted by $\psi_{\theta}$ (visualized in Fig~\ref{fig:mudiff}), which contains 6 encoding functions (Sec~\ref{sec:muformer.encoding}), 4 attention biases (Sec~\ref{sec:muformer.attention.bias}), and 2 attention channels (Sec~\ref{sec:transformer_channels}). It takes as input a complete molecule $\mathbf{M} = (\mathbf{H, E, X})$ with $\mathbf{H} \in \mathbb{R}^{n \times d}, \mathbf{E} \in \mathbb{R}^{n \times n \times b}, \mathbf{X} \in \mathbb{R}^{n \times 3}$, and outputs the predicted molecule $\mathbf{\hat{M}} = (\mathbf{\hat{H}, \hat{E}, \hat{X}})$. For clarity, we refer to the 2D molecular structure as $\mathbf{M}^{\text{2D}}=(\mathbf{H}, \mathbf{E})$, and the 3D geometric structure as $\mathbf{M}^{\text{3D}}=(\mathbf{H}, \mathbf{X})$. In the following subsections, we will introduce each component of our MUformer in details.

Our MUformer architecture can be used under different input conditions. When only 2D molecular information is available, only the invariant channel is activated, and the model makes predictions for atom and edge features only. When only 3D molecular information is available, only the equivariant channel is activated, and the model makes predictions for atom features and coordinates only. When both 2D and 3D molecular data are provided, the invariant and equivariant channels are activated, and the model can make predictions for the complete molecule, including atom features, molecular structure, and geometric structure.

\subsection{Encodings}
\label{sec:muformer.encoding}

The MUformer consists of 6 encoding functions, with 3 being message-passing based, designed to incorporate atomic, positional, and structural information into a concise and expressive representation, particularly suited for handling graph-structured inputs.

\noindent \textbf{1. Atom Encoding} \quad
Incorporating centrality information into the atom representations is crucial as it helps to highlight the importance of individual atoms in the molecular structure. The authors in \cite{ying2021transformers} propose a method of utilizing in-degree $\text{deg}^-$ and out-degree $\text{deg}^+$ obtained from 2D molecular graphs $\mathbf{M}^{\text{2D}}$ to incorporate centrality information into the atom-wise encoding process. This allows for a detailed and accurate representation of the molecular structure, taking into account the relative importance of each atom. After the centrality encoding, the new representation $\mathbf{z}^{\text{1D}}_{\mathbf{h}_i}$ for node $i$ is,
\begin{equation}
    \mathbf{z}^{\text{1D}}_{\mathbf{h}_i} = W_{\text{atom}_1}{\mathbf{h}_i} + W_{\text{in-deg}_1}\text{deg}_i^- + W_{\text{out-deg}_1}\text{deg}_i^+,
\end{equation}
where $W_{\text{atom}_1} \in \mathbb{R}^{d \times fh_1}, W_{\text{in-deg}_1}  \in \mathbb{R}^{1 \times fh_1}, W_{\text{out-deg}_1}  \in \mathbb{R}^{1 \times fh_1}$ are designated learnable parameters for the atom features, in-degree $\text{deg}^-$, and out-degree $\text{deg}^+$. The resulting atom embedding for atom $i$, $\mathbf{z}^{\text{1D}}_{\mathbf{h}_i} \in \mathbb{R}^{fh_1}$, includes the degree centrality information.

\noindent \textbf{2. Bond Encoding} \quad
To obtain a richer edge representation, we incorporate the pair-wise atom information into the edge encoding with message-passing information. For each edge $\mathbf{e}_{ij} \in \mathbf{E}$, we use a \textit{permutation-invariant} function to generate the embedded edge representations, ensuring consistency in the learned representation regardless of the order of the atoms,
\begin{equation}
    \mathbf{z}_{\mathbf{e}_{ij}} = W_{\text{comb}_1}([W_{\text{atom}_2}\mathbf{h}_{i} + W_{\text{edge}_1}\mathbf{e}_{ij} + W_{\text{atom}_2}\mathbf{h}_{j}]) + b_{\text{comb}_1}.
\end{equation}
where $W_{\text{atom}_2} \in \mathbb{R}^{d \times fe_{\text{in}}}$ and $W_{\text{edge}_1} \in \mathbb{R}^{b \times fe_{\text{in}}}$ are learnable parameters that handle the atom features and edge types respectively, $W_{\text{comb}_1} \in \mathbb{R}^{fe_{\text{in}} \times fe_{\text{in}}}$ combines the representations with bias $b_{\text{comb}_1}$. In addition, in order to make the edge representation symmetric, we calculate the edge representation as $\mathbf{z}_{\mathbf{e}_{ij}} = {(\mathbf{z}_{\mathbf{e}_{ij}} + \mathbf{z}_{\mathbf{e}_{ji}})}/{2}$. The resulting edge embedding is $\mathbf{Z_E}\in \mathbb{R}^{n\times n \times fe_{\text{in}}}$.

\noindent \textbf{3. Graph Encoding} \quad
Standard GNNs have limitations in detecting simple substructures such as cycles \cite{chen2020can}, which can hinder their ability to accurately capture the properties of the data distribution. To overcome this limitation, we enhance our model by extra features as follows.

In the graph encoding process, we encode graph-level structural $\mathbf{h}_{\text{struct}}$, spectral $\mathbf{h}_{\text{spect}}$ and molecular information $\mathbf{h}_{\text{mol}}$ to a molecule ${\mathbf{M}}$. As suggested in \cite{beaini2021directional, vignac2022digress}, we add cycle counts for $\mathbf{h}_{\text{struct}}$, the number of cycles of up to size 6 and the number of connected components; and we add eigenvalue features to $\mathbf{h}_{\text{spect}}$, including the multiplicity of eigenvalue 0, as well as the first 5 nonzero eigenvalues. For $\mathbf{h}_{\text{mol}}$, we include the current valency of each atom and the current molecular weight of the entire molecule as features. For every molecule $\textbf{M}$, the graph-level representation is given by,
\begin{equation}
    \mathbf{Z_M} = W_{\text{graph}}([\mathbf{h}_{\text{struct}}, \mathbf{h}_{\text{spect}}, \mathbf{h}_{\text{mol}}]) + b_{\text{graph}},
\end{equation}
where $W_{\text{graph}} \in \mathbb{R}^{13 \times f_{\text{in}}}$ combines the encoded information with bias $b_{\text{graph}}$. The resulting graph representation, $\mathbf{Z_M} \in \mathbb{R}^{1\times f_{\text{in}}}$, encapsulates all the information from the structural, spectral and molecular features.

\noindent \textbf{4. 2D Neighborhood Encoding} \quad
To get local neighborhood information, we use message passing to aggregate information from the immediate neighbors of each atom in the 2D graph $\mathbf{M}^{\text{2D}}$. Specifically, for $\mathbf{h}_i \in \mathbf{H}$ and $\mathbf{e}_{ij} \in E$, the aggregated representation of atom $i$ is calculated by
\begin{equation}
\begin{aligned}
    &\mathbf{m}_{j\to i} = \left( W_{\text{atom}_3}\mathbf{h}_j \right) \odot \left( W_{\text{edge}_2}\mathbf{e}_{ij} \right)  , \ \mathbf{z}^{\text{2D}}_{\mathbf{h}_i} = W_{\text{atom}_3}\mathbf{h}_i + \frac{1}{|N(i)|}\sum_{j\in N(i)} \mathbf{m}_{j\to i} ,
\end{aligned}
\end{equation}
where $N(i)$ denotes the neighbors of atom $i$, $W_{\text{atom}_3} \in \mathbb{R}^{d \times fh_2} $ and $W_{\text{edge}_2} \in \mathbb{R}^{b \times fh_2}$ are learnable parameters for atom features and edge types, respectively. To formulate the messages $\mathbf{m}$, we use the Hadamard product, $\odot$, between the atom embedding and edge embedding. This new representation $\mathbf{z}^{\text{2D}}_{\mathbf{h}_i}\in \mathbb{R}^{fh_2}$ takes into account not only the atom features, but also the features of its neighboring atoms and the edges connecting them.

\noindent \textbf{5. 3D Neighborhood Encoding} \quad
We use message passing to collect the atom information in the vicinity of the given atom to get the 3D neighborhood encoding $\mathbf{M}^{\text{3D}}$ as suggested by \cite{satorras2021n}. For atom $i$, its representation $\mathbf{z}^{\text{3D}}_{\mathbf{h}_i}$ is computed by following steps,
\begin{equation}
\begin{aligned}
\label{app.eq:3d.neigh.encoding}
    & {d}_{ij} = \|{x}_i - {x}_j\|_2, \ \mathbf{e}_{ij} = \text{SiLU} \left(W_{\text{edge}_3}\left(f_{\text{RBF}_1}\left( d_{ij} \right)\right) \right) \odot f_{\cos}(d_{ij}) \\
    &\mathbf{m}_{j\to i}  = (W_{\text{atom}_4}\mathbf{h}_j) \odot \mathbf{e}_{ij} , \ \mathbf{z}^{\text{3D}}_{\mathbf{h}_i} = W_{\text{atom}_4}\mathbf{h}_i + \frac{1}{|N(i)|}\sum_{j\in N(i)} \mathbf{m}_{j\to i}.
\end{aligned}
\end{equation}
where $d_{ij}$ is the distance from atom $i$ to $j$ in the Euclidean space, $f_{\text{RBF}_1}(\cdot)$ is the exponential radial basis function, $f_{\cos}(\cdot)$ is the cosine cutoff, SiLU($\cdot$) is the activation function, $\odot$ denotes the Hadamard product, $W_{\text{atom}_4} \in \mathbb{R}^{d\times fh_3}$ and $W_{\text{edge}_3} \in \mathbb{R}^{k \times fh_3 }$ are designated learnable parameters for atom features and edge features, $k$ is the number of basis kernels as mentioned in Sec~\ref{sec:basics}. To calculate messages $\mathbf{m}$, we use the Hadamard product, $\odot$, between the atom and edge embeddings.

We use the cosine cutoff $f_{\cos}(d_{ij})$ to determine which atoms in the Euclidean space given by $\mathbf{M}^{\text{3D}}$ should be considered as part of the neighborhood of atom $i$. This provides a smooth and differentiable way to incorporate spatial locality in the message-passing process, focusing on atoms that are closer in the 3D space while ignoring distant ones. This allows the model to better capture local geometric information and reduce computational complexity by not considering interactions between atoms that are too far apart, which would be less relevant for the molecular properties under investigation. Only atoms $j$ for which $f_{\cos}(d_{ij}) > 0$ are included in the message passing aggregation process, resulting in a new node representation $\mathbf{z}^{\text{3D}}_{\mathbf{h}_i} \in \mathbb{R}^{fh_3}$ for atom $i$. 

\noindent \textbf{Remark} \quad We can integrate 2D graph information by assigning $f_{\cos}(d_{ij})=1$ when an edge is present between atoms $i$ and $j$ in the 2D molecular structure $\mathbf{M}^{\text{2D}}$. This ensures that messages between these atoms are not influenced by the smooth transition. By doing so, we effectively incorporate locality information from both the 2D molecular structure and the 3D geometric structure, providing a more comprehensive representation of the molecule.

\noindent \textbf{6. Combine Encoding} \quad
In the final step of the encoding process, we compute the atomic embedding $\mathbf{Z_H}$ by concatenating the centrality embedding, 2D neighborhood embedding, and 3D neighborhood embedding. This concatenated representation is then passed through a learnable parameter, $W_{\text{comb}2} \in \mathbb{R}^{(fh_1 + fh_2 + fh_3) \times f{\text{in}}}$. The final equation for this calculation is given by
\begin{equation}
    \mathbf{Z_H} = W_{\text{comb}_2}([\mathbf{Z^{\text{1D}}_H}, \mathbf{Z^{\text{2D}}_H}, \mathbf{Z^{\text{3D}}_H}]) + b_{\text{comb}_2}.
\end{equation}
This combination of different embeddings, $\mathbf{Z_H}\in\mathbb{R}^{n\times f_{\text{in}}}$, enables the incorporation of various molecular structure features, including centrality, 2D, and 3D neighborhood information, into a unified, comprehensive representation.

Additionally, the bond encoding $\mathbf{Z_E}$ and graph encoding $\mathbf{Z_M}$ are utilized to calculate attentions, with details discussed in Sec~\ref{sec:transformer_channels}.

\subsection{Attention Biases}
\label{sec:muformer.attention.bias}
The MUformer incorporates 4 attention biases, which serve to encode spatial relationships in both 2D molecular structure and 3D geometric arrangement. These biases are integrated into the attention mechanism, enhancing the model's ability to process and understand molecular representations. A detailed discussion of the importance and advantages of employing these attention biases for computing attentions in the MUformer can be found in Sec~\ref{sec:discussion.attention.bias}.

\noindent \textbf{1. 2D Spatial Attention Bias} \quad
To encode the structural relationships between atoms in the molecule graph $\mathbf{M}^{\text{2D}}$, we usethe shortest path distance (SPD) encoding  \cite{ying2021transformers}, denoted as $\Phi^{\text{2D}}_{\text{SPD}}(i,j): V\times V \to \mathbb{R}$, which calculates the distance between atoms $i$ and $j$ in $\mathbf{M}^{\text{2D}}$, providing valuable information about the structural relationships between atoms in the 2D molecular graph.

Additionally, following the approach of \cite{ying2021transformers}, we incorporate edge-type information along the shortest path between atoms $i$ and $j$ to reflect edge characteristics. This inclusion of edge-type information further enriches the model's understanding of the 2D molecular structure. To achieve this, we determine the shortest path $SP_{ij} = (\mathbf{e}_1, \mathbf{e}_2, ..., \mathbf{e}_N)$, where $N$ is the longest shortest path distance for all pairs of atoms $i$ and $j$. The edge encoding is computed using the following equation,
\begin{equation}
\Phi^{\text{2D}}_{\text{ENC}_{ij}} = \frac{1}{N}\sum_{n=1}^N\mathbf{e}_n(w_n)^T \in \mathbb{R} 
\end{equation}
where $w_n \in \mathbb{R}^{b \times 1}$ is a learnable vector with the same dimension as the edge feature. Both 2D spatial biases, $\Phi^{\text{2D}}_{\text{SPD}}$ and $\Phi^{\text{2D}}_{\text{ENC}}$, are in $\mathbb{R}^{n\times n\times 1}$, and the combined bias is calculated as $\Phi^{\text{2D}}_\mathbf{E} = \Phi^{\text{2D}}_{\text{SPD}} + \Phi^{\text{2D}}_{\text{ENC}}$. This 2D spatial attention bias enables the model to better capture the intricate relationships between atoms and their surroundings in the 2D molecular graph, ultimately improving its performance.

\noindent \textbf{2. 3D Spatial Attention Bias} \quad
The 3D spatial relationships between atom pairs in $\mathbf{M}^{\text{3D}}$ can be effectively encoded using the Euclidean distance and an exponential radial basis function, $f_{\text{RBF}2}(\cdot)$ \cite{schutt2017schnet}. By incorporating this 3D spatial attention bias, the model is able to account for the geometric arrangement of atoms in the molecule, which is essential for understanding the 3D structure and its impact on molecular properties. The 3D spatial bias, $\Phi^{\text{3D}}_{\mathbf{E}}$, is calculated using the following equation
\begin{equation}
\begin{aligned}
    & {d}_{ij} = \|{x}_i - {x}_j\|_2 , \ \Phi^{\text{3D}}_{\mathbf{e}_{ij}} = W_{\text{3D}_2}\left(\text{SiLU}(W_{\text{3D}_1}\left(f_{\text{RBF}_2}\left( d_{ij} \right)\right))\right),
\end{aligned}
\end{equation}
where $W_{\text{3D}_1}\in \mathbb{R}^{k \times k}, W_{\text{3D}_2}\in \mathbb{R}^{k \times 1}$ are learnable parameters, $k$ is the number of basis kernels, and the resulting 3D spatial bias, $\Phi^{\text{3D}}_{\mathbf{E}}$, is in $\mathbb{R}^{n\times n \times 1}$. By including this 3D spatial attention bias in the model, the MUformer is able to better capture the complex 3D spatial relationships between atoms, leading to improved performance in tasks involving 3D molecular structures.

\noindent \textbf{Edge Feature \& Graph Feature} \quad
The embeddings obtained from the 2D structure, $\mathbf{Z_E}$, and the molecular graph information, $\mathbf{Z_M}$, can be further projected and employed as additional attention biases, enhancing the model's understanding of the molecular structure and relationships. This allows the MUformer to better capture the complexities of the molecular system and improve its performance in various tasks. A detailed explanation of this process can be found in the subsequent section.

\subsection{Transformer Channels}
\label{sec:transformer_channels}
Our MUformer architecture draws inspiration from Transformer-M \cite{luo2022one}, which employs two separate channels to process 2D and 3D molecular data, respectively. However, our model takes a different approach to processing the invariant and equivariant features of molecular data. While Transformer-M is limited to predicting atom features only, and is invariant to geometric transformations, our model can predict atom features, molecule structures, and atom positions, and is equivariant to geometric transformations. Our model can be used under different conditions of the input data. When the input data only contains 2D molecular information $\mathbf{M}^{\text{2D}}=(\mathbf{H}, \mathbf{E})$ and the geometric structure is missing, only the invariant channel is applied, and the model predicts invariant features including atom features $\mathbf{H}_{\text{out}}$ and molecular structure $\mathbf{E}_{\text{out}}$. Similarly, when the input data only contains 3D molecular information $\mathbf{M}^{\text{3D}}=(\mathbf{H}, \mathbf{X})$ and the molecular structure is missing, only the equivariant channel is used and the model becomes insensitive to geometric information, predicting atom features $\mathbf{H}_{\text{out}}$ and coordinates $\mathbf{X}_{\text{out}}$. Finally, when both 2D and 3D molecular data are provided as input, both the invariant and equivariant channels are activated. The model is equivariant to geometric transformations, predicting the complete molecule including atom features $\mathbf{H}_{\text{out}}$, molecular structure $\mathbf{E}_{\text{out}}$, and geometric structure $\mathbf{X}_{\text{out}}$.

The MUformer architecture utilizes two channels, the \textit{invariant channel} and the \textit{equivariant channel}, to learn the 2D molecular structure and 3D geometric structure, respectively. %When both 2D and 3D structures, $\mathbf{M}^{\text{2D}},\mathbf{M}^{\text{3D}}$, are available, the model can learn the invariant 2D structures and the equivariant 3D coordinates. 
We simplify the notation by omitting the indices of the attention head $h$ and layer $l$.

\paragraph{Invariant Channel}
The invariant channel is an improved version of the one presented in \cite{ying2021transformers}, which is specifically designed to extract the inherent characteristics of the input molecule graph $\mathbf{M}^{\text{2D}}$, and is utilized to make predictions for atom and edge features by leveraging the underlying graph structure.

We enhance the MUformer's attention mechanism by incorporating pair-wise information in the invariant channel. % To do this, we concatenate the node and graph representations as $\mathbf{Z_F} = [\mathbf{Z_M}, \mathbf{Z_H}] \in \mathbb{R}^{n+1\times f_{\text{in}}}$.
First, we calculate intermediate representations for the edge and graph features
\begin{equation}
    \begin{aligned}
        &\mathbf{Z_{E_1}} = W_{\mathbf{E_1}}\mathbf{Z_E}, \ \mathbf{Z_{E_2}} = W_{\mathbf{E_2}}\mathbf{Z_E} \\
        &\mathbf{Z_{M_1}} = W_{\mathbf{M_1}}\mathbf{Z_M}, \ \mathbf{Z_{M_2}} = W_{\mathbf{M_2}}\mathbf{Z_M}, \ %\mathbf{Z}_{\mathbf{M}_{\text{bias}}} = W_{\text{bias}}\mathbf{Z_M}
    \end{aligned}
\end{equation}
using weight matrices $W_{\mathbf{E_1}}, W_{\mathbf{E_2}}, W_{\mathbf{M_1}}, W_{\mathbf{M_2}}$. We then compute the attention weights by taking the dot product of the query and key, and modify them by multiplying and adding the intermediate representations,
\begin{equation}
    \begin{aligned}
       &\mathbf{A} = \frac{(W_Q\mathbf{Z_H})^T(W_K\mathbf{Z_H})}{\sqrt{F}} \in \mathbb{R}^{n\times n\times F} \\
        &\mathbf{A} = \mathbf{A} \times (\mathbf{Z_{E_1}+1)} + \mathbf{Z_{E_2}}.
    \end{aligned}
\end{equation}
And the predicted edge and graph representations, $\mathbf{\hat{Z}_E}, \mathbf{\hat{Z}_M}$, are computed from the attention weights as
\begin{equation}
    \begin{aligned}
       &\mathbf{\hat{Z}_E} = W_{\mathbf{E}_{\text{out}}}((\mathbf{A} \times (\mathbf{Z_{M_1}+1)} + \mathbf{Z_{M_2}}) \\
        %&\mathbf{A[:1,:1]} = \mathbf{A[:1,:1]} + \mathbf{Z}_{\mathbf{M}_{\text{bias}}} \\
        &\mathbf{\hat{Z}_M} = W_{\mathbf{M}_{\text{out}}}(f_{\text{Node2Graph}}(\mathbf{Z_H})+ f_{\text{Edge2Graph}}(\mathbf{Z_E})+ \mathbf{{Z}_M}),
    \end{aligned}
\end{equation}
where $f_{\text{Node2Graph}}(\cdot)$ and $f_{\text{Edge2Graph}}(\cdot)$ are designated functions (see Eq.~\ref{eq:node2graph},\ref{eq:edge2graph}) that map node and edge features to graph features, respectively.  
Finally, the spatial relationships in 2D and 3D are added to the attention weights, the attention is passed through a softmax function and the predicted representation $ \mathbf{\hat{Z}_F}$ is obtained by the equation,
\begin{equation}
    \begin{aligned}
       & \mathbf{A} = \text{softmax}(\mathbf{A} + \Phi^{\text{2D}}_{\mathbf{E}} + \Phi^{\text{3D}}_{\mathbf{E}}) \in \mathbb{R}^{n\times n \times F}\\
        %& \mathbf{A} = \text{softmax}(\mathbf{A}) \\
        & \mathbf{\hat{Z}_H} = \mathbf{{Z}_H} + W_{\mathbf{H}_\text{out}}((W_V\mathbf{Z_H})\mathbf{A}).
    \end{aligned}
\end{equation}
The invariant channel can capture the inherent features of the input molecule graph, allowing for predictions of discrete 2D structures, as well as invariant atom and graph features.

\paragraph{Node2Graph \& Edge2Graph Functions}
Node2Graph $f_{\text{Node2Graph}}(\cdot)$ and Edge2Graph $f_{\text{Edge2Graph}}(\cdot)$ functions map node- and edge-level features to graph-level features, respectively.

The Node2Graph function transforms the node features $\mathbf{H} \in \mathbb{R}^{n\times F_{\text{in}}}$ by computing the mean, max, and min values for each node, then concatenating them and applying a linear transformation with weight matrix $W_{\text{Node2Graph}}$ and bias $b_{\text{Node2Graph}}$,
\begin{equation}
\label{eq:node2graph}
\begin{aligned}
    & \mathbf{H}_{\text{mean}} = \text{mean}(\mathbf{H}) \in \mathbb{R}^{1\times F_{\text{in}}} \\
    & \mathbf{H}_{\text{max}} = \text{max}(\mathbf{H})  \\
    & \mathbf{H}_{\text{min}} = \text{min}(\mathbf{H}) \\
    & \mathbf{H}_{\text{out}} = W_{\text{Node2Graph}}([\mathbf{H}_{\text{mean}}, \mathbf{H}_{\text{max}}, \mathbf{H}_{\text{min}}]) + b_{\text{Node2Graph}} \in \mathbb{R}^{1\times F_{\text{out}}}.
\end{aligned}
\end{equation}

The Edge2Graph function transforms the node features $\mathbf{E}\in \mathbb{R}^{n\times n \times F_{\text{in}}}$ by computing the mean, max, and min values for each node pair $(i, j)$, then concatenating them and applying a linear transformation with weight matrix $W_{\text{Edge2Graph}}$ and bias $b_{\text{Edge2Graph}}$,
\begin{equation}
\label{eq:edge2graph}
\begin{aligned}
    & \mathbf{E}_{\text{mean}} = \text{mean}(\mathbf{E}) \in \mathbb{R}^{1\times 1 \times F_{\text{in}}} \\
    & \mathbf{E}_{\text{max}} = \text{max}(\mathbf{E}) \\
    & \mathbf{E}_{\text{min}} = \text{min}(\mathbf{E}) \\
    & \mathbf{E}_{\text{out}} = W_{\text{Edge2Graph}}([\mathbf{E}_{\text{mean}}, \mathbf{E}_{\text{max}}, \mathbf{E}_{\text{min}}]) + b_{\text{Edge2Graph}} \mathbb{R}^{1\times 1 \times F_{\text{out}}}.
\end{aligned}
\end{equation}

\paragraph{Equivariant Channel}
The equivariant channel is an upgraded version of the one presented in \cite{tholke2022torchmd}. It is specifically engineered to extract the features of the input molecule graph $\mathbf{M}^{\text{3D}}$ that change under 3D rotations and translations. This channel is used to make predictions for atom features and coordinates by leveraging the 3D geometric structure of the molecule. It is activated when only the 3D geometric structure $\mathbf{M}^{\text{3D}}$ is provided. The velocity features ${\mathbf{v}}\in \mathbb{R}^{n\times 3 \times F}$ are initialized to $0$.

First, we calculate the distance between each pair of atoms, $d_{ij}$, and project them into a multidimensional filter for keys. The attention weights are calculated by taking the dot product of the query, key, and filter, and are modified by incorporating the 3D spatial relationship between atoms. The attention weights are then passed through a softmax function, and the cosine cutoff is applied to the weights to ensure that atoms with a distance larger than $d_\text{cut}$ do not interact.
\begin{equation}
    \begin{aligned}
       & {d}_{ij} = \|{x}_i - {x}_j\|_2 \\
       & D_{K} = \text{SiLU}(W_{\text{dist}_K}(f_{\text{RBF}_3}(d)) + b_{\text{dist}_K}) \in \mathbb{R}^{n\times n \times F}\\
       & \mathbf{A} = \frac{(W_Q\mathbf{Z_H})^T(W_K\mathbf{Z_H})\odot D_{K}}{\sqrt{F}} \\ 
       & \mathbf{A} = \text{softmax}(\mathbf{A}+\Phi^{\text{3D}}_{\mathbf{E}}) \odot f_{\cos}(d).
    \end{aligned}
\end{equation}
Here, the final attention weights can also include 2D spatial relationship by adding 2D spatial relationship term $\Phi^{\text{2D}}_{\mathbf{E}}$ to the equation, and setting $f_{\cos}(d_{ij})=1$ if an edge exists between atoms $i$ and $j$ in the 2D molecule graph $\mathbf{M}^{\text{2D}}$.

In order to incorporate interatomic distances into the features directly, we also project the distance between atoms into a multidimensional filter for values. This approach, which has been used in \cite{schutt2017schnet, tholke2022torchmd}, enables the model to not only consider interatomic distances in the attention weights, but also to incorporate this information into the features themselves.
\begin{equation}
    \begin{aligned}
        & D_{V} = \text{SiLU}(W_{\text{dist}_V}(f_{\text{RBF}_3}(d)) + b_{\text{dist}_V}) \in \mathbb{R}^{n\times n\times 3F}\\
       & \mathbf{Z}_{V} = W_{V}\mathbf{Z_H} \in \mathbb{R}^{n \times 3F} \\
       & \mathbf{Z}_{V_1},\mathbf{Z}_{V_2},\mathbf{Z}_{V_3} = \text{split}(\mathbf{Z}_V \odot D_V) \in \mathbb{R}^{n\times n\times F} \\
        & \mathbf{Z}_{O} = W_{O}(\mathbf{Z}_{V_1}\mathbf{A}) \in \mathbb{R}^{n \times 3F} \\
       & \mathbf{Z}_{O_1},\mathbf{Z}_{O_2},\mathbf{Z}_{O_3} = \text{split}(\mathbf{Z}_O) \in \mathbb{R}^{n\times F},
    \end{aligned}
\end{equation}
where the function split$(\cdot)$ divides the input into three equal-sized parts.

Then, we use a weight matrix $W_{{\mathbf{v}}}$ to project the velocity features ${\mathbf{v}}$ into three separate vectors,
\begin{equation}
    \begin{aligned}
       & \mathbf{Z}_{{\mathbf{v}}} = W_{{\mathbf{v}}}{\mathbf{v}} \in \mathbb{R}^{n\times 3 \times 3F} \\
       & \mathbf{Z}_{{{\mathbf{v}}}_1},\mathbf{Z}_{{{\mathbf{v}}}_2}, \mathbf{Z}_{{{\mathbf{v}}}_3} = \text{split}(\mathbf{Z}_{{\mathbf{v}}}) \in \mathbb{R}^{n\times 3\times F}.
    \end{aligned}
\end{equation}

Finally, new atom and velocity features, $\mathbf{\hat{Z}_H},\hat{{\mathbf{v}}}$, are calculated following the steps in \cite{tholke2022torchmd}. The atom features are updated by adding the residual of the scaled features $\mathbf{Z}_{O_1}$ and the inner product between velocity projections $\langle \mathbf{Z}_{{{\mathbf{v}}}_1}, \mathbf{Z}_{{{\mathbf{v}}}_2} \rangle$. The velocity features are updated by incorporating equivariant features using the edge directional information $d_{ij}$ and scaled vector features.
\begin{equation}
    \begin{aligned}
       & \mathbf{\hat{Z}}_{\mathbf{H}} = \mathbf{Z_H} + (\mathbf{Z}_{O_1} + \mathbf{Z}_{O_2} \odot \langle  \mathbf{Z}_{{{\mathbf{v}}}_1} , \mathbf{Z}_{{{\mathbf{v}}}_2} \rangle) \\
       & {\mathbf{w}}_i = \sum_{j \in N(i)} \mathbf{Z}_{V_{2, ij}} \odot {\mathbf{v}}_i + \mathbf{Z}_{V_{3, ij}} \odot d_{ij} \\
       & \hat{{\mathbf{v}}} =  {\mathbf{v}} + ({\mathbf{w}} + \mathbf{Z}_{O_3} \odot \mathbf{Z}_{{{\mathbf{v}}}_3}).
    \end{aligned}
\end{equation}

\noindent \textbf{Interaction Embedding} \quad
We denote the predicted atom features of the invariant channel and the equivariant channel as $\mathbf{Z}_{\mathbf{H}}^{\text{inv}}$ and $\mathbf{Z}_{\mathbf{H}}^{\text{eqv}}$, respectively. We combine these predictions by multiplying them with a weight matrix ${W}_{\text{comb}_3}$, and adding a bias term ${b}_{\text{comb}_3}$,
\begin{equation}
    \mathbf{\hat{Z}_H} = {W}_{\text{comb}_3}[\mathbf{Z}_{\mathbf{H}}^{\text{inv}}, \mathbf{Z}_{\mathbf{H}}^{\text{eqv}}] + {b}_{\text{comb}_3}.
\end{equation}
By doing so, we obtain a mixed feature that includes rich invariant representations. This mixed atom feature $\mathbf{\hat{Z}_H}$ is then fed into the next layer of the transformer channels or used as input for the final predictions.

\noindent \textbf{Output Block} \quad
The output block generates the final output by utilizing the embeddings from invariant and equivariant channels. Specifically, it takes in the atom $\hat{\mathbf{Z}}_{\mathbf{H}}$ and edge embeddings $\hat{\mathbf{Z}}_{\mathbf{E}}$, along with the velocity embedding $\hat{{\mathbf{v}}}$. Through feature extractions, the output block makes predictions for the atom features $\mathbf{H}_{\text{out}}$, edge features $\mathbf{E}_{\text{out}}$, and atom coordinates $\mathbf{X}_{\text{out}}$,
\begin{equation}
\begin{aligned}
    & \mathbf{H}_{\text{out}} = W_{{\mathbf{X}}_{\text{out}_2}}\left(\text{SiLU}(W_{{\mathbf{X}}_{\text{out}_1}}\hat{\mathbf{Z}}_{\mathbf{H}})\right) \in \mathbb{R}^{n\times d_{\text{out}}}\\
    & \mathbf{E}_{\text{out}} = W_{{\mathbf{E}}_{\text{out}_2}}\left(\text{SiLU}(W_{{\mathbf{E}}_{\text{out}_1}}\hat{\mathbf{Z}}_{\mathbf{E}})\right) \in \mathbb{R}^{n\times n \times b_{\text{out}}}\\
    & \mathbf{X}_{\text{out}} = \mathbf{X} + W_{{{\mathbf{v}}}_{\text{out}_2}}\left(\text{SiLU}(W_{{{\mathbf{v}}}_{\text{out}_1}}\hat{{\mathbf{v}}})\right) \in \mathbb{R}^{n\times 3}.
\end{aligned} 
\end{equation}

\noindent \textbf{Analysis of Memory Complexity} \quad
We compare the memory complexity of our method, MUDiff, with two existing methods: EDM~\cite{hoogeboom2022equivariant} and DiGress~\cite{vignac2022digress}. Considering atom features of size $n \times d$, atom positions of size $n \times 3$, and edge features of size $n \times n \times b$, where $n$ is the number of atoms, $d$ is the dimension of atom features, and $b$ is the dimension of edge features, EDM's memory complexity is $O(nd + 3n)$, and DiGress's is $O(nd + n^2b)$. MUDiff has a higher memory complexity of $O(nd + 3n + n^2b)$, but offers a more comprehensive molecular representation by including both 2D and 3D information for topological and geometric structures. For more on scalability issues and potential solutions, see Sec~\ref{sec:discussion}.

\section{Experiments}
\label{sec:experiments}
To evaluate our MUDiff framework, we conduct experiments on the QM9 dataset \cite{ramakrishnan2014quantum}, which contains 130k small molecules with up to 9 heavy atoms (29 atoms including hydrogens) and their associated molecular properties and structures.
We use the train/val/test splits from \cite{anderson2019cormorant}, consisting of 100K/18K/13K samples respectively, for evaluation. This protocol follows the method used in previous works such as \cite{satorras2021n, hoogeboom2022equivariant}. 

\subsection{Molecule Generation with Limited 3D Data}
\label{sec:molecule.generation.limit.3d}
\begin{table}[ht!]
\centering
  \caption{Negative log-likelihood, atom stability, and molecule stability are evaluated with standard deviation across 3 runs on QM9, using 10K samples from the model. 30K+70K means model trained with limited 3D data.}
    \footnotesize
    \begin{tabular}{lccc}
    \hline
    \hline
    \textbf{Method} & NLL & Atom Stable(\%) & Mol Stable(\%)\\
    \hline
    \textbf{EDM} & -$110.7 \pm 1.5$ & $ {98.7} \pm 0.1 $ & $82.0 \pm 0.4 $\\
    \textbf{DiGress} & - & $98.1 \pm 0.3 $  & $79.8 \pm 5.6 $\\
    \textbf{MUDiff} & ${-135.5} \pm 2.1 $  &   $ {98.8} \pm 0.2 $    &  $ {89.9} \pm 1.1 $\\
    \textbf{MUDiff} (30K+70K) & ${-120.6} \pm 3.4 $  &   $ {98.2} \pm 0.7 $    &  $ {84.5} \pm 2.5 $\\
    \hline
    \hline
    \end{tabular}%
  \label{tab:limited.3D.data}%
\end{table} 

In this section, we introduce a new molecule generation task that incorporates limited 3D data, as many real-world datasets lack complete 3D structures. To accomplish this, we randomly split the 100K training molecules into two sets: 30K with both 2D and 3D structures and 70K with only 2D structures. We train the model on the 30K samples using both the invariant and equivariant channels and validate on 18K samples until NLL converges.
We then fine-tune the trained model on the remaining 70K molecules with only 2D structures and validate/test on 18K/13K samples. Notably, this training framework with limited 3D data is only possible with MUDiff for now, because of the flexible two-channel design.

\paragraph{Results}
The results of the molecule generation task with limited 3D data are summarized in Table~\ref{tab:limited.3D.data}. MUDiff achieved competitive results in generating stable molecules, even with limited 3D information in the training set, compared to the baselines. These results suggest that MUDiff has the ability to leverage sufficient 2D structures to infer 3D geometry. This finding may motivate further research on the co-generation of 2D and 3D structures for molecules.

\subsection{Molecule Generation}
\label{sec:molecule.generation}
\begin{table}[ht!]
\centering
  \caption{Negative log-likelihood, atom stability, and molecule stability are evaluated with standard deviation across 3 runs on QM9, using 10K samples (with hydrogen) from the model. The results surpass those of previous models, as reported in \cite{hoogeboom2022equivariant, vignac2022digress}. }
    \footnotesize
    \begin{tabular}{lccc}
    \hline
    \hline
    \textbf{Method} & NLL & Atom Stable(\%) & Mol Stable(\%)\\
    \hline
    \textbf{Data} & - & $99.0$   & $95.2 $\\
    \hline
    \textbf{ENF} & -$59.7$ & $85.0 $   & $4.9 $\\
    \textbf{GSchnet} & - & $95.7$  & $68.1 $\\
    \textbf{GDM} & -$92.5$ & $97.6 $ & $71.6 $\\
    \textbf{EDM} & -$110.7 \pm 1.5$ & $ {98.7} \pm 0.1 $ & $82.0 \pm 0.4 $\\
    \textbf{DiGress} & - & $98.1 \pm 0.3 $  & $79.8 \pm 5.6 $\\
    \textbf{MDM} & - & $98.6 $  & $\textbf{91.9} $\\
    \textbf{GeoLDM} & - & $\textbf{98.9} \pm 0.1 $  & $89.4 \pm 0.5 $\\
    \textbf{MUDiff} (ours) & $\textbf{-135.5} \pm 2.1 $  &   $ \textbf{98.8} \pm 0.2 $    &  $ \textbf{89.9} \pm 1.1 $\\
    \hline
    \hline
    \end{tabular}%
  \label{tab:cond.generation.stable}%
\end{table}

\begin{table}[ht!]
  \centering
  \caption{Validity and uniqueness of over 10K molecules are shown with standard deviation across 3 runs, surpassing the results of previous models according to studies by \cite{hoogeboom2022equivariant, vignac2022digress}. }
  \footnotesize
    \begin{tabular}{lccc}
    \hline
    \hline
    \textbf{Method} & w/ Hydrogen & Valid (\%) & Unique (\%) \\
    \hline
    \textbf{Data}  & & $99.3$  & $100.0$ \\
    \hline
    \textbf{GraphVAE} & & $55.7$  & $42.3$ \\
    \textbf{Set2GraphVAE} &  & $59.9 \pm 1.7$  & $56.2 \pm 1.4$ \\
    \textbf{EDM}   & & $97.5 \pm 0.2$  & $94.3 \pm 0.2$ \\
    \textbf{DiGress} &  & $ \textbf{99.0} \pm 0.1$    & $96.2 \pm 0.1$ \\
    \textbf{MUDiff} (ours) &   &   $ \textbf{98.9} \pm 0.4$  &  $ \textbf{99.3} \pm 0.3$ \\
    \hline
    \textbf{Data}  & \checkmark & $97.8$  & $100.0$ \\
    \hline
    \textbf{ENF}   & \checkmark & $40.2$  & $39.4$ \\
    \textbf{GSchnet} & \checkmark & $85.5$  & $80.3$ \\
    \textbf{GDM}   & \checkmark & $90.4$  & $89.5$ \\
    \textbf{EDM}   & \checkmark & $91.9 \pm 0.5$  & $90.7 \pm 0.6$ \\
    \textbf{DiGress} & \checkmark & $\textbf{95.4} \pm 1.1$  & $97.6 \pm 0.4$ \\
    \textbf{GeoLDM} & \checkmark & ${93.8} \pm 0.4$  & $92.7 \pm 0.5$ \\
    \textbf{MUDiff} (ours) & \checkmark &  $\textbf{95.3} \pm 1.5$    &  $\textbf{99.1} \pm 0.5$ \\
    \hline
    \hline
    \end{tabular}%
  \label{tab:cond.generation.valid}%
\end{table} 

We compare the performance of our MUDiff  model with popular generative models, including GraphVAE \cite{kipf2016variational}, GSchenet \cite{gebauer2019symmetry}, Set2GraphVAE \cite{vignac2021top}, ENF \cite{satorras2021n}, GDM \cite{hoogeboom2022equivariant}, EDM \cite{hoogeboom2022equivariant}, DiGress \cite{vignac2022digress}, MDM \cite{huang2023mdm}, and GeoLDM \cite{xu2023geometric}. 
The results of the baseline models can be found in the studies by \cite{hoogeboom2022equivariant} and \cite{vignac2022digress}.

As outlined in \cite{satorras2021n}, we evaluate the atom and molecule stability of the generated compounds by measuring the proportion of atoms that have the correct valency for atom stability, and the proportion of generated molecules in which all atoms are stable for molecule stability. Additionally, we also measure the validity and uniqueness using the RDKit tool, as used in \cite{hoogeboom2022equivariant}. 

We would like to emphasize that the dataset statistics are not ideal, with atom stability at 99\%, molecule stability at 95.2\%, and molecule validity at 99.3\% in the original data. 
These statistics are not perfect, pointing to potential imperfections in the dataset. The imperfections of the dataset have also been acknowledged in \cite{satorras2021n, hoogeboom2022equivariant, vignac2022digress}.

Table~\ref{tab:cond.generation.stable} presents the evaluated results of atom and molecule stability for molecules generated by MUDiff and the baseline models. The reported average results and standard deviations are over 3 runs, using 10,000 samples from each model. The table shows that MUDiff can generate molecules that are significantly more stable than the baseline models in terms of negative log-likelihood and molecule stability and matches the performance of SOTA model with respect to atom stability.

Table~\ref{tab:cond.generation.valid} presents the results of the validity and uniqueness of the generated samples. It should be noted that, following the guidelines outlined in \cite{vignac2021top}, novelty is not reported in this table. The table shows that MUDiff generates a significantly higher rate of unique molecules than the baselines and matches the rate of valid molecules of SOTA models.

\subsection{Conditional Generation}
We follow the experimental setting in \cite{hoogeboom2022equivariant} to train the conditional MUDiff model on the QM9 dataset, conditioning the generation on properties $\alpha$, $\epsilon_{\text{homo}}$, $\epsilon_{\text{lumo}}$, $\Delta\epsilon$, $\mu$, and $C_v$, respectively. 

Additionally, we follow \cite{hoogeboom2022equivariant} to use a property classifier $\psi_c$ proposed in \cite{satorras2021n2}. We split QM9 training data into two halves, \textit{A} and \textit{B}, each containing 50K samples, and use \textit{A} subset to train $\psi_c$ and \textit{B} subset for training the conditional MUDiff. Then, $\psi_c$ is used to evaluate the generated samples of conditional MUDiff. Also, we follow \cite{hoogeboom2022equivariant} to report the loss of $\psi_c$ on \textit{B} as a lower bound (L-bound). 
The smaller the gap between MUDiff and L-bound, the more similar MUDiff generated samples to \textit{B}.

\begin{table}[ht!]
  \centering
  \caption{Mean Absolute Error for the prediction of molecular properties by the property classifier $\psi_c$ on a QM9 subset (L-bound), MUDiff samples and three baselines.}
  \resizebox{9.0cm}{!}{
  \footnotesize
    \begin{tabular}{lcccccc}
    \hline
    \hline
    \textbf{Property} & $\alpha$ & $\epsilon_{\text{homo}}$ & $\epsilon_{\text{lumo}}$ & $\Delta\epsilon$ & $\mu$ & $C_v$ \\
    \textbf{Units} & $a^3$ & $meV$ & $meV$ & $meV$ & $D$   & $meV$ \\
    \hline
    \textbf{U-bound} & 9.01  & 645   & 1457  & 1470  & 1.616 & 6.857 \\
    \textbf{\#Atoms} & 3.86  & 426   & 813   & 866   & 1.053 & 1.971 \\
    \textbf{EDM} & 2.76  & 356   & \textbf{584}   & 655   & 1.111 & 1.101 \\
    \textbf{MUDiff} (ours) &   \textbf{2.15}    &  \textbf{315}     &  597     &  \textbf{604}      &  \textbf{1.033}     &  \textbf{0.978} \\
    \textbf{L-bound} & 0.10   & 39    & 36    & 64    & 0.043 & 0.040 \\
    \hline
    \hline
    \end{tabular}%
    }
  \label{tab:cond.generation.MUDiff}%
\end{table}

We evaluate the performance against the baselines used in EDM \cite{hoogeboom2022equivariant}. In addition to L-bound, they also use two other baselines: U-bound and \#Atoms. The U-bound is obtained by shuffling the properties of molecules in the \textit{B} subset and evaluating $\psi_c$ on it. The \#Atoms baseline predicts the molecular properties in the \textit{B} subset by only using the number of atoms in the molecule.

\paragraph{Results}
Table~\ref{tab:cond.generation.MUDiff} showcases the results of conditional generation on the QM9 dataset. Evidently, conditional MUDiff generates samples that more closely resemble the molecules in subset B compared to the baselines, indicating that MUDiff outperforms baselines in generating molecules with desired properties and capturing structural similarities.

\subsection{Property Prediction}
\label{sec:qm9.property}
\begin{table}[ht!]
  \centering
  \caption{Results on all QM9 targets and comparison to previous literature. Scores are reported as mean absolute errors (MAE) with standard deviation. Results of different models are averaged over three runs.}
  \resizebox{\textwidth}{!}{
  \footnotesize
    \begin{tabular}{llcccccccc}
    \hline
    \hline
    \multicolumn{1}{c}{Target} & \multicolumn{1}{c}{Unit} & SchNet & EGNN  & PhysNet & DimeNet++ & Cormorant & PaiNN & ET    & MUformer \\
    \hline
    $\mu$ & $D$   & 0.033 & 0.029 & 0.0529 & 0.041 & 0.0297 & 0.012 & {0.011} & 0.013 $\pm$ 0.003\\
    $\alpha$ & $a^3_0$ & 0.235 & 0.071 & 0.0615 & {0.0435} & 0.085 & 0.045 & 0.059 & 0.041 $\pm$ 0.008 \\
    $\epsilon_{\text{HOMO}}$ & $meV$   & 41    & 29    & 32.9  & 24.6  & 34    & 27.6  & 20.3  & 24.7 $\pm$ 1.2 \\
    $\epsilon_{\text{LUMO}}$ & $meV$   & 34    & 25    & 24.7  & 19.5  & 38    & 20.4  & 17.5  & 	20.2 $\pm$ 0.8 \\
    $\Delta \epsilon$ & $meV$   & 63    & 48    & 42.5  & 32.6  & 61    & 45.7  & 36.1  & 30.3 $\pm$ 1.7 \\
    \textlangle$R^2$\textrangle     & $a^2_0$ & 0.073 & 0.106 & 0.765 & 0.331 & 0.961 & 0.066 & {0.033} & 0.117 $\pm$ 0.012 \\
    $ZPVE$  & $meV$   & 1.7   & 1.55  & 1.39  & {1.21}  & 2.027 & 1.28  & 1.84  & 1.76  $\pm$ 0.08 \\
    $\mathbf{U}_0$   & $meV$   & 14    & 11    & 8.15  & 6.32  & 22    & 5.85  & 6.15  & 6.11 $\pm$ 0.12 \\
    $\mathbf{U}$      & $meV$   & 19    & 12    & 8.34  & 6.28  & 21    & 5.83  & 6.38  & 6.04 $\pm$ 0.19 \\
    $\mathbf{H}$     & $meV$   & 14    & 12    & 8.42  & 6.53  & 21    & {5.98}  & 6.16  &  6.77 $\pm$ 0.07 \\
    $\mathbf{G}$      & $meV$   & 14    & 12    & 9.4   & 7.56  & 20    & 7.35  & 7.62  &  7.24 $\pm$ 0.08 \\
    $\mathbf{C}_v$   & $\frac{cal}{mol\ K}$ & 0.033 & 0.031 & 0.028 & {0.023} & 0.026 & 0.024 & 0.026 & 0.023 $\pm$ 0.002 \\
    \hline
    \hline
    \end{tabular}%
    }
  \label{tab:qm9.property}%
\end{table}

Additionally, we conduct a comprehensive comparison of our MUformer with several baselines on the QM9 dataset for property prediction, including SchNet \cite{schutt2017schnet}, EGNN \cite{satorras2021n}, PhysNet \cite{unke2019physnet}, DimeNet \cite{beani2021directional}, Cormorant \cite{anderson2019cormorant}, PaiNN \cite{schutt2021equivariant}, and ET \cite{tholke2022torchmd}. The dataset consists of molecules with various properties, and we estimate 12 chemical properties per molecule following \cite{satorras2021n}. The results, which can be found in Table~\ref{tab:qm9.property}, are obtained by averaging over three runs. We use a learning rate of 1e-3, 1e-4, 5e-4 and 1e-5, weight decay of 5e-5, 128 hidden dimensions, and 6 layers for our MUformer model. The results of the baseline models are taken from \cite{tholke2022torchmd}.

\subsection{Ablation Study}
\label{sec:ablation.study}
To improve efficiency, we conduct ablation studies using smaller models than those described previously. Specifically, these models consist of 4 layers, 64 embedding dimensions for atom- and edge-level features, 32 embedding dimensions for graph-level features, 8 attention heads, 100 feedforward dimensions for atom- and edge-level features, 50 feedforward dimensions for graph-level features, 0.3 dropout rate for all latent embeddings and attention values, SiLU activation function, 1e-4 learning rate, and Adam optimizer with 5e-5 weight decay. Additionally, we use a diffusion process with 500 time steps over the course of 3000 training epochs.

Additionally, if 2D structure encoding is not used, the edge type $\mathbf{E}$ is simply embedded by a weight matrix $W$ as
\begin{equation}
\begin{aligned}
\mathbf{Z_E} = W\mathbf{E} \in \mathbb{R}^{n\times n \times fe_{\text{in}}}.
\end{aligned}
\end{equation}

To reduce computation costs, we sample 1,000 molecules from each MUDiff ablation model instead of the 10,000 samples typically generated by the diffusion model during sampling. This allows us to evaluate the performance of the various MUDiff models while minimizing the computational resources required. 

Model variations for ablation study include: (1) no extra technique, (2) 2D structure encoding, (3) 2D structure encoding + 2D neighborhood encoding, (4) 2D structure encoding + 2D neighborhood encoding + 2D spatial attention bias + edge features as attention bias, (5) 2D structure encoding + 3D neighborhood encoding, (6) 2D structure encoding + 3D neighborhood encoding + 3D spatial attention bias, (7) 2D structure encoding + graph encoding + 2D neighborhood encoding + 3D neighborhood encoding, (8) 2D structure encoding + graph encoding + 2D neighborhood encoding + 3D neighborhood encoding + 3D spatial attention bias, (9) 2D structure encoding + graph encoding + 2D neighborhood encoding + 3D neighborhood encoding + 2D spatial attention bias + 3D spatial attention bias, (10) 2D structure encoding + graph encoding + 2D neighborhood encoding + 3D neighborhood encoding + 2D spatial attention bias + 3D spatial attention bias + edge features as attention bias + graph features as attention bias, (11) 2D structure encoding + graph encoding + 2D neighborhood encoding + 3D neighborhood encoding + 2D spatial attention bias + 3D spatial attention bias + edge features as attention bias + graph features as attention bias + 2D discrete graph structures into 3D geometric structures.

\begin{table}[ht!]
  \centering
  \caption{The performance of molecule stability and validity across different ablation models, as shown by the average value with standard deviation of 1K generated molecules (with hydrogen) from each model. Model variations include using $(\romannumeral 1)$ 2D structure encoding, $(\romannumeral 2)$ graph encoding, $(\romannumeral 3)$ 2D neighborhood encoding, $(\romannumeral 4)$ 3D neighborhood encoding, $(\romannumeral 5)$ 2D spatial attention bias, $(\romannumeral 6)$ 3D spatial attention bias, $(\romannumeral 7)$ edge features as attention bias, $(\romannumeral 8)$ graph features as attention bias, and $(\romannumeral 9)$ 2D discrete graph structures into 3D geometric structures.}
   \resizebox{10.0cm}{!}{
  \footnotesize
    \begin{tabular}{llll|llll|lcc}
    \hline
    \hline
    \multicolumn{4}{c|}{\textbf{Encoding}} & \multicolumn{4}{c|}{\textbf{Bias}} &       &       &  \\
    \hline
    $\romannumeral 1$     &  $\romannumeral 2$    &  $\romannumeral 3$    &  $\romannumeral 4$   &  $\romannumeral 5$   &  $\romannumeral 6$    &  $\romannumeral 7$    &  $\romannumeral 8$   &  $\romannumeral 9$   & Mol Stable (\%) & Valid (\%) \\
    \hline
          &       &       &       &       &       &       &       &       & $82.5 \pm 6.3$     & $90.7 \pm 2.2$ \\
    \checkmark &       &       &       &       &       &       &       &       & $82.7 \pm 1.9$     & $91.3 \pm 1.7$ \\
    \checkmark &       & \checkmark &       &       &       &       &       &       & $82.9 \pm 1.5$     & $91.6 \pm 2.1$ \\
    \checkmark &       & \checkmark &       & \checkmark &       & \checkmark &       &       & $84.7 \pm 1.2$     & $92.9 \pm 1.3$ \\
    \checkmark &       &       & \checkmark &       &       &       &       &       & $85.3 \pm 1.0$     & $93.6 \pm 1.9$ \\
    \checkmark &       &       & \checkmark &       & \checkmark &       &       &       & $86.1 \pm 1.7$     & $93.5 \pm 1.3$ \\
    \checkmark & \checkmark & \checkmark & \checkmark &  &       &  &       &       &  $87.2 \pm 1.6$     & $93.4 \pm 1.0$ \\
    \checkmark & \checkmark & \checkmark & \checkmark &  &  \checkmark     &  &       &       &  $87.1 \pm 1.5$     & $94.8 \pm 1.4$ \\
    \checkmark & \checkmark & \checkmark & \checkmark &  \checkmark   & \checkmark &       &       &       & $88.3 \pm 1.5$     & $94.5 \pm 1.3$ \\
    \checkmark & \checkmark & \checkmark & \checkmark & \checkmark & \checkmark & \checkmark & \checkmark &       & $88.4 \pm 1.6$     & $95.1 \pm 1.3$ \\
    \checkmark & \checkmark & \checkmark & \checkmark & \checkmark & \checkmark & \checkmark & \checkmark & \checkmark & $89.3 \pm 1.3$     & $95.3 \pm 1.2$ \\
    \hline
    \hline
    \end{tabular}%
    }
  \label{tab:ablation}%
\end{table}

\paragraph{Results}
The results in Table~\ref{tab:ablation} demonstrate the effectiveness of each component in MUformer. We can see that each component proposed in Sec~\ref{sec:graph.transformer} plays an indispensable role in learning all aspects of the molecule in a unified manner and the diffusion model with all components combined generates the most stable and valid molecules.

\section{Discussion}
\label{sec:discussion}
\subsection{2D and 3D Attention Biases}
\label{sec:discussion.attention.bias}
In Sec~\ref{sec:muformer.attention.bias}, we introduce two attention biases employed in MUformer to compute attentions, one for 2D molecular structure and another for 3D geometric structure. For the 2D spatial attention bias, we use the Shortest Path Distance (SPD) encoding to capture the distance between atoms $i$ and $j$ in the 2D molecular graph, providing vital structural relationship information. Additionally, we incorporate edge-type information along the shortest path, which reflects the connections between atoms $i$ and $j$, further enhancing the model's understanding of the 2D molecular graph. In the case of the 3D spatial attention bias, we calculate the bias using the Euclidean distance and an exponential radial basis function, which encodes the 3D spatial relationships between atom pairs in the 3D molecular structure. This approach helps the model account for the geometric arrangement of atoms in the molecule.
These biases are used to improve the model's representation of molecular structures by capturing essential structural features and relationships in both 2D and 3D spaces.

\paragraph{2D Attention Bias} The 2D spatial attention bias, which consists of the Shortest Path Distance (SPD) encoding and edge-type information, helps the model to capture the topological relationships between atoms in the 2D molecular graph. By incorporating this bias into the attention computation, the model can better understand the structural relationships and chemical properties of the molecule, leading to improved prediction and generation tasks.

\paragraph{3D Attention Bias} The 3D spatial attention bias encodes the 3D spatial relationships between atom pairs in the molecular structure using Euclidean distance and an exponential radial basis function. By including this information as a bias in the attention computation, the model can account for the geometric arrangement of atoms in the molecule. This allows the MUformer to recognize spatial patterns and interactions that are not apparent in the 2D graph representation alone.

 The attention biases introduced in Sec~\ref{sec:muformer.attention.bias} for both 2D and 3D molecular structures enhance the MUformer ability to capture essential structural features and relationships in both spaces. By incorporating these biases in the attention computation, the model can prioritize and focus on the most relevant connections between atoms, resulting in a more accurate and comprehensive molecular representation.

\subsection{Interdependence of Generation of 2D and 3D Structures}
\label{sec:discussion.2d3d.relation}
Generating both 2D and 3D structures can provide a more complete representation of the molecular structure, as it captures both the planar arrangement of atoms in the molecule and their spatial arrangement in 3D space. By combining the generation of 2D and 3D structures, we can provide a more comprehensive understanding of the molecular structure, which could be useful for a variety of applications in drug discovery, materials science, and other fields. 
We discuss their effects from three perspectives, (1) conformational space, (2) stereochemistry, and (3) constraint.
\paragraph{Impact of 2D Generation on 3D Generation} 
(1) From the conformational perspective, the 2D structure can provide information about the planar arrangement of atoms in the molecule, which can be used to guide the generation of the 3D structure. By considering the 2D structure, the generation algorithm can explore different conformations and orientations of the molecule in a 3D space, which can help generate a more accurate 3D structure. 
(2) From the stereochemistry perspective, the 2D structure can provide information about the stereochemistry and chirality of the molecule, which can be used to guide the generation of the 3D structure. For example, if the 2D structure indicates that two atoms are connected by a double bond, the generation algorithm can infer the correct geometry for the double bond in the 3D space based on the stereochemistry of the molecule.
(3) From the constraints' perspective, the 2D structure can provide constraints on the geometry of the molecule, which can be used to guide the generation of the 3D structure. For example, if the 2D structure indicates that two atoms are connected by a ring, the generation algorithm can use this information to constrain the geometry of the ring in a 3D space.

\paragraph{Impact of 3D Generation on 2D Generation}
(1) From the conformational perspective, the generation of 3D structures can provide information about the conformational space of the molecule, which can be used to refine the 2D structure. For example, if the 3D structure indicates that two atoms are in close proximity, the generation algorithm can adjust the 2D structure to reflect this.
(2) From the stereochemistry perspective, the 3D structure can provide additional information about the stereochemistry and chirality of the molecule, which can be used to refine the 2D structure. For example, if the 3D structure indicates that two atoms have a specific orientation in 3D space, the generation algorithm can adjust the 2D structure to reflect this.
(3) From the constraint perspective, the 3D structure can provide additional constraints on the geometry of the molecule, which can be used to refine the 2D structure. For example, if the 3D structure indicates that two atoms are connected by a ring, the generation algorithm can adjust the 2D structure to ensure that the ring is planar.

\subsection{Limitation}
Generating molecular structures using graph models presents a challenge in representing the structures in a way that can be processed by the model. One commonly used approach is to represent the structure as a dense adjacency tensor, where each element in the tensor corresponds to the presence or absence of a bond between two atoms. However, generating these dense tensors can be computationally expensive, particularly for larger or more complex molecular structures.

\paragraph{Approach \& Limitation} In our model, which includes a transformer and diffusion model, we generate 2D molecular structures and edge features by creating a dense adjacency tensor of size $n\times n\times b$, where n represents the number of atoms in a molecule and b represents the number of edge types. In the case of molecule scenarios, our model predicts a dense tensor of size $n\times n\times 4$, which includes four bond types (no-bond, single bond, double bond, and triple bond). One advantage of using dense tensors is that they can capture more detailed information about the molecular structure, including the precise location and type of each bond. This level of detail can be particularly crucial in cases where subtle differences in the structure can have a significant impact on the molecule's properties or behavior. However, a significant disadvantage of dense tensors is the computational cost required to generate and process them, which can limit the scalability and efficiency of the model.

\paragraph{Sparse Tensor Solution}
Sparse tensors can be a useful approach to reducing the computational cost of generating molecular structures. Sparse tensors are similar to dense tensors, but they only store the non-zero elements of the tensor, rather than the entire tensor.
To use sparse tensors for the problem of generating molecular structures, one approach is to represent the adjacency matrix as a sparse tensor. Rather than creating a dense tensor of size $n\times n\times b$, we can instead create a sparse tensor that only stores the non-zero elements of the adjacency matrix. 
To make predictions for sparse tensors, we can use specialized algorithms designed for sparse tensors. One common approach is to use sparse matrix multiplication algorithms, which can efficiently perform matrix operations on sparse tensors. These algorithms are designed to take advantage of the sparsity of the tensor to perform the required computations more efficiently.

\paragraph{Multi-resolution Representation Solution}
Multi-resolution representation of molecules is an approach to represent molecular structures at multiple levels of detail, allowing for more efficient processing while still capturing important structural information. Essentially, the idea is to represent the molecule in different ways, with each representation capturing different levels of detail.
One common approach is to use a hierarchical representation, where the molecular structure is represented as a series of nested substructures. For example, the molecule could be represented as a set of atoms, each associated with a set of neighboring atoms. This set of neighboring atoms could then be recursively expanded to include their own neighboring atoms, resulting in a hierarchical representation of the molecule that captures different levels of detail at different scales.
Another approach is to use a multi-scale representation, where the molecular structure is represented at different levels of detail using different feature maps or descriptors. For example, the molecule could be represented as a set of atoms, each associated with a descriptor that captures its physical properties.
By using multi-resolution representations, we can reduce the computational cost of generating molecular structures while still capturing important structural information.

\subsection{Overview}
In this chapter, we introduce our MUDiff model \cite{hua2023mudiff}, a transformer-based framework for learning and generating a complete molecule representation using a novel GNN architecture, MUformer, with group equivariance. A distinct advantage of MUDiff and MUformer is the ability to function independently when either 2D or 3D structural information is missing. Our model remains effective in generating and learning a complete representation of molecules, even when the input data lacks the 2D graph structure or the 3D geometric structure. This design is particularly useful because datasets sometimes have missing 3D coordinates and geometry, resulted from limitations in experimental techniques or the unavailability of suitable computational resources \cite{mobley2017predicting}. For instance, the conformational analysis of molecules, which involves determining the 3D structures that result from the rotation of single bonds, is of critical importance in understanding molecular interactions \cite{leach2011molecular, copeland2013evaluation}. However, obtaining accurate conformational data can be computationally demanding \cite{gaulton2012chembl}. In such cases, MUDiff and MUformer can provide a robust and versatile solution for handling incomplete molecular datasets, ensuring comprehensive molecular representations even when faced with such challenges.
\chapter{Conclusion and Future Work}

In this work, we study the capabilities of Graph Neural Networks (GNNs),  exploring their fundamental designs and applications to address real-world challenges. In particular, introduce a GNN featuring a high-order pooling function, capturing complex interactions between nodes in graph-structured data. designed to capture intricate interactions between nodes in graph-structured data. This pooling function enhances the GNN's performance on both node- and graph-level tasks. Furthermore, we propose a GNN as the backbone model for a molecular graph generative model. The GNN backbone is adept at learning both the invariant and the equivariant aspects of molecules. The molecular graph generative model leverages these features to simultaneously learn and generate atom-bond molecular graphs with atom positions. Our models are rigorously evaluated through experiments and comparisons with existing methods, demonstrating their effectiveness in tackling real-world problems with diverse datasets.

\section{Conclusion and Contribution}
\label{sec:conclusion}
We have presented two novel contributions to the field of graph neural networks: a high-order pooling function for graph classification, and a molecular graph generative model for molecule design. We have shown that both of these contributions can improve the performance and expressiveness of graph neural networks, and address some of the real-world challenges and applications that involve graph-structured data.

Our first contribution is a high-order pooling function for graph neural networks, based on symmetric tensor decomposition. We have designed a new layer, called the CP layer, that can model non-linear high-order multiplicative interactions among node representations, and enhance the representation power of graph neural networks. We have theoretically proved that the CP layer can compute any permutation-invariant multilinear polynomial, and is universally strictly more expressive than sum and mean pooling. We have also empirically demonstrated that the CP layer can improve the accuracy and robustness of graph neural networks on various graph classification tasks, and outperform existing pooling methods.

Our second contribution is a molecular graph generative model for molecule design, based on a graph transformer backbone and a generative diffusion model. We have proposed a new framework, called MUDiff, that can co-generate molecular graphs and structures, and learn both the invariant and the equivariant aspects of molecules. We have leveraged the graph transformer to learn the hidden representations of molecular graphs and structures, and the generative diffusion model to generate realistic and diverse molecules. We have also experimentally shown that MUDiff can generate valid and novel molecules that satisfy various design objectives and constraints, and outperform previous methods on various molecule generation tasks.

We believe that our contributions can advance the state-of-the-art of graph neural networks, and provide new insights and solutions for real-world problems that involve graph-structured data. We hope that our work can inspire more research on the design and application of graph neural networks, and open up new possibilities and directions for future work.

\section{Limitation and Future Work}
\label{sec:future.work}
\subsection{Pooling Function}
We have presented a novel approach for high-order pooling for graph neural networks, based on symmetric tensor decomposition. Our approach, tGNN, can model non-linear high-order multiplicative interactions among node representations, and achieve state-of-the-art performance on various graph classification tasks. However, there are still some limitations and challenges that can be addressed in future work.

One limitation of our approach is that it requires a fixed number of node representations to be aggregated by the CP layer, which may not be suitable for graphs with varying sizes or structures. Therefore, a possible direction for future work is to explore how to adapt the CP layer to handle variable-sized inputs, such as using attention mechanisms, dynamic programming, or graph coarsening techniques.

Another challenge of our approach is that it relies on a predefined set of basis tensors to parameterize the CP layer, which may limit the expressiveness and flexibility of the model. Moreover, the choice of the basis tensors may affect the computational efficiency and stability of the model. Therefore, another direction for future work is to develop more adaptive and efficient methods to learn the basis tensors from the data, such as using neural networks, sparse coding, or tensor factorization methods.

A third challenge of our approach is that it does not explicitly incorporate edge features or edge weights into the aggregation function, which may ignore some important information or relationships in the graph. Therefore, a third direction for future work is to explore how to extend the CP layer to account for edge features or edge weights.

\subsection{Generative Model}
We have presented a novel framework for co-generating molecular structures and geometry, based on a graph transformer backbone and a generative diffusion model. Our framework, MUDiff, can generate realistic and diverse molecules that satisfy various design objectives and constraints. However, there are still some limitations and challenges that need to be addressed.

One limitation of our framework is that it requires a large and high-quality dataset of molecular graphs and structures to train the model. However, such datasets are not always available or easy to obtain, especially for specific domains or applications. Therefore, a possible direction for future work is to explore how to leverage other sources of data, such as synthetic or simulated data, to augment the training data and improve the model performance.

Another challenge of our framework is that it relies on a fixed and predefined set of atom types and bond types, which may limit the diversity and novelty of the generated molecules. Moreover, the model may not be able to handle new or rare atom types or bond types that are not seen in the training data. Therefore, another direction for future work is to develop more flexible and adaptive methods to represent and generate molecular graphs and structures, such as using continuous or latent variables to encode the atom types and bond types, or using graph neural networks to learn the atom types and bond types from the data.

\newpage

\bibliography{references}
\bibliographystyle{acm}

\end{document}